\definecolor{DarkRed}{rgb}{0.368,0.097,0.078}
\definecolor{DarkBlue}{rgb}{0.2,0.2,0.6}
\newcommand{\vc}{d_\mathrm{VC}}
\newcommand{\lit}{d_\mathrm{LD}}
\declaretheoremstyle[
	    spaceabove=\topsep, 
	    spacebelow=\topsep, 
	    headfont=\normalfont\bfseries,
	    bodyfont=\normalfont\itshape,
	    notefont=\normalfont\bfseries,
	    notebraces={(}{)},
	    postheadspace=0.5em, 
	    headpunct={},
	    postfoothook=\noindent\ignorespaces
    ]{theorem}
\declaretheorem[style=theorem,numberwithin=section]{theorem}
\declaretheoremstyle[
	    spaceabove=\topsep, 
	    spacebelow=\topsep, 
	    headfont=\normalfont\bfseries,
	    bodyfont=\normalfont,
	    notefont=\normalfont\bfseries,
	    notebraces={(}{)},
	    postheadspace=0.5em, 
	    headpunct={},
	    postfoothook=\noindent\ignorespaces
    ]{definition}
\declaretheoremstyle[
        spaceabove=\topsep, 
        spacebelow=\topsep, 
        headfont=\normalfont\bfseries,
        bodyfont=\normalfont,
        notefont=\normalfont\bfseries,
        notebraces={}{},
        postheadspace=0.5em, 
        qed=$\blacksquare$, 
        headpunct={},
        postfoothook=\noindent\ignorespaces
    ]{proofstyle}
\declaretheorem[style=proofstyle,numbered=no,name=Proof]{proof}
\declaretheoremstyle[
        spaceabove=\topsep, 
        spacebelow=\topsep, 
        headfont=\normalfont\bfseries,
        bodyfont=\normalfont,
        notefont=\normalfont\bfseries,
        notebraces={}{},
        postheadspace=0.5em, 
        qed=$\blacksquare$, 
        headpunct={},
        postfoothook=\noindent\ignorespaces
    ]{proofstyle}
\declaretheorem[style=proofstyle,numbered=no,name=Proof sketch]{proofsketch}
\declaretheorem[style=theorem,sibling=theorem,name=Lemma]{lemma}
\declaretheorem[style=theorem,sibling=theorem,name=Proposition]{proposition}
\declaretheorem[style=theorem,numbered=no,name=Theorem]{theorem*}
\declaretheorem[style=theorem,numbered=no,name=Lemma]{lemma*}
\declaretheorem[style=theorem,numbered=no,name=Corollary]{corollary*}
\declaretheorem[style=theorem,numbered=no,name=Proposition]{proposition*}
\declaretheorem[style=theorem,numbered=no,name=Claim]{claim*}
\declaretheorem[style=theorem,numbered=no,name=Fact]{fact*}
\declaretheorem[style=theorem,numbered=no,name=Observation]{observation*}
\declaretheorem[style=theorem,numbered=no,name=Conjecture]{conjecture*}
\declaretheorem[style=definition,sibling=theorem,name=Definition]{definition}
\declaretheorem[style=definition,numbered=no,name=Definition]{definition*}
\declaretheorem[style=definition,numbered=no,name=Remark]{remark*}
\declaretheorem[style=definition,numbered=no,name=Example]{example*}
\declaretheorem[style=definition,numbered=no,name=Question]{question*}
\DeclareMathAlphabet{\mathbfsf}{\encodingdefault}{\sfdefault}{bx}{n}
\let\Pr\relax
\DeclareMathOperator{\Pr}{\mathbb{P}}
\newcommand{\lr}[1]{\mathopen{}\left(#1\right)}
\newcommand{\lrbra}[1]{\mathopen{}\left[#1\right]}
\newcommand{\lrset}[1]{\mathopen{}\left\{#1\right\}}
\newcommand{\beq}{\begin{eqnarray*}}
\newcommand{\eeq}{\end{eqnarray*}}
\newcommand{\beqn}{\begin{eqnarray}}
\newcommand{\eeqn}{\end{eqnarray}}
\newcommand{\argmin}{\mathop{\mathrm{argmin}}}
\newcommand{\ar}[1]{\textbf{\color{orange}[Arvind: #1]}}
\newcommand{\idan}[1]{\textbf{\color{purple}[Idan: #1]}}
\renewcommand{\todo}[1]{\textbf{\color{violet}[TODO: #1]}}
\renewcommand{\ar}[1]{}
\renewcommand{\idan}[1]{}
\renewcommand{\todo}[1]{}
\newcommand{\roundBrackets}[1]{
	\mathopen{}\left(#1\right)\mathclose{}
}
\newcommand{\customFunctionRound}[2]{
	#1\roundBrackets{#2}
}
\newcommand{\mistakes}[1]{\customFunctionRound{M}{#1}}
\title{Tradeoffs between Mistakes and ERM Oracle Calls in Online and Transductive Online Learning}
\author{
    Idan Attias \thanks{Institute for Data, Econometrics, Algorithms, and Learning (IDEAL), hosted by UIC and TTIC; \texttt{idanattias88@gmail.com}.} 
    \and  Steve Hanneke\thanks{Department of Computer Science, Purdue University; \texttt{steve.hanneke@gmail.com.}}
    \and  Arvind Ramaswami\thanks{Department of Computer Science, Purdue University; \texttt{ramaswa4@purdue.edu.}}
}
\date{}
\begin{document}
\maketitle

\begin{abstract}
We study online and transductive online learning in settings where the learner can interact with the concept class only via Empirical Risk Minimization (ERM) or weak consistency oracles on arbitrary subsets of the instance domain. This contrasts with standard online models, where the learner has full knowledge of the concept class. The ERM oracle returns a hypothesis that minimizes the loss on a given subset, while the weak consistency oracle returns only a binary signal indicating whether the subset is realizable by a concept in the class. The learner's performance is measured by the number of mistakes and oracle calls.

In the standard online setting with ERM access, we establish tight lower bounds in both the realizable and agnostic cases: $\Omega(2^{d_\mathrm{LD}})$ mistakes and $\Omega(\sqrt{T 2^{d_\mathrm{LD}}})$ regret, respectively, where $T$ is the number of timesteps and $d_\mathrm{LD}$ is the Littlestone dimension of the class. We further show how existing results for online learning with ERM access translate to the setting with a weak consistency oracle, at the cost of increasing the number of oracle calls by $O(T)$.

We then consider the transductive online model, where the instance sequence is known in advance but labels are revealed sequentially. For general Littlestone classes, we show that the optimal mistake bound in the realizable case and in the agnostic case can be achieved using $O(T^{d_\mathrm{VC}+1})$ weak consistency oracle calls, where $d_\mathrm{VC}$ is the VC dimension of the class. 
On the negative side, we show that limiting the learner to $\Omega(T)$ weak consistency queries is necessary for transductive online learnability, and that restricting the learner to $\Omega(T)$ ERM queries is necessary to avoid exponential dependence on the Littlestone dimension.
Finally, for special families of concept classes, we demonstrate how to reduce the number of oracle calls using randomized algorithms while maintaining similar mistake bounds. In particular, for Thresholds on an unknown ordering, $O(\log T)$ ERM queries suffice, and for $k$-Intervals, $O(T^3 2^{2k})$ weak consistency queries suffice.
\end{abstract}


\section{Introduction}

Online learning is a fundamental sequential prediction framework in which a learner receives a stream of instances and must predict their labels one at a time, observing the true label after each prediction \cite{cesa2006prediction,shalev2012online,mohri2012foundations,shalev2014understanding}. The goal is to minimize the number of mistakes compared to the best concept in a fixed concept class. A closely related framework is transductive online learning, where the entire sequence of instances is revealed in advance, removing instance uncertainty and isolating the challenge of predicting the labels \cite{ben1997online,kakade2005batch,cesa2013efficient,hanneke2023trichotomy,hanneke2024multiclass}. This intermediate model retains the sequential nature of online learning while allowing the learner to prepare for the specific instances it will encounter.

Both the online and transductive online settings have been studied extensively. The seminal work of Littlestone \cite{littlestone1988learning} characterizes the optimal mistake bound attainable in the online binary classification setting in the realizable case, that is, when there exists a concept in the class that labels all instances correctly. This bound is expressed in terms of a complexity measure of the concept class $\mathcal{C}$ known as the Littlestone dimension (denoted by $\lit$). In particular, the optimal mistake bound is $\lit(\mathcal{C})$ and is achieved by the Standard Optimal Algorithm (SOA).
In the agnostic setting, where labels may be arbitrary and the learner aims to minimize regret (the difference between the learner's cumulative mistakes and that of the best concept in the class), Ben-David et al. \cite{ben2009agnostic} and Alon et al. \cite{alon2021adversarial} showed that the Littlestone dimension also characterizes the optimal regret. 
These results also hold trivially in the transductive online setting, since the learner has more information than in the standard online setting.
However, since the instances are known in advance, a sequence-dependent bound of $O\left( \vc(\mathcal{C}) \log T \right)$ on the number of mistakes can be obtained \cite{kakade2005batch} by counting the number of shattered sets and applying the Halving algorithm, where $\vc$ is the VC dimension \cite{vapnik1971uniform,vapnik1974theory} of the class and $T$ is the length of the instance stream. It is currently unknown whether a strict improvement can be achieved in the transductive setting for any concept class with finite Littlestone dimension.

While the SOA achieves the optimal mistake bound and has been used as an algorithmic primitive in many settings, implementing it efficiently poses significant computational challenges. The algorithm involves computing the Littlestone dimension multiple times during its online interaction, but even approximating the Littlestone dimension within any constant factor is computationally intractable \cite{frances1998optimal,manurangsi2017inapproximability,manurangsi2022improved}. Additionally, Hasrati and Ben-David \cite{hasrati2023computable} showed that there exist recursively enumerable representable classes with finite Littlestone dimension that admit no computable SOA.
As a result, Assos et al. \cite{assos2023online} and Kozachinskiy and Steifer \cite{kozachinskiy2024simple} proposed a more realistic computational model for online learning based on oracle access to Empirical Risk Minimization (ERM). Given a labeled dataset, the ERM oracle returns a concept in the class that minimizes the error on the dataset.
This oracle performs a simple and well-studied task, known to be sufficient in stochastic batch settings such as PAC learning. The central question, then, is: what is the mistake bound (or regret) when the online learning algorithm is restricted to using only the ERM oracle?
A mistake bound that is exponential in the Littlestone dimension is attainable, as shown by \cite{assos2023online,kozachinskiy2024simple}. At the same time, an exponential mistake bound is also unavoidable \cite{kozachinskiy2024simple} when the algorithm has access only to a ``restricted" ERM oracle, one that can query only instance-label pairs generated by the adversary.

The goal of this paper is to study the power and limitations of various oracles in the online and transductive online learning settings. Crucially, unlike in standard settings where the learner has full access to the concept class, here the learner interacts with the class only through an oracle.
The performance of the learning algorithm is measured by two parameters: the number of mistakes (or regret in the agnostic setting) and the number of oracle queries made, with the goal of achieving efficient oracle complexity, where only polynomially many oracle calls are made.

We begin by considering a general ERM oracle that, given any finite subset of the instance domain along with any labeling, returns an empirical risk minimizer. This contrasts with the more restricted ERM oracle used in prior work \cite{assos2023online,kozachinskiy2024simple}, which allows queries only on instances and labels provided by the adversary during the interaction.
For this more powerful oracle, we investigate two main questions:
\begin{itemize}[]
    \item \emph{In online learning, can we circumvent the exponential dependence on the Littlestone dimension in the mistake bound using the general ERM oracle?}
    \item \emph{In the transductive online setting, where the learner has access to the full sequence of instances in advance, can this additional information lead to improved learning guarantees?}
\end{itemize}

Additionally, we consider a weaker oracle, the weak consistency oracle, which, given a labeled dataset, returns only a binary signal indicating whether the dataset is realizable by some concept in the class. This oracle can be viewed as solving the decision problem of realizability, in contrast to the ERM oracle, which solves the corresponding optimization problem.
Surprisingly, Daskalakis and Golowich \cite{daskalakis2024efficient} recently showed that access to such an oracle can be used to construct a randomized PAC learning algorithm for binary and multiclass classification, regression, and learning partial concepts, with only a mild blowup in sample complexity and a polynomial number of oracle calls.
In this paper, we explore the role of this oracle in online learning:
\begin{itemize}[]
    \item \emph{Can we construct algorithms for online and transductive online learning that make only a polynomial number of calls to the weak consistency oracle?}
\end{itemize}

For a comprehensive literature review, see \Cref{app:additional-related-work}.
\subsection{Our Contribution}

We first study online learning with access to an ERM oracle that can query any subset of the domain, in contrast to the ``restricted ERM" studied by \cite{assos2023online,kozachinskiy2024simple}, which can query only pairs of instances and labels generated by the adversary throughout the interaction. We then consider learning with a weak consistency oracle, recently introduced by \cite{daskalakis2024efficient} in the context of PAC learning.

\paragraph{Results for Online Learning (\Cref{sec:online}).}
See a summary of the results in \Cref{table:results-online}. 
\begin{itemize}[]
    \item We first prove a lower bound of $\Omega(2^{\lit})$ mistakes in the realizable setting (where $\lit$ is the Littlestone dimension), which holds for any learning algorithm that makes a finite number of ERM oracle calls. While \cite{kozachinskiy2024simple} proved a stronger lower bound of $\Omega(3^{\lit})$, their result applies to the restricted ERM. In contrast, our lower bound holds for the general ERM and introduces several new challenges. We also prove a lower bound of $\Omega(\sqrt{T 2^{\lit}})$ on the regret in the agnostic setting,  which is the first lower bound of its kind.
    These two lower bounds match the known upper bounds of \cite{assos2023online,kozachinskiy2024simple} in their exponential dependence on the Littlestone dimension (differing only by a constant factor in the exponent).
    This demonstrates that even with access to the general ERM, the mistake bound grows exponentially with the Littlestone dimension, unlike in standard online learning, where the learner has full access to the concept class and the mistake bound is $\lit$. 
    \item We show that any deterministic online learning algorithm using the restricted ERM can be simulated using the weak consistency oracle, at the cost of increasing the number of oracle calls by $O(T)$. Consequently, the mistake bounds in existing results \cite{assos2023online,kozachinskiy2024simple} can be achieved with an $O(T)$ blow-up when using the weak consistency oracle.
    \item We show a negative result for online learning with partial concepts (concepts that may be undefined on certain parts of the domain). Specifically, we construct a family of partial concept classes with $\lit = 1$ for which any algorithm must make $\Omega(T)$ mistakes. This stands in sharp contrast to the offline PAC setting, where learning partial concepts with a weak consistency oracle is feasible \cite{daskalakis2024efficient}.
\end{itemize}
\begin{table}[h]
\begin{center}
\scriptsize
\setlength{\arrayrulewidth}{0.2mm} 
\setlength{\tabcolsep}{2.5pt} 
\def\arraystretch{1.3} 
\begin{tabular}{|c|c|c|c|c|c|c|}
\hline
\multicolumn{6}{c}{\textsc{Online Learning: Oracle Complexity–Regret Tradeoffs for Littlestone Classes}} 
\\
\hline
\hline
\textbf{Deterministic/Randomized} 
&\multirow{2}{*}{\textbf{Realizability}} 
& \multirow{2}{*}{\textbf{Oracle Type}}
& \multirow{2}{*}{\textbf{Oracle Calls}}
&\multirow{2}{*}{\textbf{Regret/Mistakes}}
& \multirow{2}{*}{\textbf{Reference}} \\
\textbf{Algorithm} 
& 
& 
& 
&
&\\
\hline 
Deterministic 
& \multirow{4}{*}{Realizable} 
& Restricted ERM
& $2^{O(\lit)}$
& $2^{O(\lit)}$
& \cite{assos2023online,kozachinskiy2024simple}\\
Deterministic 
&  
&  Weak Consistency
& $2^{O(\lit)}T$
& $2^{O(\lit)}$
& Theorem~\ref{thm:online_learning_weak_consistency}\\
Deterministic 
&  
& Restricted ERM
& Finite
& $\Omega(3^{\lit})$
& \cite{kozachinskiy2024simple}\\
Randomized
&  
& Agnostic ERM
& Finite
& $\Omega(2^{\lit})$
& Theorem~\ref{thm:erm-online-realizable-lowerbound}\\
\hline
Deterministic 
& \multirow{2}{*}{Agnostic} 
& Agnostic ERM
& $T^{2^{O(\lit)}}$
& $\tilde{O}\lr{\sqrt{T2^{O(\lit)}}}$
& \cite{assos2023online}\\
Randomized 
&  
& Agnostic ERM
& Finite
& $\Omega(\sqrt{T 2^{\lit}})$
& Theorem~\ref{thm:erm_lower_bound_agnostic}\\
\hline
\end{tabular}
\vspace{0.2cm}
\caption{
The learning model and oracles are defined in \Cref{sec:learning-models}. The weak consistency oracle outputs whether a labeled sequence is realizable. 
We define three ERM oracles, ordered from weakest to strongest: the restricted ERM oracle, the ERM oracle, and the agnostic ERM oracle. 
$T$ is the number of timesteps, $\lit$ is the Littlestone dimension and $\vc$ is the VC dimension of the underlying concept class. 
}
\label{table:results-online}
\end{center}
\end{table}
Given the negative results in the online setting, particularly the exponential growth in the number of mistakes, we consider the transductive online learning model, in which the instance sequence is known at the beginning of the interaction, but the labels are revealed sequentially by the adversary.

\paragraph{Results for Transductive Online Learning (\Cref{sec:transductive-online}).} See a summary of the results in \Cref{table:results-transductive}. 
We start with results for general concept classes. 
\begin{itemize}[]
    \item We show how to discover all labelings of the concept class on a given set of instances using $O(T^{\vc+1})$ weak consistency oracle calls. With this preprocessing step, we can recover the optimal mistake bound and regret, which are currently known to be upper bounded by $O\left(\min\lrset{\lit, \vc \log T}\right)$ and $\tilde{O}\left(\sqrt{T \min\lrset{\lit, \vc \log T}}\right)$, respectively. This shows that the exponential dependence on the Littlestone dimension can be circumvented in the transductive setting using only a polynomial number of weak consistency oracle calls.
    \item On the other hand, we establish the following two lower bounds. We show that limiting the learner to $\Omega(T)$ weak consistency queries is necessary for transductive online learnability, and that restricting the learner to $\Omega(T)$ ERM queries is necessary to avoid exponential dependence on the Littlestone dimension.
\end{itemize}
We proceed to studying special families of concept classes, showing improved results, which highlight that randomization seems to be crucial for the improvements. 

\begin{itemize}[]
    \item Thresholds: We prove various upper bounds for deterministic and randomized algorithms using both oracles.
    The main result is a randomized algorithm with the ERM oracle that achieves an $O(\log T)$ mistake bound and $O(\log T)$ expected queries on the class of thresholds with unknown ordering. This represents an exponential improvement over deterministic algorithms.
    \item $k$-Intervals: We show there exists a randomized algorithm with the weak consistency oracle that achieves an optimal mistake bound (upper bounded by $O(k \log T)$) with $O(T^32^{2k})$ expected number of queries.
    \item $d$-Hamming Balls: For the ERM oracle with $d$-Hamming balls, we show a mistake bound of $2d$ using a single query. We also show a mistake bound of $d$ with $2^{d+1}$ queries. 
\end{itemize}

In \Cref{sec:discussion} we pose the main open problems for future work. 
*

\begin{table}[h]
\begin{center}
\scriptsize
\setlength{\arrayrulewidth}{0.2mm} 
\setlength{\tabcolsep}{2.5pt} 
\def\arraystretch{1.3} 
\begin{tabular}{|c|c|c|c|c|c|c|}
\hline
\multicolumn{7}{c}{\textsc{Transductive Online Learning: Oracle Complexity–Regret Tradeoffs}} 
\\
\hline
\hline
\multirow{2}{*}{\textbf{Concept Class}}
& \textbf{Det. / Rand.} 
&\multirow{2}{*}{\textbf{Realizability}} 
& \multirow{2}{*}{\textbf{Oracle Type}}
& \multirow{2}{*}{\textbf{Oracle Calls}}
&\multirow{2}{*}{\textbf{Regret/Mistakes}}
& \multirow{2}{*}{\textbf{Reference}} \\
& \textbf{Algorithm} 
& 
& 
& 
&
&\\
\hline 
\multirow{4}{*}{Littlestone Classes}
& Deterministic 
& Realizable 
&  ERM
& $2^{O(\lit)}$
& $2^{O(\lit)}$
& \cite{assos2023online,kozachinskiy2024simple}\\
& Deterministic 
& Realizable/Agnostic
&  Weak Consistency
& $O(T^{\vc+1})$
& Optimal
& Theorem~\ref{thm:transductive-upper-bound-general}\\
& Deterministic 
& Realizable 
& ERM
& $O(T)$
& $\Omega(2^{\lit})$
& Theorem~\ref{thm:transductive-lower-bound-general}\\
& Randomized 
& Realizable 
&  Weak Consistency
& $O(T)$
& $\Omega(T)$
& Theorem~\ref{thm:wc-t-lower-bound}\\
\hline
\multirow{4}{*}{Thresholds}
& Deterministic 
& Realizable 
&  ERM
& $O(T)$
& $O(\log T)$
& \multirow{4}{*}{\Cref{thm:thresholds_upper_bounds}} \\
& Randomized 
& Realizable 
&  ERM
& $O(\log T)$
& $O(\log T)$
&  \\
& Deterministic 
& Realizable 
&  Weak Consistency
& $O(T\log T)$
& $O(\log T)$
&  \\
& Randomized 
& Realizable 
&  Weak Consistency
& $O(T)$
& $O(\log T)$
&  \\
\hline
$k$-Intervals
& Randomized
& Realizable
& Weak Consistency
& $O(T^32^{2k})$
& $O(k \log(T))$
& Theorem~\ref{thm:transductive-k-intervals}\\
\hline
\multirow{2}{*}{$d$-Hamming Balls}
& Deterministic 
& Realizable 
&  ERM
& $O(1)$
& $O(d)$
& \multirow{2}{*}{\Cref{thm:transductive-upper-bound-hamming}}
\\
& Deterministic 
& Realizable 
&  Weak Consistency
& $O(T)$
& $O(d)$
& 
\\
\hline
\end{tabular}
\vspace{0.2cm}
\caption{
The lower bound for the weak consistency oracle from Theorem~\ref{thm:wc-t-lower-bound} applies to all three families of concept classes: thresholds, $k$-intervals, and $d$-Hamming balls, and is stronger than the bound for arbitrary Littlestone classes. We use the term “optimal” to indicate that the mistake bounds match those from standard transductive learning (where the concept class is known). In particular, the best known bounds depend linearly on the Littlestone or VC dimensions.
}\label{table:results-transductive}
\end{center}
\end{table}
\section{The Learning Models: Online and Transductive Online Learning with Oracle Access}\label{sec:learning-models}
We start with a definition of the online learning model where the interaction of the learner with the concept class is done only through an oracle. This is in contrast to the standard online model, where the learner knows the concept class in advance. 
The learning protocol is a sequential game between a learner and an adversary. 
    Let $\mathcal{C} \subset \lrset{0,1}^\mathcal{X}$ be a concept class, where $\mathcal{X}$ is the instance space and $\lrset{0,1}$ is the label space. Let $\mathcal{F}$ be a family of concept classes, known to the learner (e.g., classes with finite Littlestone dimension $\lit$), such that $\mathcal{C}$ is an unknown concept class from $\mathcal{F}$ chosen by the adversary. Suppose the learner only has oracle access to $\mathcal{C}$ via an oracle $\mathcal{O}$.  
    The sequential game proceeds for $T$ rounds, as follows. For each $t \in [T]$:
    \begin{enumerate}
        \item The adversary chooses $(x_t,y_t)\in \mathcal{X}\times\{0,1\}$.
        \item The learner observes $x_t$, picks a distribution $\Delta_t\in\Delta(\lrset{0,1})$, and predicts $\hat{y}_t \sim \Delta_t$.
        \item The adversary reveals $y_t \in \{0,1\}$ and the learner suffers a loss $\mathbb{I}\lrbra{\hat{y}_t\neq y_t}$.
    \end{enumerate}
    We define $x_{1:T} = (x_1, x_2, \ldots, x_T) \in \mathcal{X}^T$ as the sequence of instances and $y_{1:T} = (y_1, y_2, \ldots, y_T) \in \{0,1\}^T$ as the sequence of corresponding labels chosen by the adversary.
    In the realizable setting, the adversary is subject to the constraint that the sequence $(x_{1:T},y_{1:T})$ is realizable by $\mathcal{C}$, meaning that there exists $c \in \mathcal{C}$ satisfying $c(x_i) = y_i$ for all $i \in [T]$. The performance of a learning algorithm $\mathcal{A}$ is measured by two metrics, the number of mistakes (or regret, in the agnostic setting) and the number of oracle queries. The mistakes are defined as follows:
    \[
        \mistakes{\mathcal{A},\mathcal{O}(\mathcal{C}),x_{1:T}, c} = \sum_{t=1}^{T} \mathbb{I}\lrbra{\hat{y}_t \neq c(x_t)}.
    \]
    The worst case number of mistakes of a learning algorithm $\mathcal{A}$ is defined as 

    \[
    M(\mathcal{A},T) = \sup_{\mathcal{C} \in \mathcal{F}} \sup_{c \in \mathcal{C}} \sup_{x_{1:T} \in \mathcal{X}^T} \mathbb{E}\lrbra{\mistakes{\mathcal{A},\mathcal{O}(\mathcal{C}),x_{1:T}, c}}.
    \]
    Similarly, $Q(\mathcal{A},\mathcal{O}(\mathcal{C}),x_{1:T}, c)$ is the total number of queries, defined as
    \[
    Q(\mathcal{A},T) = \sup_{\mathcal{C} \in \mathcal{F}} \sup_{c \in \mathcal{C}} \sup_{x_{1:T} \in \mathcal{X}^T} \mathbb{E}\lrbra{Q(\mathcal{A},\mathcal{O}(\mathcal{C}),x_{1:T}, c)}.
    \]
    Here, the expectation is over the randomness of the algorithm. 

    We assume that the adversary is oblivious with respect to the choice of concept class $\mathcal{C}$, target concept $c$, and instance sequence $x$, meaning that these are fixed in advance and do not depend on the learner's predictions or queries. Note that for deterministic algorithms, an oblivious adversary is as powerful as an adaptive adversary.    

    In the agnostic case, the sequence $(x_{1:T}, y_{1:T})$ is no longer constrained to be realizable by $\mathcal{C}$, and the measure of performance is the regret, defined as
    \[
    \mathrm{Reg}(\mathcal{A}, \mathcal{O}(\mathcal{C}), (x_{1:T},y_{1:T})) = \sum_{t=1}^T \mathbb{I}[\hat{y_t} \neq y_t] - \inf_{c \in \mathcal{C}} \sum_{t=1}^T \mathbb{I}[c(x_t) \neq y_t]
    \]
    Define the worst case regret of the algorithm $\mathcal{A}$ as
    \[
    \mathrm{Reg}(\mathcal{A},T) = \sup_{\mathcal{C} \in \mathcal{F}} \sup_{(x,y) \in (\mathcal{X} \times \lrset{0,1})^T} \mathbb{E}\lrbra{\mathrm{Reg}(\mathcal{A}, \mathcal{O}(\mathcal{C}), (x,y))},
    \]
and $Q(\mathcal{A},T)$ in the agnostic case is defined as the worst-case total number of queries, over $\mathcal{C} \in \mathcal{F}$ and $(x,y) \in (\mathcal{X} \times \lrset{0,1})^T$.   

Crucially, the learner knows the family of concept classes $\mathcal{F}$ but not  $\mathcal{C}\in \mathcal{F}$. The learner's only access to the concept class $\mathcal{C}$ is through oracles defined below. We start with a few variants of the ERM oracle.
\begin{definition}[Variants of ERM Oracle]\label{def:erm-oracle}
We define the oracles in order of expressive strength. Let $\mathcal{C}$ be the concept class we have access to.
\begin{itemize}[]
    \item \underline{``Restricted ERM"}: At round $t$, given a subsequence $S$ of the pairs generated by the adversary, $(x_{1:t-1}, y_{1:t-1})$, if $S$ is realizable by $\mathcal{C}$, then the ERM oracle returns some $c \in \mathcal{C}$ consistent with $S$.
    Otherwise, it returns “not realizable.” This oracle was used for the upper bounds in \cite{assos2023online, kozachinskiy2024simple} and for the lower bound in \cite{kozachinskiy2024simple}. Here, it is used primarily for comparison to our results, as well as in the upper bound in \Cref{thm:online_learning_weak_consistency}.
    \item  \underline{ERM}:  Given any subset $S \subset \mathcal{X}\times\lrset{0,1}$, if $S$ is realizable by $\mathcal{C}$, then the ERM oracle returns some $c \in \mathcal{C}$ consistent with $S$. Otherwise, it returns “not realizable”. This oracle can be thought of as a "realizable" ERM or consistency oracle. For simplicity, we refer to it as ERM. This is the main variant used in this paper.
    \item \underline{``Agnostic" ERM}: Given any subset $S \subset \mathcal{X}\times\lrset{0,1}$, returns a a concept with the minimal error: $\argmin_{c\in \mathcal{C}}\sum_{(x,y)\in S}\mathbb{I}\lrbra{c(x)\neq y}$.
    This oracle is used for the lower bounds in \Cref{thm:erm-online-realizable-lowerbound,thm:erm_lower_bound_agnostic}.
\end{itemize}
\end{definition}
Additionally, we consider the following weaker oracle, studied by \cite{daskalakis2024efficient} in the context of PAC learning.
\begin{definition}[Weak Consistency Oracle]\label{def:weak-cons-oracle}
Given a concept class $\mathcal{C} \subseteq \{0,1\}^{\mathcal{X}}$ and any sequence $S = (x_1, y_1), ..., (x_n, y_n)$, the Weak Consistency Oracle returns ``realizable" if there exists some $c \in \mathcal{C}$ consistent with $S$. Otherwise, it returns ``not realizable".
\end{definition}

 We also study the transductive online setting with oracle access, where the only difference is that the adversary first selects a sequence $x = (x_1, x_2, \ldots, x_T) \in \mathcal{X}^T$, which is revealed to the learner in advance \footnote{A common version of the transductive online learning setting considers $x$ as a set of $T$ points rather than a sequence (e.g., \cite{syrgkanis2016efficient}). We focus on the version where $x$ is a sequence, though our results could extend to the set-based formulation.}. Then, the sequential interaction begins, with the adversary revealing the labels $y_t$ one by one. Formally, for each $t \in [T]:$
 \begin{enumerate}
 \item The adversary chooses $y_t \in \lrset{0,1}$.
 \item The learner picks a distribution $\Delta_t \in \Delta(\lrset{0,1})$, and predicts $\hat{y}_t \sim \Delta_t$.
 \item The adversary reveals $y_t$ and suffers loss $\mathbb{I}[y_t \neq \hat{y}_t]$.
 \end{enumerate}
 The notions of mistakes and regret are defined similarly and are formalized in Appendix~\ref{app:preliminaries}. The standard transductive online learning model, in which the learner has access to the concept class in advance, was studied by \cite{ben1997online,hanneke2023trichotomy}.

\section{Online Learning}\label{sec:online}
In this section, we consider the problem of online learning with oracle access. First, we present lower bounds on the number of mistakes in the realizable setting and a lower bound on the regret in the agnostic setting with access to an ERM oracle. For the realizable case, a lower bound of $\Omega(3^{\lit})$ was proved by \cite{kozachinskiy2024simple}, but only for an ERM restricted to querying instances and labels generated by the adversary throughout the interaction. Here, we make no assumptions about the ERM (our result holds for the strongest “agnostic” ERM variant, see \Cref{def:erm-oracle}), which introduces several challenges, and we show that the exponential dependence on $\lit$ is unavoidable.
For the agnostic case, this is the first lower bound of its kind. It matches the upper bound of \cite{assos2023online} up to a constant factor in the exponential dependence on the Littlestone dimension, and up to a $\log(T)$ factor.

\begin{restatable}[Lower Bound for Online Learning with ``Agnostic" ERM Oracle]{theorem}{lowerboundonlineweakrealizable}
\label{thm:erm-online-realizable-lowerbound}
Let $\mathcal{F}$ be the family of classes with Littlestone dimension $\lit$. Then, any randomized algorithm that makes a finite number of queries to the ERM oracle incurs $\Omega(2^{\lit})$ expected mistakes.
\end{restatable}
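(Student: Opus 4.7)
The plan is to combine Yao's minimax principle with a recursive hard-class construction and an adversarial tie-breaking policy for the ERM oracle. By Yao, it suffices to design a distribution over triples $(\mathcal{C}\in\mathcal{F},\,c^*\in\mathcal{C},\,x_{1:T})$ such that every deterministic algorithm---paired with a worst-case oracle policy---incurs $\Omega(2^{\lit})$ expected mistakes. Since the theorem only requires the algorithm to make finitely many queries, no query-budget accounting is needed: the entire task is to show that the oracle can be made uninformative while the adversary's random target stays hidden.

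I would construct the hard class $\mathcal{C}_d$ (with $\lit = d$) recursively. The base case $d=1$ is a two-concept class disagreeing on a single instance. For the inductive step, I glue two disjoint copies $\mathcal{C}_d^{(L)}, \mathcal{C}_d^{(R)}$ of $\mathcal{C}_{d-1}$ on disjoint sub-domains $X_L, X_R$ together with a single ``splitter'' instance $s_d$ whose label is $0$ for $L$-concepts and $1$ for $R$-concepts. Extensions of each copy to the opposite sub-domain are designed so that ``wrong-copy'' concepts disagree with the target on many instances, forcing the learner to pay for its uncertainty about which copy is active. A mistake-tree argument then yields $\lit(\mathcal{C}_d) = d$, and $|\mathcal{C}_d| = 2^d$.

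The adversary draws $c^*\in\mathcal{C}_d$ uniformly, presents instances from $X_L\cup X_R$ in a random order, and presents $s_d$ only at the very end. The key ``indistinguishability lemma'' I would prove says that before $s_d$ appears, no ERM query gives the learner useful information about the active copy: for every labeled subset $S\subseteq (X_L\cup X_R)\times\{0,1\}$, there exists an involution between $\mathcal{C}_d^{(L)}$ and $\mathcal{C}_d^{(R)}$ that preserves empirical loss on $S$, so the (agnostic) ERM oracle always has a minimizer in the non-target copy and, under adversarial tie-breaking, returns one. Hence the learner's predictions on $X_L\cup X_R$ are independent of the active copy, and combining this with the inductive hypothesis applied to whichever copy is active gives $M(d) \ge 2 M(d-1) = 2^d$.

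The main obstacle will be realizing this indistinguishability involution for \emph{every} labeled subset $S$ the learner might query, while simultaneously preserving $\lit(\mathcal{C}_d) = d$ and ensuring that the learner's ``wrong-copy'' predictions genuinely cost $\Omega(2^{d-1})$ mistakes inside the uninformed copy. This is precisely what separates the argument from the restricted-ERM lower bound of \cite{kozachinskiy2024simple}: there the oracle cannot be queried on unseen instances, so the two copies need only be made distinguishable \emph{after} play has forced the mistakes; here the learner may probe arbitrary labeled subsets, so the symmetry has to be baked into the combinatorics of $\mathcal{C}_d$ itself, which constrains the default-label conventions and the recursion and is the delicate technical core of the proof.
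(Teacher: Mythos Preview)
Your plan has a genuine gap that cannot be closed along the lines you sketch. The problem is that your hard distribution is supported on a \emph{single} class $\mathcal{C}_d$: you randomize $c^*$ and the presentation order, but not the class. Under Yao's principle the deterministic learner is chosen after seeing the adversary's distribution; if that distribution always uses $\mathcal{C}=\mathcal{C}_d$, the learner may simply hardcode SOA for $\mathcal{C}_d$---a legitimate algorithm that never calls the oracle---and incur at most $\lit(\mathcal{C}_d)=d$ mistakes on every realization. This already defeats the $\Omega(2^d)$ target. No involution-based symmetry inside $\mathcal{C}_d$ can rescue the argument, because such symmetry only limits what the ERM oracle can reveal, whereas a learner who already knows $\mathcal{C}_d$ has no need for the oracle at all. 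Your indistinguishability lemma also covers only queries $S\subseteq (X_L\cup X_R)\times\{0,1\}$, while the general ERM oracle may be probed on the splitter $s_d$ too; but this is secondary to the obstruction above.

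The paper avoids this by making the \emph{class itself} random over a continuous parameter. It takes $\mathcal{X}=[0,1]^{T-1}$ with $T=2^{\lit}$, draws $z\in[0,1]^{T-1}$ uniformly, and lets $z$ carve $\mathcal{X}$ into nested cells $C_1,\ldots,C_T$, where $C_i$ is the set of points matching $z$ in the first $i-1$ coordinates but not the $i$th; the concept class $\mathcal{C}_{z,b}$ is then a threshold class over these cells under an ordering determined by a random $b\in\{0,1\}^T$. Because the future cells $C_{t+1},\ldots,C_T$ sit on affine slices determined by as-yet-unrevealed random reals, any \emph{finite} collection of query points lies almost surely in $C_1\cup\cdots\cup C_t$; the adversary can then return a bland minimizer (a constant concept equal to the query's majority label) that says nothing about $b_t$, forcing a $1/2$ mistake probability at each step. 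The continuous randomness is the crux, not a technicality: it is precisely what prevents any deterministic learner from discovering enough of $\mathcal{C}$ to simulate SOA with finitely many queries, and it has no combinatorial analogue on a finite domain.
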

\begin{proofsketch}
    We construct a threshold function over $T = 2^{\lit}$ points embedded in $[0,1]^{T-1}$, partitioning the space into $T$ nested hyperrectangles using uniform random values $z_1,...,z_{T-1}$. The first point $x_1$ corresponds to the entire hypercube $[0,1]^{T-1}$ except for a specific hyperplane, while subsequent points $x_i$ correspond to increasingly smaller nested hyperrectangles, each defined by matching more coordinates with the random values $z_j$. 
     
     The concept class consists of random threshold functions where all concepts agree within each hyperrectangle equivalence class. When the adversary presents point $x_t$, the learner cannot query points from future equivalence classes with non-zero probability, as these classes are defined by randomly chosen values $z_t,...,z_{T-1}$ that won't be hit by finite queries. Thus, the ``agnostic" ERM oracle provides labels for the current hyperrectangle but gives no information about nested hyperrectangles corresponding to future points. This geometric structure forces the learner to make predictions about points in nested regions without prior information, resulting in a mistake with probability $1/2$ at each step. This yields $\Omega(T/2) = \Omega(2^{\lit})$ expected mistakes.
\end{proofsketch}
\begin{restatable}[Lower Bound for Agnostic Online Learning with ``Agnostic" ERM Oracle]{theorem}{lowerboundonlineagnostic}\label{thm:erm_lower_bound_agnostic}
Let $\mathcal{F}$ be the family of classes with Littlestone dimension $\lit$. Then, any randomized algorithm that makes a finite number of queries to the ERM oracle incurs $\Omega(\sqrt{T2^{\lit}})$ expected regret.
\end{restatable}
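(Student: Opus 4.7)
The plan is to reduce the agnostic regret lower bound to the realizable mistake lower bound of Theorem~\ref{thm:erm-online-realizable-lowerbound} via Yao's principle and a block-structured construction. I will exhibit a distribution over adversary instances against which any deterministic ERM-based learner suffers expected regret at least $\Omega(\sqrt{T \cdot 2^{\lit}})$; the theorem then follows from Yao's minimax principle.

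Set $L = \Theta(2^{\lit})$ (chosen so that a finite threshold class of size $L+1$ has Littlestone dimension exactly $\lit$) and $B = \lfloor T/L \rfloor$, and partition the $T$ rounds into $B$ blocks of length $L$. In each block $b \in [B]$, the adversary independently resamples the construction underlying Theorem~\ref{thm:erm-online-realizable-lowerbound}: fresh uniform coordinates $z^{(b)}_1, \ldots, z^{(b)}_{L-1} \in [0,1]$ defining nested hyperrectangle equivalence classes in a block-specific coordinate subspace, together with a fresh uniform threshold $j^*_b \in \{0, 1, \ldots, L\}$. Block $b$'s $L$ points are the corresponding hyperrectangle representatives, labeled realizably by the concept $c_{j^*_b}$. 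The global concept class $\mathcal{C} = \{c_j : j \in \{0, \ldots, L\}\}$ shares a single threshold across all blocks: on a point lying in block $b$'s $i$-th hyperrectangle, $c_j$ outputs $\mathbb{I}[i > j]$. Since the trace of $\mathcal{C}$ on any sequence is a threshold class of size at most $L+1$, its Littlestone dimension is $\lit$, so $\mathcal{C} \in \mathcal{F}$.

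Next, I would apply Theorem~\ref{thm:erm-online-realizable-lowerbound} block-by-block. The key observation is that, conditional on the full transcript of blocks $1, \ldots, b-1$, the hidden parameters $(z^{(b)}, j^*_b)$ of block $b$ still follow the prior: any finite set of ERM queries the learner has issued misses the continuous freshly-sampled $z^{(b)}_i$'s with probability one, and evaluating an ERM-returned concept $c_{j'}$ at a ``miss'' query point yields the uninformative default $\mathbb{I}[j'=0]$. Hence Theorem~\ref{thm:erm-online-realizable-lowerbound} applies conditionally, and the learner makes $\Omega(L)$ expected mistakes in block $b$, so its total expected loss is at least $cT$ for some constant $c > 1/4$ (using the per-step mistake probability $1/2$ furnished by that theorem).

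For the best-in-hindsight, each $c_j \in \mathcal{C}$ makes exactly $|j - j^*_b|$ mistakes on block $b$, so its total loss is $\sum_b |j - j^*_b|$; choosing $j = L/2$ upper bounds the best-in-hindsight expectation by $B \cdot \mathbb{E}[|L/2 - j^*_b|] \leq BL/4 = T/4$. Combining, the expected regret is at least $(c - 1/4)T = \Omega(T)$, which implies $\Omega(\sqrt{T \cdot 2^{\lit}})$ in the regime $T \geq 2^{\lit}$ (since $T \geq \sqrt{T \cdot 2^{\lit}}$ up to constants there); the complementary regime $T < 2^{\lit}$ follows directly from Theorem~\ref{thm:erm-online-realizable-lowerbound} applied to a single truncated block. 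The main obstacle I expect is making rigorous the block-wise independence argument: one must carefully show that the entire transcript of earlier blocks, including every ERM response, does not leak information about $(z^{(b)}, j^*_b)$, so that the per-block mistake bound applies with a uniform constant. This requires a coupling/measurability argument that exploits the zero-probability of finite ERM queries hitting the hidden continuous coordinates of unseen blocks, mirroring the independence argument already at the heart of the proof of Theorem~\ref{thm:erm-online-realizable-lowerbound}.
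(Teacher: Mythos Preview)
Your construction does not force $\Omega(L)$ mistakes per block, and the claim that Theorem~\ref{thm:erm-online-realizable-lowerbound} ``applies conditionally'' to each block is incorrect. The crucial feature of that theorem's construction is that the labels $b_1,\ldots,b_L$ are \emph{independent} uniform bits, so at step $i$ the learner faces a fresh coin flip regardless of the past. You have replaced this with a single uniform threshold $j^*_b$ per block, so the label of the $i$-th hyperrectangle representative is $\mathbb{I}[i>j^*_b]$; presented in the natural order (as in Theorem~\ref{thm:erm-online-realizable-lowerbound}), the label sequence within each block is simply $0,\ldots,0,1,\ldots,1$. A learner who predicts $0$ until the first $1$ appears and then predicts $1$ thereafter makes exactly one mistake per block, with no oracle calls whatsoever. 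Randomizing the presentation order does not help either, since the learner sees each presented point and can read off its hyperrectangle index $i$ from which block-coordinates are nonzero. Your learner-loss lower bound therefore collapses from $\Omega(T)$ to $O(B)=O(T/L)$, and since your best-in-hindsight loss is $\Theta(T)$, the argument yields negative regret rather than a lower bound.

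A sanity check already flags the problem: you arrive at $\Omega(T)$ regret, which for $T\gg 2^{\lit}$ would contradict the known upper bound $\tilde O(\sqrt{T\,2^{O(\lit)}})$ of \cite{assos2023online}. The paper's argument is structurally different. It uses a \emph{single} copy of the realizable construction on $T'=2^{\lit}$ cells, repeats each cell's representative $S=T/T'$ times in consecutive phases, and assigns all $T$ labels i.i.d.\ uniformly at random. Any learner then has expected loss exactly $T/2$, while the concept class contains a hypothesis that matches the majority label in every phase and hence (by binomial anti-concentration) has expected loss at most $T/2-\Omega(T'\sqrt{S})=T/2-\Omega(\sqrt{T\,2^{\lit}})$. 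The regret lower bound follows immediately; the hidden-hyperrectangle machinery is needed only to keep the Littlestone dimension at $\lit$ while still allowing the best-in-hindsight concept this per-phase freedom.
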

The full proofs for the lower bounds can be found in \Cref{app:lower-bounds-online}.

We now show how any deterministic online learning algorithm using the restricted ERM can be simulated using the weak consistency oracle, at the cost of increasing the number of oracle calls by $O(T)$. In particular, this applies to the ERM-based algorithms of \cite{assos2023online,kozachinskiy2024simple}.
\begin{restatable}[Reducing Online Learning with Weak Consistency to Online Learning with ``Restricted" ERM]{theorem}{reducingonlineermtoonlineweakconsistency}\label{thm:online_learning_weak_consistency}
Consider any deterministic online learning algorithm that only has access to the restricted ERM oracle.
Furthermore, suppose that at each timestep $t$, for any function $f$ returned by the oracle, the algorithm evaluates $f$ only on the points $x_1, x_2, \ldots, x_t$. If this algorithm makes at most $f(T)$ mistakes and uses at most $g(T)$ oracle queries over $T$ rounds, then there exists an algorithm that uses the weak consistency oracle and makes at most $f(T)$ mistakes using at most $T \cdot g(T)$ queries.
\end{restatable}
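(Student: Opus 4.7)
\begin{proofsketch}
The plan is to simulate each restricted-ERM query by a short sequence of weak-consistency queries that produces a labeling of the points seen so far, agreeing with some concept in $\mathcal{C}$ consistent with the queried set $S$. Since by hypothesis the algorithm evaluates any returned hypothesis only on $x_1,\ldots,x_t$, handing it such a labeling (packaged as any function extending it) is indistinguishable from handing it a genuine element of $\mathcal{C}$.

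Concretely, when the algorithm at round $t$ queries a subsequence $S \subseteq \{(x_i,y_i)\}_{i<t}$, I would first call the weak consistency oracle on $S$ and relay "not realizable" to the algorithm if appropriate. Otherwise I initialize $S' := S$ and walk through $x_1,\ldots,x_t$ in order: if $x_i$ is already labeled in $S'$ I copy that label into $\tilde y_i$; otherwise I query the oracle on $S' \cup \{(x_i,0)\}$, setting $\tilde y_i := 0$ and $S' \leftarrow S' \cup \{(x_i,0)\}$ if the answer is "realizable," and $\tilde y_i := 1$ and $S' \leftarrow S' \cup \{(x_i,1)\}$ otherwise. The invariant "$S'$ is realizable" is preserved, because any $c \in \mathcal{C}$ realizing $S'$ must map $x_i$ into $\{0,1\}$, so whenever $S'\cup\{(x_i,0)\}$ fails then $S'\cup\{(x_i,1)\}$ is automatically realizable. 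I then return to the algorithm any function $\tilde f$ with $\tilde f(x_i) = \tilde y_i$ for $i \le t$; this is a legitimate restricted-ERM response because at the end $S' \supseteq S$ is realized by some $c \in \mathcal{C}$ that agrees with $\tilde f$ on the observed points.

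Determinism of the original algorithm, together with the evaluation hypothesis, implies that its execution trace under the simulated oracle coincides with one of its valid executions under an actual restricted-ERM oracle; in particular its $f(T)$ mistake bound carries over verbatim. Counting queries, each simulated call costs at most $1+t \le T+1$ weak consistency queries (one initial realizability check, plus at most one per unlabeled prefix point), so over $g(T)$ simulated calls the total is $O(T \cdot g(T))$, matching the claim. The main conceptual obstacle is precisely the evaluation restriction in the hypothesis: without it, one would need to recover the value of $c$ at points never produced by the adversary, which the weak consistency oracle offers no mechanism to do, and the reduction would break down.
\end{proofsketch}
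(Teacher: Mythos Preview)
Your proposal is correct and takes essentially the same approach as the paper: simulate each restricted-ERM call by greedily extending the queried subsequence to a realizable labeling of all points seen so far via at most $t$ weak-consistency probes, and hand that labeling back to the deterministic algorithm, which cannot distinguish it from a genuine ERM response because of the evaluation restriction. Your write-up is in fact a bit more precise than the paper's (which conflates extending previously returned functions with handling new queries and is somewhat loose about which labeled set is probed), but the underlying idea and the $T\cdot g(T)$ accounting are identical.
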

The proof is provided in \Cref{app:online-learning-weak-consistency}. As a consequence, we obtain the following result via the results in \cite{assos2023online,kozachinskiy2024simple}.
\begin{restatable}{corollary}{onlinelearningweakconsistency}
There exists a learning algorithm that makes $T \cdot 2^{O(\lit)}$ weak consistency oracle calls with $2^{O(\lit)}$ mistakes in the realizable setting.
\end{restatable}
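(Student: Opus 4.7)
The plan is to invoke Theorem~\ref{thm:online_learning_weak_consistency} as a black box, applied to the online algorithm of \cite{assos2023online, kozachinskiy2024simple}. Recall that those works provide a deterministic online learner for Littlestone classes using only the restricted ERM oracle, achieving $2^{O(\lit)}$ mistakes with at most $2^{O(\lit)}$ oracle queries in the realizable setting. So in the notation of Theorem~\ref{thm:online_learning_weak_consistency} we have $f(T) = 2^{O(\lit)}$ and $g(T) = 2^{O(\lit)}$.

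The only point that requires verification is the hypothesis of the theorem: at every round $t$, the algorithm must evaluate any function returned by the restricted ERM oracle only on the instances $x_1, \ldots, x_t$ observed so far. This is a natural property of the algorithms in \cite{assos2023online, kozachinskiy2024simple}: their decisions at round $t$ are driven by aggregating predictions of oracle-returned hypotheses on the current instance $x_t$ (and possibly on past instances, for bookkeeping purposes such as tracking version-space behavior), and they never probe the returned hypothesis on unseen domain points. I would check this directly against the pseudocode of either cited algorithm to confirm the condition holds without modification.

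Once this verification is in place, Theorem~\ref{thm:online_learning_weak_consistency} immediately produces a learner that uses the weak consistency oracle, makes at most $f(T) = 2^{O(\lit)}$ mistakes, and issues at most $T \cdot g(T) = T \cdot 2^{O(\lit)}$ weak consistency queries over $T$ rounds, which is exactly the bound claimed.

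The only potential obstacle is the verification step above; if the cited algorithm ever evaluated a returned function on a fresh domain point (e.g., to probe its behavior outside the interaction history), the simulation in Theorem~\ref{thm:online_learning_weak_consistency} would not apply directly. However, inspection of the algorithms in \cite{assos2023online, kozachinskiy2024simple} confirms this is not the case, so the reduction goes through without modification.
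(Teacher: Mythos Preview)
Your proposal is correct and matches the paper's approach exactly: the paper simply states that the corollary follows by applying Theorem~\ref{thm:online_learning_weak_consistency} to the algorithm of \cite{assos2023online}, and you have spelled out precisely this reduction along with the verification that the evaluation hypothesis holds.
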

Finally, we study online learning of partial concept classes with the weak consistency oracle. Partial concepts \cite{alon2022theory} are concepts that may be undefined on certain parts of the domain. These concepts are particularly useful for modeling data-dependent assumptions, such as the margin of the decision boundary. Recently, \cite{daskalakis2024efficient} showed that partial concepts can be learned in the (offline) PAC setting with the weak consistency oracle. Here, we show that such a result is impossible in the online setting, even with the ERM oracle, let alone with the weak consistency oracle. The proof is provided in \Cref{app:partial}.
\begin{restatable}[Lower Bounds for Online Learning of Partial Concepts with ERM Oracle]{theorem}{onlinelearningpartial}
There exists a family $\mathcal{F}$ of partial concept classes of Littlestone dimension $1$, where any algorithm that makes a finite number of queries will have $\Omega(T)$ mistakes.
\end{restatable}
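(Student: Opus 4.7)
The plan is to construct a family $\mathcal{F}$ of partial concept classes, each of Littlestone dimension $1$, together with a randomized adversary that forces any learner making finitely many ERM queries into $\Omega(T)$ expected mistakes. I would fix an infinite domain $\mathcal{X}$ and parameterize $\mathcal{F}$ by functions $f \in \{0,1\}^{\mathcal{X}}$, defining each $\mathcal{C}_f$ as a carefully chosen collection of partial concepts whose supports are ``sparse'' enough to keep the Littlestone dimension at $1$, yet rich enough that some single concept realizes an arbitrary $T$-length sequence with labels given by $f$. A natural candidate contains, for each $x \in \mathcal{X}$, two ``local'' partial concepts defined only at $x$ (one labeled $0$, one labeled $1$), plus a ``global'' concept defined on the full sequence with labels matching $f$. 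I would then verify that $\mathcal{C}_f$ has Littlestone dimension $1$ by showing that in any depth-$2$ tree, fixing the label at the root either forces the concept to be the unique global one (determining the label at the child) or forces it to be a local concept defined on a single point (undefined, hence inadmissible, at the child).

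Next, I would invoke Yao's minimax principle and analyze deterministic learners against a random adversary that samples $f$ uniformly and presents the instances in an order that is independent of the learner's strategy. The key claim is that oracle responses do not distinguish the true $f$ from many $f' \in \{0,1\}^{\mathcal{X}}$ that differ from $f$ at the upcoming instance $x_t$. Concretely, I would argue that singleton queries on any point always return ``realizable'' with both labels (via the local concepts), so they carry no information about $f$; multi-point queries at points in the adversary's sequence depend on $f$ only through coordinates at \emph{already-presented} instances (which the learner already observes through the revealed $y_s$ for $s<t$); and multi-point queries involving points outside the sequence return ``not realizable'' for structural reasons, irrespective of $f$. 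Moreover, for the ERM oracle, I would argue that any consistent concept returned to the learner can itself be chosen adversarially within the class, so evaluating the returned concept at future points does not reveal any coordinate of $f$ that the learner has not already seen as a label.

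Finally, combining these observations, I would conclude that conditioned on the learner's full view at round $t$ (past labels, current instance, and all oracle responses), the distribution of $y_t = f(x_t)$ remains uniform on $\{0,1\}$. Hence the learner errs with probability at least $1/2$ at each step, and the expected number of mistakes is at least $T/2 = \Omega(T)$. The main obstacle is the design of $\mathcal{C}_f$ in the first step: Littlestone dimension exactly $1$ is a very restrictive condition, and the partial-concept structure (the ``undefined'' value) is essential to allow multi-point oracle queries to fail for structural rather than $f$-dependent reasons. Getting the three requirements---dimension exactly $1$, realizability of arbitrary $T$-length sequences, and oracle responses that never leak information about $f$ on unseen points---to coexist is the technical heart of the argument and contrasts sharply with the offline PAC result of \cite{daskalakis2024efficient}, where the learner's ability to see all labels before committing to predictions is what makes the weak consistency oracle suffice.
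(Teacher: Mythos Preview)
Your high-level strategy---randomized adversary, Yao's principle, arguing that oracle responses leak no information about the upcoming label---matches the paper's, but the specific class $\mathcal{C}_f$ you propose does not work: oracle responses \emph{do} leak $f(x_t)$ for every $t\ge 2$. In your construction the only concept supported on two or more points is the global one $g_f$; the local concepts are singletons. Hence any multi-point query $S$ on distinct points is realizable if and only if $g_f$ is defined on and matches $S$, i.e., if and only if all points of $S$ lie in the adversary's sequence and $f$ agrees with $S$ there. Now recall the protocol: at round $t$ the learner observes $x_t$ \emph{before} predicting. So for $t\ge 2$ the learner can query $S=\{(x_1,y_1),(x_t,0)\}$; both points lie in the sequence and $f(x_1)=y_1$ is already known, so $S$ is realizable iff $f(x_t)=0$. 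One query per round reads off $y_t$, and the learner makes at most one mistake total (at $t=1$). Your sentence ``multi-point queries at points in the adversary's sequence depend on $f$ only through coordinates at already-presented instances'' is exactly the false step: $x_t$ is itself an already-presented instance at the moment the query is issued.

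The paper avoids this by replacing the single global concept with a \emph{staircase} of partial concepts $c_1,\ldots,c_T$: concept $c_i$ is supported on the prefix $\{z_1,\ldots,z_i\}$, agrees with the true labels $b_1,\ldots,b_{i-1}$, and \emph{flips} the $i$-th bit, $c_i(z_i)=1-b_i$. Then for any query on $\{z_1,\ldots,z_t\}$ consistent with the revealed prefix $b_1,\ldots,b_{t-1}$, \emph{both} labels at $z_t$ are witnessed---$c_t$ gives $1-b_t$ while $c_{t+1},\ldots,c_T$ give $b_t$---so the oracle always answers ``realizable'' and the returned ERM concept can be either witness at the adversary's discretion, revealing nothing about $b_t$. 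Your single-point local concepts cannot serve this role because they are never defined on the whole observed prefix; what is needed, for each $t$, is a concept supported on $\{x_1,\ldots,x_t\}$ that matches the known labels yet disagrees at $x_t$. With that change, your Littlestone-dimension-$1$ argument still goes through by the mechanism you sketch (the branch contradicting $b$ at the root isolates a unique prefix concept, which is then determined at the child).
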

\section{Transductive Online Learning}\label{sec:transductive-online}
In this section, we study the transductive online setting with oracle access, where the main goal is to leverage the additional information given to the learner, the set of instances $x_1,\ldots,x_T$ at the start of the interaction, in order to reduce the number of mistakes or regret. The proofs for this section are provided in \Cref{app:transductive-general}.

First, we show that it is possible to recover all labelings consistent with the concept class on the given instances $x_1, \ldots, x_T$ using the weak consistency oracle. With this step, we can achieve the optimal number of mistakes in the realizable setting and optimal regret in the agnostic setting.

\begin{restatable}[Identify Labeling with Weak Consistency Calls]{lemma}{weakconsistencypreprocessing}
\label{lem:wc-preprocessing}
For a class $\mathcal{C} \subset \{0,1\}^{\mathcal{X}}$, let $\vc$ be the VC dimension of $\mathcal{C}$. Using the weak consistency oracle, one can recover all the concepts in $\mathcal{C}$ using $O(|\mathcal{X}|^{\vc+1})$ queries.
\end{restatable}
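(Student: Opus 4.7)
The plan is to enumerate all realizable labelings of $\mathcal{X}$ by a depth-first search over the binary tree of partial labelings, using the weak consistency oracle to prune. First I would fix an arbitrary ordering $x_1, \ldots, x_n$ of the points in $\mathcal{X}$, where $n = |\mathcal{X}|$. I then explore a binary tree whose node at depth $i$ corresponds to a partial labeling $(y_1, \ldots, y_i) \in \{0,1\}^i$. On entering a node, I query the weak consistency oracle on $\{(x_1, y_1), \ldots, (x_i, y_i)\}$; if the answer is ``not realizable,'' I prune, and otherwise I recurse on both children. Since every prefix of a realizable labeling is itself realizable, pruning at non-realizable nodes loses no leaves, so the leaves at depth $n$ that survive are exactly the labelings induced by $\mathcal{C}$ on $\mathcal{X}$, which recovers the restriction of $\mathcal{C}$ to $\mathcal{X}$.

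Next I would bound the number of queries. By the Sauer--Shelah lemma, the number of distinct labelings of the first $i$ points that are realizable by $\mathcal{C}$ is at most $\sum_{j=0}^{\vc} \binom{i}{j} = O(i^{\vc})$. Hence the number of \emph{realizable} nodes visited at depth $i$ is at most $O(i^{\vc})$. Each such node triggers at most two oracle queries (one for each child we attempt to expand), and nodes that return ``not realizable'' contribute only a single query and produce no descendants. Summing over all depths, the total number of queries is bounded by
\[
\sum_{i=0}^{n-1} 2 \cdot O(i^{\vc}) \;=\; O\!\left(n^{\vc+1}\right),
\]
yielding the stated bound of $O(|\mathcal{X}|^{\vc+1})$ weak consistency calls.

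There is no serious obstacle here: the argument is essentially a clean marriage of pruned tree search with Sauer--Shelah. The only point requiring care is the observation that restricting the search to realizable nodes is safe (since realizability is closed under taking prefixes), so that the leaves surviving the pruning are exactly in bijection with the projection of $\mathcal{C}$ onto $\mathcal{X}$. Everything else is routine counting.
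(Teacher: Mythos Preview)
Your proposal is correct and follows essentially the same approach as the paper: both build a tree of realizable prefixes over a fixed ordering of $\mathcal{X}$, prune using the weak consistency oracle, and bound the number of queries by combining the Sauer--Shelah bound on realizable prefixes with the fact that each realizable prefix spawns at most two child queries. The paper expands the tree level by level while you use depth-first search, and your per-level count $O(i^{\vc})$ is slightly sharper than the paper's uniform $O(N^{\vc})$, but both yield the same $O(|\mathcal{X}|^{\vc+1})$ bound.
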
%

\begin{restatable}[Upper Bounds for Transductive Online Learning with  Weak Consistency Oracle]{theorem}{upperboundtransductiveweakconsistency}
\label{thm:transductive-upper-bound-general}
Consider any family $\mathcal{F}$ of concept classes with VC dimension $\vc$. There exists an algorithm that uses at most $O(T^{\vc+1})$ weak consistency queries and obtains optimal mistake bounds for transductive online learning (known to be upper bounded by $\min\lrset{\lit, \vc \log T}$), and also obtains optimal regret (known to be upper bounded by $\tilde{O}(\sqrt{T\min\lrset{\lit, \vc \log T}}$)).
\end{restatable}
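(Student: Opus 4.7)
The plan is to decouple the problem into two phases: a one-shot preprocessing phase that extracts full information about how the class behaves on the $T$ revealed instances, and a purely information-theoretic online phase that makes no further oracle calls. Since the instance sequence $x_{1:T}$ is known up-front in the transductive model, we can apply \Cref{lem:wc-preprocessing} with $\mathcal{X}$ replaced by $\{x_1,\dots,x_T\}$. This produces, using $O(T^{\vc+1})$ weak consistency queries, the full projection $\mathcal{C}|_{x_{1:T}} := \{(c(x_1),\dots,c(x_T)) : c \in \mathcal{C}\} \subseteq \{0,1\}^T$. By the Sauer--Shelah lemma, $|\mathcal{C}|_{x_{1:T}}| \le O(T^{\vc})$, so we now have an explicit, finite hypothesis class whose behavior on the instance sequence is completely known to the learner.

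Once $\mathcal{C}|_{x_{1:T}}$ is in hand, the online phase requires no further oracle access, so the total query count remains $O(T^{\vc+1})$. For the realizable case, the learner simulates the Standard Optimal Algorithm on $\mathcal{C}|_{x_{1:T}}$, which is now fully explicit; this achieves a mistake bound equal to the Littlestone dimension of $\mathcal{C}|_{x_{1:T}}$, which is at most $\lit(\mathcal{C})$. Simultaneously, since $|\mathcal{C}|_{x_{1:T}}| \le O(T^{\vc})$, running the Halving algorithm on $\mathcal{C}|_{x_{1:T}}$ gives the sequence-dependent bound $\log_2 |\mathcal{C}|_{x_{1:T}}| = O(\vc \log T)$ as observed by Kakade and Kalai. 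Taking the minimum of the two strategies (for instance by running both in parallel and invoking an expert aggregation over the two predictions) yields the desired $\min\{\lit,\vc\log T\}$ mistake bound.

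For the agnostic case, I use the standard reduction from online learning to prediction with expert advice: treat each element of $\mathcal{C}|_{x_{1:T}}$ as an expert and run a minimax-optimal algorithm (e.g., the exponentially weighted average forecaster, or the Ben-David--Pal--Shalev-Shwartz / Alon--Hanneke--Holzman--Moran transductive algorithm). The regret of such an algorithm against the best expert is $\tilde O(\sqrt{T \log |\mathcal{C}|_{x_{1:T}}|}) = \tilde O(\sqrt{T \vc \log T})$, and against a Littlestone-structured class one also obtains $\tilde O(\sqrt{T\lit})$ via the Ben-David et al.\ / Alon et al.\ construction run on the now-known projection. Again, running both algorithms and aggregating gives the claimed $\tilde O(\sqrt{T \min\{\lit,\vc\log T\}})$ regret bound, with no additional oracle queries.

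The only nontrivial step is the preprocessing lemma itself, which is already assumed; the remainder is essentially free because after preprocessing the setting reduces to online learning over a fully known finite hypothesis class, for which the tight mistake and regret bounds are classical. The main conceptual point to verify is that restricting to the projection $\mathcal{C}|_{x_{1:T}}$ does not inflate the Littlestone dimension (it can only decrease it, since any mistake tree shattered by the projection is also shattered by $\mathcal{C}$), so the standard bounds transfer directly without loss.
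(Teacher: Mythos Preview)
Your proposal is correct and follows essentially the same approach as the paper: invoke \Cref{lem:wc-preprocessing} on the restriction $\mathcal{C}|_{\{x_1,\dots,x_T\}}$ to recover all realizable labelings with $O(T^{\vc+1})$ queries, then run standard (non-oracle) transductive online algorithms such as SOA or Halving on the now fully known finite class. The paper's proof is in fact terser than yours; your additional remarks about expert aggregation to achieve the $\min$ and about the agnostic reduction are reasonable elaborations but not a departure from the paper's argument.
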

On the other hand, we establish the following two lower bounds. The first shows that limiting the learner to $O(T)$ weak consistency queries is not sufficient for transductive online learnability, and the second shows that restricting to $O(T)$ ERM queries results in a mistake bound that is exponential in the Littlestone dimension.
\begin{restatable}[Lower Bound for Transductive Online Learning with Weak Consistency Oracle - Randomized Algorithms]{theorem}{lowerboundtransductiveweakconsistency}\label{thm:wc-t-lower-bound}
Consider any family $\mathcal{F}$ of concept classes of the form $\mathcal{C} \subset \{0,1\}^{\mathcal{X}}$, where the family $\mathcal{F}$ has the property that for every labeling function $f: \{x_1, x_2, \ldots, x_T\} \rightarrow \{0,1\}$, there exists some concept class $\mathcal{C} \in \mathcal{F}$ and some concept $c \in \mathcal{C}$ such that $c(x_t) = f(x_t)$ for all $t \in [T]$ (i.e., all $2^T$ possible binary labelings of the sequence $x$ are captured by the family $\mathcal{F}$). For $T \geq 100$, any (possibly randomized) algorithm that makes at most $T / 20$ queries to the weak consistency oracle will incur an expected mistake bound of at least $T / 20$.
\end{restatable}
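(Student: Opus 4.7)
\begin{proofsketch}
The plan is to exhibit a family $\mathcal{F}$ meeting the hypothesis and an oblivious stochastic adversary against which any deterministic oracle-efficient learner must make $\Omega(T)$ mistakes in expectation; by Yao's principle, this suffices to lower bound the worst-case randomized mistake count. Take $\mathcal{F} = \lrset{\{c_f\} : f \in \{0,1\}^T}$, the collection of singleton classes indexed by labelings of $x_{1:T}$, where $c_f(x_t) = f_t$ for $t \in [T]$ and $c_f(x) = 0$ for $x \notin \{x_1,\dots,x_T\}$. Every binary labeling of $x_{1:T}$ is realized by exactly one class in $\mathcal{F}$, so the hypothesis holds. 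Let the adversary sample $f$ uniformly from $\{0,1\}^T$, so the labels $y_1,\dots,y_T$ are i.i.d.\ uniform Bernoulli.

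Fix any deterministic learner (by first conditioning on the learner's internal randomness, this is without loss) using at most $Q \leq T/20$ weak consistency queries. Let $O_t$ denote the tuple of oracle responses observed before the prediction at step $t$, and let $Z_t = (y_{<t}, O_t)$ be the learner's total observation at that step. Since each weak consistency query returns a single bit, $H(O_T) \leq Q \leq T/20$. Using the chain rule together with the fact that conditioning reduces entropy (and that $O_t \subseteq O_T$ as random variables),
\begin{align*}
\sum_{t=1}^T H(y_t \mid Z_t) \;\geq\; \sum_{t=1}^T H(y_t \mid y_{<t}, O_T) \;=\; H(y_1,\dots,y_T \mid O_T) \;\geq\; T - H(O_T) \;\geq\; \tfrac{19T}{20}.
\end{align*}

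Converting this entropy bound into a mistake bound is standard: since $\hat{y}_t$ is a function of $Z_t$, Fano's inequality gives $H_b(P_e(t)) \geq H(y_t \mid \hat{y}_t) \geq H(y_t \mid Z_t)$, where $P_e(t) := \Pr[\hat{y}_t \neq y_t]$ and $H_b$ is the binary entropy. Combining $H_b(p) \leq 2\sqrt{p(1-p)} \leq 2\sqrt{p}$ with the Cauchy--Schwarz inequality yields
\begin{align*}
\mathbb{E}[M] \;=\; \sum_{t=1}^T P_e(t) \;\geq\; \frac{1}{T}\LR{\sum_{t=1}^T \sqrt{P_e(t)}}^{2} \;\geq\; \frac{1}{T}\LR{\tfrac{1}{2}\cdot \tfrac{19T}{20}}^{2} \;=\; \frac{361\,T}{1600} \;>\; \frac{T}{20},
\end{align*}
which holds for every $T$ and in particular for $T\geq 100$.

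The main obstacle is that queries are adaptive and may reference labels $y_j$ for $j \geq t$, so $O_T$ depends on the entire label vector $y$ rather than only on a prefix. This is why I avoid an online step-by-step accounting and instead use the one-shot inequality $\sum_t H(y_t \mid Z_t) \geq H(y \mid O_T)$, which sidesteps the temporal ordering of queries and label reveals; conditioning on the learner's randomness up front is what makes the oracle transcript a well-defined deterministic function of $y$ to which the bound $H(O_T) \leq Q$ applies.
\end{proofsketch}
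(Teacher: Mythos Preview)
Your argument is correct, but note a small misreading: the theorem is a universal statement over all families $\mathcal{F}$ satisfying the hypothesis, not an existential one, so you should not ``exhibit'' a particular $\mathcal{F}$. Fortunately your entropy argument works verbatim for any such $\mathcal{F}$: the adversary samples $f$ uniformly and then fixes any pair $(\mathcal{C}_f, c_f)$ realizing $f$ on $x_{1:T}$ (guaranteed by the hypothesis); since each oracle answer is a deterministic function of $\mathcal{C}_f$ and the query, the transcript $O_T$ is still a deterministic function of $f$ once the learner's coins are fixed, so $H(O_T) \leq Q$ holds and the rest is unchanged.

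Your route is genuinely different from the paper's. The paper models the deterministic learner as a binary decision tree with $2^T$ leaves (one per labeling) and depth at most $T + Q \leq 1.05\,T$, and then proves a standalone combinatorial lemma: any such tree has total ``cost'' (sum over internal nodes of the smaller child's leaf count) at least $0.1 \cdot 2^T \cdot T$, via an entropy decomposition along root-to-leaf paths together with a balanced/unbalanced node dichotomy. Dividing by $2^T$ lower-bounds the expected number of mistakes plus queries by $T/10$. You bypass the tree entirely: you bound $H(O_T)$ by the query budget in one line, push this through the chain rule to obtain $\sum_t H(y_t \mid Z_t) \geq T - Q$, and convert to mistakes via Fano and Cauchy--Schwarz. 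Your argument is shorter and even yields a sharper constant ($361T/1600$ versus $T/20$); the paper's tree lemma, on the other hand, is packaged as a reusable combinatorial statement and makes the mistakes-versus-queries tradeoff visually explicit as edge costs in the decision tree.
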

In particular, this theorem holds for general classes like Littlestone classes, and also for special families of classes, including thresholds, $k$-intervals, and $d$-Hamming balls (see the next section).
\begin{proofsketch}

To establish the lower bound, we consider the uniform distribution over all $2^T$ possible concept-class pairs permitted by family $\mathcal{F}$. For any deterministic algorithm making at most $T/20$ queries, we represent its execution as a binary decision tree where: (i) the root contains all $2^T$ possible concepts, (ii) internal nodes represent either oracle queries or predictions, and (iii) prediction nodes use the majority label strategy.

The expected number of mistakes plus queries equals the expected number of oracle splits and prediction mistakes on a random path from root to leaf. Transforming our tree to have a single "cost edge" to the smaller subtree at each node. We analyze the expected cost using entropy arguments. With entropy $H(X) = T$ bits, and applying the chain rule, the entropy can be expressed as the expected sum of binary entropy values $h(p(v))$ along paths from leaves to root, where $p(v)$ is the fraction of leaves below a node's smaller child, and $h(x) = -x \log_2(x) - (1 - x) \log_2(1 - x)$. Defining a node as "balanced" if at least $0.2$ of its leaves fall under its smaller child, entropy analysis shows there must be $\Omega(T)$ balanced nodes in expectation along a random path. Each balanced node contributes a constant fraction to the expected cost, yielding a lower bound of $\Omega(T)$, which gives us the claimed $T/20$ lower bound.
\end{proofsketch}
\begin{restatable}[Lower Bound for Transductive Online Learning with ERM Oracles - Deterministic Algorithms]{theorem}{lowerboundtransductiveerm}
\label{thm:transductive-lower-bound-general}
Let $\mathcal{F}$ be the family of all classes with Littlestone dimension $\lit$. For $T \geq 2^{\lit+1}$, when having access to the ERM oracle, if fewer than $T/2$ queries are made, at least $2^{\lit}-1$ mistakes will be made. 
\end{restatable}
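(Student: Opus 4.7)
The plan is to exhibit, for any given deterministic algorithm, a concept class $\mathcal{C} \in \mathcal{F}$ with $\lit(\mathcal{C})=\lit$, a target $c^* \in \mathcal{C}$, and a transductive instance sequence $x_1,\ldots,x_T$ of length $T \geq 2^{\lit+1}$ that forces the learner into either at least $T/2$ ERM queries or at least $2^{\lit}-1$ mistakes. The idea is to combine the Littlestone-tree based hard-class construction underlying the online lower bound (Theorem~\ref{thm:erm-online-realizable-lowerbound}) with the observation that in the transductive setting each general ERM query can reveal at most one concept explicitly, so the set of candidate targets cannot shrink too fast under a small query budget.

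Concretely, I would take a class with $2^{\lit}$ concepts arranged as the root-to-leaf paths of a depth-$\lit$ Littlestone tree on $2^{\lit}-1$ ``essential'' internal-node instances. The instance sequence consists of these essential instances together with $T-(2^{\lit}-1) \geq 2^{\lit}+1$ ``agreement'' padding points on which every concept of $\mathcal{C}$ labels identically, so that the total length is at least $2^{\lit+1}$. Against a deterministic algorithm, the adversary runs a simulation, maintaining a live set $V$ of target concepts consistent with all oracle responses and revealed labels so far. Each ERM query on a subset $S$ is answered by picking a concept $c \in V$ consistent with $S$ so as to eliminate as few other elements of $V$ as possible, and every non-query round on an essential instance is handled by reserving the freedom to choose $c^* \in V$ later. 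With $Q < T/2$ queries the live set remains non-empty and in fact of size $\Omega(2^{\lit})$, so by an averaging argument the adversary can commit to a $c^* \in V$ whose values disagree with the learner's (deterministically determined) predictions on at least $2^{\lit}-1$ essential rounds.

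The main obstacle is controlling the information leaked by a single general ERM query on an arbitrary labeled subset, since unlike in the restricted ERM setting of \cite{kozachinskiy2024simple} the learner may query hypothetical labelings that mix essential and padding instances, or even labelings supported entirely on essential instances the learner has not yet encountered. The class therefore has to be designed so that for any labeled subset $S$ of size bounded away from $2^{\lit}$, many concepts of $\mathcal{C}$ remain consistent with $S$, and so that the ERM responses are compatible with many different final choices of $c^*$. Maintaining this ``spread'' property while keeping $\lit(\mathcal{C}) = \lit$ exactly is the principal technical step, and will likely proceed by a careful adaptation of the tree constructions of \cite{assos2023online,kozachinskiy2024simple}, verifying inductively along the Littlestone tree that the adversary's greedy oracle-response rule preserves enough live branches to both absorb the $Q < T/2$ queries and still route $c^*$ through a prediction-disagreeing label at each of the $2^{\lit}-1$ essential rounds.
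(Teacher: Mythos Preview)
Your plan differs substantially from the paper's and has a genuine gap at exactly the step you flag as the ``principal technical step''.

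The paper does not use a Littlestone tree on $2^{\lit}-1$ essential instances plus agreement padding. It takes the threshold-like family $\mathcal{F}_{\text{tr},2^{\lit}}$: the $T$ instances are partitioned into $2^{\lit}$ equivalence classes and the concepts are the $2^{\lit}+1$ thresholds over those classes. Against a deterministic learner the partition is built \emph{adaptively}: the adversary maintains sets $O_t$ (points with committed labels), $C_t$ (points already placed in the current class), and $U_t$ (still free), and shows that every ERM query---whatever subset and labeling the learner chooses---can be answered (typically by ``not realizable'') while moving at most one point from $U_t$ to $C_t$. The unit of information is therefore a \emph{point}, not a concept. With fewer than $T/2$ queries at least $T/2 \geq 2^{\lit}$ instances remain in $U$, and whenever the current $x_t$ lies in $U$ the adversary opens a fresh equivalence class with label $1-\hat y_t$, forcing a mistake; this can be repeated $2^{\lit}-1$ times before the classes run out.

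Your ``one concept per query'' bookkeeping does not yield the conclusion. First, the query budget is $T/2-1$, which for large $T$ dwarfs the $2^{\lit}$ concepts in your class, so the assertion $|V| = \Omega(2^{\lit})$ has no force in general. Second, even granting a large live set, the averaging step fails: to force $2^{\lit}-1$ mistakes on the $2^{\lit}-1$ essential rounds you need a $c^* \in V$ on which the learner is wrong \emph{everywhere}, i.e., the bitwise complement of the learner's adaptive prediction sequence on those coordinates must itself be a concept in $V$---a pointwise requirement that no averaging over a set of size $\Omega(2^{\lit})$ can supply. And the padding cannot help the adversary, since a single empty-set ERM call already hands the learner a concept that is correct on every padding instance. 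The repair is not a cleverer extension of tree concepts to off-path nodes; it is to make all $T$ instances carry unresolved structure, which is precisely what the paper's adaptive-partition threshold construction achieves.
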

We next study specific families of concept classes, achieving stronger results than in the general case, especially using randomized algorithms.

\subsection{Thresholds, Intervals, and Hamming Balls}

We start with the family of thresholds.
For a set $\mathcal{X}$, we consider the family of classes \\$\mathcal{F} = \lrset{\mathcal{C}_{\preceq} | \preceq \text{ is a total ordering over } \mathcal{X}}$, where each $\mathcal{C}_{\preceq}$ is the threshold class corresponding to the ordering $\preceq$. Specifically, $\mathcal{C}_{\preceq} = \lrset{c_z: z \in \mathcal{X}}$, where $c_z(x) = \mathbb{I}[x \preceq z]$. When restricted to $T$ points $x_1, x_2, \ldots, x_T$, the class $\mathcal{C}_{\preceq}[\lrset{x_1,x_2, \ldots, x_T}]$ forms a threshold class over these $T$ points, which has Littlestone dimension $\lfloor \log_2(T) \rfloor$.
Our analysis focuses on both deterministic and randomized algorithms with weak consistency and ERM oracles.
\begin{restatable}[Upper Bounds on Transductive Online Learning of Thresholds]{theorem}{thresholdsupperbounds}
\label{thm:thresholds_upper_bounds}
Consider transductive online learning where $\mathcal{F}$ is the family of threshold classes. The following results hold:
\begin{enumerate}[leftmargin=15pt]
    \item There exists a deterministic algorithm that makes $O(T \log T)$ calls to the weak consistency oracle and incurs at most $O(\log T)$ mistakes.
    \vspace{-0.1cm}
    \item There exists a randomized algorithm that makes $O(T)$ calls in expectation to the weak consistency oracle and incurs at most $O(\log T)$ mistakes.
    \vspace{-0.1cm}
    \item There exists a deterministic algorithm that makes $O(T)$ calls to the ERM oracle and incurs at most $O(\log T)$ mistakes.
    \vspace{-0.1cm}
    \item There exists a randomized algorithm that makes at most $O(\log T)$ calls to the ERM oracle in expectation and makes at most $O(\log T)$ mistakes.
\end{enumerate}
\end{restatable}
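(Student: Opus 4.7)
The four claims share a unifying template: use oracle queries to extract enough information about the unknown total order $\preceq$ on $\{x_1,\dots,x_T\}$ to run a Halving-style predictor on the thresholds, which has $O(\log T)$ mistakes once the order is accessible. The universal primitive is that, for any pair $x_i,x_j$, a WC (or ERM) query on $\{(x_i,1),(x_j,0)\}$ returns realizable iff $x_i\prec x_j$; hence either oracle simulates an ``unknown-order comparator'' in one call. Also note that the class restricted to $\{x_1,\ldots,x_T\}$ is the standard threshold class over $T$ sorted points, with Littlestone dimension $\lfloor \log_2(T+1) \rfloor$, so SOA/Halving would give the $O(\log T)$ mistake bound if the ordering were known.

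\textbf{Part 1 (deterministic WC, $O(T\log T)$ queries).} The plan is straightforward: sort $x_1,\dots,x_T$ by mergesort using the WC comparator, at a cost of $O(T\log T)$ queries. With the order known, run SOA on the threshold class to achieve $\lfloor\log_2(T+1)\rfloor = O(\log T)$ deterministic mistakes with no further oracle calls.

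\textbf{Part 2 (randomized WC, $O(T)$ expected queries).} The plan is to avoid full sorting by tracking only the currently relevant version-space interval $(L_t,R_t) = (\max_{\prec}\{x_i:i<t,y_i=1\},\min_{\prec}\{x_i:i<t,y_i=0\})$. At each round, two WC queries classify $x_t$ as forced-$1$ (below $L_t$), forced-$0$ (above $R_t$), or ambiguous (in the interval), giving $O(T)$ total classifying queries. For ambiguous $x_t$, I would maintain a uniformly random permutation of the still-unlabeled points inside the window and pick a fresh random pivot $p$ from this set, using one additional WC query to compare $p$ and $x_t$ and predicting based on whether $p$ is on the ``1-side'' or ``0-side'' of $x_t$. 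The randomness of $p$ ensures that, conditional on a mistake, the window shrinks by a constant factor in expectation, so a halving-in-expectation potential argument on $\log|\text{window}|$ yields $O(\log T)$ expected mistakes.

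\textbf{Part 3 (deterministic ERM, $O(T)$ queries).} The plan is to reuse Part 2's interval scheme but to exploit the fact that each ERM call returns a full concept $c_z$, i.e., an entire partition of the sequence into $\{x_i\preceq z\}$ and $\{x_i\succ z\}$, which contains many ordering bits. I would query ERM on past labels each round and cache $c_z$'s binary pattern on $x_1,\ldots,x_T$; across rounds, the accumulated patterns let us identify the median of the current window in amortized $O(1)$ ERM calls per round. Combined with the interval structure of the threshold version space, halving on every mistake yields $O(\log T)$ mistakes and $O(T)$ total ERM calls.

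\textbf{Part 4 (randomized ERM, $O(\log T)$ expected queries).} Here we query only when forced. Maintain a ``current concept'' $c_t$ returned by ERM; predict $c_t(x_t)$ at each round with zero queries, and re-query ERM only upon a mistake, so the query count is at most one plus the number of mistakes. To prevent an adversarial ERM from returning worst-case consistent concepts, before each call append a random forcing pair $(x_r,b_r)$ with $x_r$ uniform over the unlabeled points and $b_r$ uniform in $\{0,1\}$ (conditioned on realizability). This effectively samples the returned concept near-uniformly from the current version space, so a standard halving-in-expectation argument bounds expected mistakes, and hence expected queries, by $O(\log T)$.

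\textbf{Main obstacle.} The delicate step across Parts 2--4 is replacing the explicit median computation used in Part 1 by a query-efficient surrogate that still guarantees the halving-per-mistake property. Randomization provides this cleanly in Parts 2 and 4 via uniform pivots / uniform sampling of the version space. Part 3 is the most subtle: achieving the halving invariant deterministically with only $O(T)$ ERM calls demands a careful amortized exploitation of the multi-bit information returned by each ERM call and of the interval structure of the threshold version space.
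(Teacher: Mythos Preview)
Your Part 1 matches the paper exactly.

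Your Part 2 is a genuinely different route from the paper and appears to work. The paper samples $\Theta(\log(1/\delta))$ points from the window and uses Hoeffding to estimate the rank of $x_t$, then predicts the majority side; each mistake then shrinks the window by a constant factor with high probability. Your single-pivot scheme instead yields a randomized prediction whose mistake probability at position $j$ in a window of size $n$ is $(j-1)/(n-1)$ or $(n-j)/(n-1)$, and the potential $\Phi=\ln|\text{window}|$ drops by at least that much at every round, so $\sum_t \Pr[\text{mistake}]\le \Phi_0=O(\log T)$. This is arguably cleaner, though you should state the prediction rule explicitly (``predict $0$ iff $p\prec x_t$'') and handle the edge case $p=x_t$.

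Parts 3 and 4, however, have real gaps.

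For Part 3 you never give an algorithm; ``the accumulated patterns let us identify the median in amortized $O(1)$ ERM calls'' is a hope, not an argument. An adversarial ERM queried on the past labels alone can return the \emph{same} extreme threshold every round (e.g., always threshold $=L_t$), so the cached patterns need not reveal any median. The paper's idea, which you miss, is that a single ERM call on $\{(z_1,0),(z_2,1)\}$ returns a full partition of all $T$ points into a left block and a right block; recursing on each block gives the entire order with the recurrence $Q(n)=Q(|S_0|)+Q(|S_1|)+2$, hence $Q(T)=O(T)$, after which SOA finishes with $O(\log T)$ mistakes. This is a quicksort-style sort, not an online median-tracking scheme.

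For Part 4 your central claim is false: appending one random pair $(x_r,b_r)$ does \emph{not} make the ERM output ``near-uniform'' over the version space. If $b_r=1$ the adversarial ERM can return the maximum consistent threshold, and if $b_r=0$ the minimum one; in either case the returned concept is an extreme of the window, independent of $x_r$. So your ``halving in expectation'' argument has no foundation. The paper's fix is to sample \emph{two} points $z_1,z_2$ from the window and query ERM on $\{(z_1,0),(z_2,1)\}$, which sandwiches the returned threshold strictly between them; one then rejects and resamples until the induced partition has both sides of size at least $|U_t|/3$, which happens with constant probability. This guarantees a deterministic constant-factor shrink of the window on every mistake, yielding $O(\log T)$ mistakes and $O(\log T)$ expected ERM calls.
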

We outline the proof of the last case, the full proof is in \Cref{app:thresholds}.
\begin{proofsketch}
    The key observation is that when randomly sampling two points from the uncertainty region and running ERM (with one point labeled $0$ and the other labeled $1$), with constant probability, the oracle assigns label $0$ to at least $1/3$ of the points and label $1$ to at least another $1/3$. The algorithm predicts according to this labeling. When a mistake occurs, the uncertainty region shrinks by a constant factor, and we resample to generate a new labeling for the reduced region. Since the uncertainty region shrinks by a constant factor (at least $1/3$) with each mistake, we need at most $O(\log T)$ mistakes to reduce the uncertainty region to a single point. Each mistake requires only a constant number of queries in expectation (to get a "balanced" partition), leading to a total of $O(\log T)$ ERM oracle calls in expectation.
\end{proofsketch}

We proceed to analyze another canonical family of concept classes: $k$-Intervals. For a set $\mathcal{X}$, we consider the family of classes $\mathcal{F}_{\text{int},k} = \lrset{\mathcal{C}_{\text{int},\preceq,k} | \preceq \text{ is a total ordering over } \mathcal{X}}$, where each $\mathcal{C}_{\text{int},\preceq,k}$ contains concepts defined by at most $k$ intervals under the ordering $\preceq$.
Formally, for any total ordering $\preceq$ on $\mathcal{X}$, and any collection of at most $k$ disjoint intervals $Z_1, Z_2, \ldots, Z_m$, where $m \leq k$ and each $Z_i = \{x \in \mathcal{X} : a_i \preceq x \preceq b_i\}$ for some $a_i, b_i \in \mathcal{X}$, we define the concept as follows:
\[
c_{Z_1, Z_2, \ldots, Z_m}(x) = 
\begin{cases}
1 & \text{if } x \in Z_i \text{ for some } i \in \{1,2,\ldots,m\}, \\
0 & \text{otherwise}.
\end{cases}
\]
This class has VC dimension $2k$ and Littlestone dimension $O(k \log T)$.
It is known that this class has VC dimension $2k$, and the class defined on $T$ points has Littlestone dimension at most $T$.

\begin{restatable}[Upper Bound on Transductive Online Learning of $k$-Intervals with Weak Consistency Oracle - Randomized Algorithm]{theorem}{upperboundkintervals}
\label{thm:transductive-k-intervals}
Consider transductive online learning with the family $\mathcal{F}_{\text{int},k}$. There exists a randomized algorithm that makes $O(T^3 \cdot 2^{2k})$ calls to the weak consistency oracle in expectation and makes at most $O(k \log T)$ mistakes.
\end{restatable}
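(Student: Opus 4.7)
The approach is to run a randomized-halving algorithm over the concept class of $k$-interval labelings of the $T$ points. At each round $t$, the algorithm samples a realizable $k$-interval concept $\tilde{c}$ from the current version space $V_t$ (the set of realizable labelings consistent with the history $y_1, \ldots, y_{t-1}$) and predicts $\hat{y}_t = \tilde{c}(x_t)$. Since the concept class restricted to $\{x_1, \ldots, x_T\}$ has at most $O(T^{2k})$ distinct labelings (by Sauer--Shelah with VC dimension $2k$), the standard analysis of randomized halving (predict label $b$ with probability equal to the $V_t$-mass of concepts labeling $x_t$ as $b$) gives $\mathbb{E}[\mathrm{mistakes}] = O(\log |V_0|) = O(k \log T)$ via the potential argument $\phi_t = \log |V_t|$.

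The main technical step is to implement this sampler using only the weak-consistency oracle, despite the true ordering $\preceq$ being unknown to the learner. The key observation is that any $k$-interval labeling of $T$ points is determined by its at-most-$2k$ transition positions in $\preceq$; even without knowing $\preceq$, we can enumerate candidate labelings on small subsets and filter them through the oracle. Per round, the sampler (i) picks a random $2k$-sized subset $S$ of the domain containing $x_t$, (ii) enumerates all $2^{|S|} \leq 2^{2k}$ labelings of $S$, (iii) checks each for realizability, jointly with the history, via one weak-consistency query, and (iv) samples uniformly from the realizable labelings of $S$ and reads off the label assigned to $x_t$. To guarantee a constant-probability ``balanced" shrinkage of the version space per mistake, analogous to the randomized threshold subroutine in Theorem~\ref{thm:thresholds_upper_bounds}, one iterates this subroutine over $O(T^2)$ choices of $S$, rejecting samples that yield unbalanced label distributions; this amortizes to $O(T^2 \cdot 2^{2k})$ oracle queries per round and $O(T^3 \cdot 2^{2k})$ total queries in expectation.

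The hard part will be proving the ``constant-factor shrinkage per mistake" property under the sampled distribution. Unlike the thresholds case, where two random points suffice to produce a balanced partition, $k$-intervals require a more delicate combinatorial argument: one must show that a random $2k$-subset together with the $2^{2k}$-labeling enumeration yields, with constant probability, a prediction distribution whose marginal for $x_t$ is within a constant factor of the true halving marginal $|V_t \cap \{c : c(x_t) = 1\}| / |V_t|$. This hinges on controlling how the at-most-$2k$ transition points of concepts in $V_t$ distribute across a random subset $S$, and using the Sauer--Shelah bound to ensure that $S$ exhibits few enough realizable labelings for the enumeration step to give a faithful local approximation to the version-space marginal. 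A secondary subtlety is to bound the rejection overhead in expectation so that occasional expensive rounds do not blow up the $O(T^3 2^{2k})$ query budget.
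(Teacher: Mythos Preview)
Your approach is fundamentally different from the paper's, and the central step you flag as ``the hard part'' is a genuine gap, not a routine verification.

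The paper does \emph{not} attempt to sample from, or approximate marginals of, the version space. Instead it first \emph{recovers the unknown ordering} $\preceq$ on $\{x_1,\ldots,x_T\}$ and only then runs a standard (non-oracle) halving/SOA. The recovery exploits a clean combinatorial fact: among any $2k+1$ ordered points there is exactly one unrealizable labeling, namely the alternating one, and in that labeling the two endpoints always receive label $1$. So to test whether a point $z$ is an endpoint of the current set $U$, repeatedly sample $2k$ companions, find the unique unrealizable labeling of the resulting $(2k{+}1)$-tuple by brute-forcing all $2^{2k+1}$ labelings through the oracle, and check whether $z$ ever receives label $0$. A short hypergeometric calculation shows that a non-endpoint gets label $0$ with probability at least $1/|U|$, so $O(|U|)$ repetitions suffice to distinguish. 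Iterating this to peel off one endpoint at a time costs $O(T)\cdot O(T)\cdot O(T\cdot 2^{2k}) = O(T^3 2^{2k})$ queries, after which the ordering is known and the mistake bound is immediate.

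In your proposal, by contrast, the claim that uniformly sampling a realizable labeling of a random $2k$-subset $S$ gives a label distribution on $x_t$ within a constant factor of the true $V_t$-marginal is unproven, and there is no clear reason it should hold. A set of size $2k$ is shattered by the class, so prior to conditioning on history \emph{every} labeling of $S$ is realizable and the subroutine returns a uniformly random bit; all the work must come from the history constraint, yet you give no mechanism linking ``number of realizable labelings of $S$ consistent with history'' to ``size of $V_t$ restricted to a label of $x_t$.'' Two different labelings of $S$ can extend to wildly different numbers of full concepts in $V_t$, so uniform sampling on $S$ need not approximate the halving marginal even after many draws of $S$. Your rejection step (``reject unbalanced label distributions'') is also circular: you cannot detect imbalance without already knowing the marginal you are trying to estimate. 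As written, the proposal identifies the right difficulty but does not resolve it; the paper sidesteps it entirely by learning $\preceq$ first.
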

\begin{proofsketch}
One key observation is that for any $2k+1$ points $z_1 \prec z_2 \prec \ldots \prec z_{2k+1} \in \mathcal{X}$, there exists exactly one non-realizable labeling, corresponding to assigning label $(i \mod 2)$ to $z_i$. This fact can be used to test with high probability whether a point $z$ is an extreme point of the ordering---repeatedly sample $2k$ other points, and if the label of $z$ is consistently $1$ for the non-realizable labeling across all samplings, then it lies on an extreme end of the threshold with high probability (after $O(T)$ samplings and $2^{2k+1}$ weak consistency oracle queries).

We apply this test to all $T$ points to identify the two endpoints of the threshold (requiring at most $O(T^2 \cdot 2^{2k+1})$ queries). We then designate one extreme as the ``minimum'' and recurse (at most $T$ times), yielding an upper bound of $O(T^3 \cdot 2^{2k})$ for the total number of queries. Since the Littlestone dimension is $O(k \log \frac{T}{k})$ (and the VC dimension is $O(k)$), we can run either SOA or the halving algorithm to achieve the $O(k \log T)$ mistake bound.
\end{proofsketch}

We also study the class of $d$-Hamming Balls, which has Littlestone and VC dimensions equal to $d$. We show that an algorithm can make at most $2d$ mistakes using just a single ERM query. Furthermore, we show that the optimal mistake bound of $d$ can be achieved with $2^{d+1}$ queries. Combined with the result in Theorem~\ref{thm:transductive-k-intervals}, this suggests that the optimal mistake bound can be achieved for general transductive online learning using $T^C 2^{O(\vc)}$ queries, where $C$ is a constant independent of the class, possibly by a randomized algorithm. See Appendix~\ref{app:d-hamming} for more details.

\section{Discussion}\label{sec:discussion}
In this paper, we studied the power and limitations of various oracles in online learning settings.
In the transductive setting, there remains a gap between our general upper bound using deterministic algorithms (\Cref{thm:transductive-upper-bound-general}) and the improved performance of our randomized algorithms for special cases (\Cref{thm:thresholds_upper_bounds,thm:transductive-k-intervals}).
This motivates the following open problems:

\begin{itemize}[]
    \item Given a family of classes $\mathcal{F}$ with Littlestone dimension $\lit$ and VC dimension $\vc$,
    \emph{does there exist a randomized algorithm that uses at most $T^C 2^{O(\vc)}$ queries and achieves $O(\lit)$ or $O(\vc \log T)$ expected mistakes, for some constant $C$ independent of the class?}. 
    \item \emph{Are randomized algorithms provably more powerful than deterministic ones for online learning with oracles?}.
\end{itemize}

Additional interesting directions for future research include studying online multiclass classification and regression with oracle access, and investigating whether the oracle complexity of  $T^{2^{O(\lit)}}$ for online learning with the ERM oracle can be improved.

\section*{Acknowledgments}

Idan Attias is supported by the National Science Foundation under Grant ECCS-2217023, through the
Institute for Data, Econometrics, Algorithms, and Learning (IDEAL).

\newpage
\bibliographystyle{alpha}
\bibliography{refs}

\newpage
\appendix

\appendixpage

\startcontents[sections]
\printcontents[sections]{l}{1}{\setcounter{tocdepth}{2}}

\newpage
\section{Additional Related Work}\label{app:additional-related-work}

\paragraph{Oracle-Efficient Online Learning.}
Oracle-efficient methods establish a powerful framework for addressing the computational challenges of online learning. This provides an alternative to the fact that standard online learning can be computationally intractable \cite{frances1998optimal,hasrati2023computable}. Assos et al. \cite{assos2023online} and Kozachinskiy et al. \cite{kozachinskiy2024simple} studied the online learnability of concept classes with finite Littlestone dimension using the ERM oracle. In addition, the partial-information setting, specifically contextual bandits, has been explored using regression oracles \cite{foster2020beyond,simchi2022bypassing} and the ERM oracle \cite{syrgkanis2016efficient}.

Theoretical limitations of this approach have also been investigated. Hazan and Koren~\cite{hazan2016computational} established a lower bound showing that, for certain finite concept classes $\mathcal{C}$, any oracle-efficient online learner must make at least $\tilde{\Omega}(\sqrt{|\mathcal{C}|})$ ERM and function evaluation queries to achieve sublinear regret in the agnostic proper learning setting. Since the Littlestone dimension of a finite concept class is at most $\log|\mathcal{C}|$, this implies an exponential dependence of the total runtime on the Littlestone dimension. Thus, while oracle efficiency enables tractability in many settings, it also faces fundamental barriers in adversarial and worst-case scenarios. Several works have identified structural conditions that enable oracle-efficient online learning despite the general lower bounds. Dudík et al. \cite{dudik2020oracle} studied the conditions under which oracle-efficient algorithms can succeed, and Haghtalab et al. \cite{haghtalab2022oracle} provided such algorithms for online learning with smoothed adversaries, introduced in \cite{haghtalab2024smoothed}.
\paragraph{PAC Learning with Weak Oracles.}
ERM has long been a foundational principle in PAC learning, particularly for binary classification, where the sample complexity is characterized by the VC dimension. However, in settings like learning partial concepts, while PAC learning is possible, it can be shown that uniform convergence, which analysis of ERM depends on, does not hold \cite{alon2022theory,long2001agnostic}. Recently, Daskalakis et al.~\cite{daskalakis2024efficient} introduced a weaker form of oracle,   the weak consistency oracle, that suffices for learning any binary concept class with a sample complexity that scales polynomially (roughly cubically) in $\vc$, where $\vc$ is the VC dimension. They also extend this framework to multiclass classification, regression, and partial concept classes. Their work suggests that such weak oracles could be a powerful abstraction in PAC learning, and raises the question of whether a similar approach can succeed in online learning, a question we explore in this paper.
\paragraph{Online and Transductive Online Learning.}
Online learning, as introduced by Littlestone~\cite{littlestone1988learning} and studied in related query models by Angluin~\cite{angluin1988queries}, is characterized by worst-case mistake bounds determined by the Littlestone dimension. The Standard Optimal Algorithm (SOA), also due to Littlestone~\cite{littlestone1988learning}, achieves minimax-optimal guarantees in this model and, as such, plays a central role in the theory of online binary classification.

A related but distinct model is transductive online learning, where all unlabeled instances are revealed in advance, but labels are queried sequentially. This setting was implicitly studied in earlier work on "offline learning" \cite{ben1997online}. The transductive online model \cite{hanneke2023trichotomy,hanneke2024multiclass} lies between batch PAC learning and traditional online learning, and often admits more favorable learning guarantees than the latter due to the known instance sequence. Hanneke et al. \cite{hanneke2023trichotomy,hanneke2024multiclass} provided a trichotomy characterization and multiclass extensions for this setting.

\paragraph{Batch Transductive Learning.}
Batch Transductive Learning is the foundational setting introduced by Vapnik \cite{vapnik1974theory,vapnik1982estimation}, where both labeled training data and unlabeled test instances are available at training time, and the goal is to predict labels only for the given test set rather than learn a general hypothesis. Kakade and Kalai \cite{kakade2005batch} established connections between batch and transductive online learning, showing how efficient batch learning algorithms can be converted to efficient transductive online algorithms with regret scaling as $O(T^{3/4})$ over $T$ rounds. Cesa-Bianchi and Shamir \cite{cesa2013efficient} improved upon this result, achieving the optimal $O(\sqrt{T})$ regret rate for a wide class of losses using a novel randomized rounding approach.

\section{Preliminaries}\label{app:preliminaries}
\begin{definition}[Littlestone Dimension \cite{littlestone1988learning}]
\label{def:lit-dim}
Given a concept class $\mathcal{C}$, a $d$-depth Littlestone tree is a set of $\lrset{x_{\mathbf{y}}: \mathbf{y} \in \mathcal{Y}^t, t \in \lrset{0,d-1}} \subset \mathcal{X}$ (interpreting $\mathcal{Y}^0 = \{()\}$, where for all $y_1, y_2, \ldots, y_d \in \mathcal{Y}$, there's a $c \in \mathcal{C}$ such that $(c(x_{()}), c(x_{y_1}), c(x_{y_{1:2}}), \ldots, c(x_{y_{1:(d-1)}})) = (y_1,y_2,\ldots,y_d)$. The Littlestone dimension of $\mathcal{C}$ is the maximum $n$ such that there exists a Littlestone tree of depth $n$.
\end{definition}

\begin{definition}[VC Dimension \cite{vapnik1971uniform}]
\label{def:vc-dim}
Given a concept class $\mathcal{C}$, a set of $n$ points $x_1, \ldots, x_n \subset \mathcal{X}$ is shattered if $\lrset{(c(x_1), \ldots, c(x_n)): c \in \mathcal{C}} = \lrset{0,1}^n$. The VC dimension of $\mathcal{C}$ is the largest $n$ such that there exist $n$ shattered points in $\mathcal{X}$.
\end{definition}

\paragraph{Transductive Online Learning with Oracle Access.} 
The learning protocol is a sequential game between a learner and an adversary. 
    Let $\mathcal{C} \subset \lrset{0,1}^\mathcal{X}$ be a concept class, where $\mathcal{X}$ is the instance space and $\lrset{0,1}$ is the label space. Let $\mathcal{F}$ be a family of concept classes (e.g., classes with finite Littlestone dimension) such that $\mathcal{C}$ is an unknown concept class from $\mathcal{F}$ chosen by the adversary. Suppose the learner only has oracle access to $\mathcal{C}$ via an oracle $\mathcal{O}$. First, the adversary selects a concept class $\mathcal{C}\in\mathcal{F}$ and a sequence of instances $x_{1:T}=\lr{x_1, x_2, ..., x_T} \in \mathcal{X}^T$.  The sequence $x$ is revealed to the learner\footnote{There exist variants of this setting. One variant involves the adversary revealing a set to the learner opposed to a sequence -- see \cite{syrgkanis2016efficient}. Our results hold for that setting as well.} and the sequential game proceeds for $T$ rounds, as follows. For each $t \in [T]$:
    \begin{enumerate}
        \item The learner selects a distribution $\Delta_t \in \Delta(\lrset{0,1})$, and predicts $\hat{y}_t \sim \Delta_t$.
        \item The adversary reveals $y_t \in \lrset{0,1}$ and the learner suffers a loss $\mathbb{I}[\hat{y}_t \neq y_t]$.
    \end{enumerate}

    We define $y_{1:T} \in \lrset{0,1}^T$ as the sequence of labels chosen by the adversary corresponding to $x_{1:T}$.
    In the \emph{realizable setting}, the adversary is subject to the constraint that the sequence $(x_1,y_1),\dots,(x_t,y_t)$ is realizable by $\mathcal{C}$, meaning that there exists $c \in \mathcal{C}$ satisfying $c(x_i) = y_i$ for all $i \in [T]$. The performance of a learning algorithm $\mathcal{A}$ is measured by two metrics, the number of mistakes and the number of oracle queries. The mistakes are defined as follows:
    \[
        \mistakes{\mathcal{A},\mathcal{O}(\mathcal{C}),x_{1:T}, c} = \sum_{t=1}^{T} \mathbb{I}\lrbra{\hat{y}_t \neq c(x_t)}.
    \]
    The worst case number of mistakes of a learning algorithm $\mathcal{A}$ is defined as 
    \[
    M(\mathcal{A},T) = \sup_{\mathcal{C} \in \mathcal{F}} \sup_{c \in \mathcal{C}} \sup_{x_{1:T} \in \mathcal{X}^T} \mathbb{E}\lrbra{\mistakes{\mathcal{A},\mathcal{O}(\mathcal{C}),x_{1:T}, c}}.
    \]
    Similarly, $Q(\mathcal{A},\mathcal{O}(\mathcal{C}),x_{1:T}, c)$ is the total number of queries, defined as
    \[
    Q(\mathcal{A},T) = \sup_{\mathcal{C} \in \mathcal{F}} \sup_{c \in \mathcal{C}} \sup_{x_{1:T} \in \mathcal{X}^T} \mathbb{E}\lrbra{Q(\mathcal{A},\mathcal{O}(\mathcal{C}),x_{1:T}, c)}.
    \]
    Here, the expectation is over the randomness of the algorithm.
    
    We assume that the adversary is oblivious with respect to the choice of concept class $\mathcal{C}$, target concept $c$, and instance sequence $x$, meaning that these are fixed in advance and do not depend on the learner's predictions or queries.  

    In the agnostic case, the sequence $(x_{1:T}, y_{1:T})$ is no longer constrained to be realizable by $\mathcal{C}$, and the measure of performance is the regret, defined as
    \[
    \mathrm{Reg}(\mathcal{A}, \mathcal{O}(\mathcal{C}), (x_{1:T},y_{1:T})) = \sum_{t=1}^T \mathbb{I}[\hat{y_t} \neq y_t] - \inf_{c \in \mathcal{C}} \sum_{t=1}^T \mathbb{I}[c(x_t) \neq y_t]
    \]
    Define the worst case regret of the algorithm $\mathcal{A}$ as
    \[
    \mathrm{Reg}(\mathcal{A},T) = \sup_{\mathcal{C} \in \mathcal{F}} \sup_{(x,y) \in (\mathcal{X} \times \lrset{0,1})^T} \mathbb{E}\lrbra{\mathrm{Reg}(\mathcal{A}, \mathcal{O}(\mathcal{C}), (x,y))},
    \]
and $Q(\mathcal{A},T)$ in the agnostic case is defined as the worst-case total number of queries, over $\mathcal{C} \in \mathcal{F}$ and $(x,y) \in (\mathcal{X} \times \lrset{0,1})^T$.

\section{Mistakes-Oracle Calls Pareto Frontier 
}\label{app:pareto}
\subsection{Pareto Frontier for Online Learning with ERM for Littlestone Classes}
\begin{figure}[H]
  \centering
  \begin{subfigure}[b]{0.65\textwidth}
    \centering
    \includegraphics[width=\linewidth]{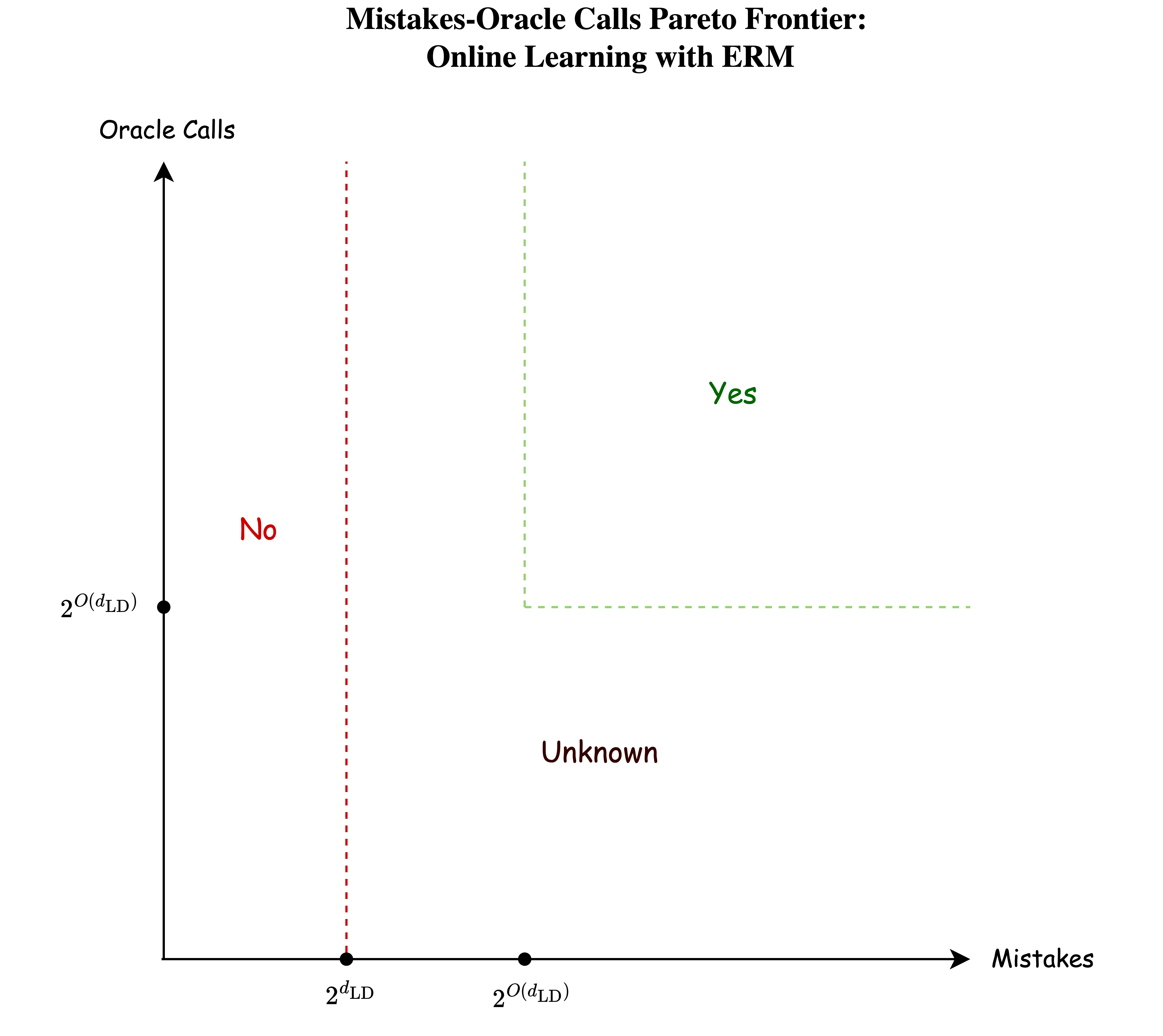}
    \label{fig:figA}
  \end{subfigure}
  \begin{subfigure}[b]{0.65\textwidth}
    \centering
    \includegraphics[width=\linewidth]{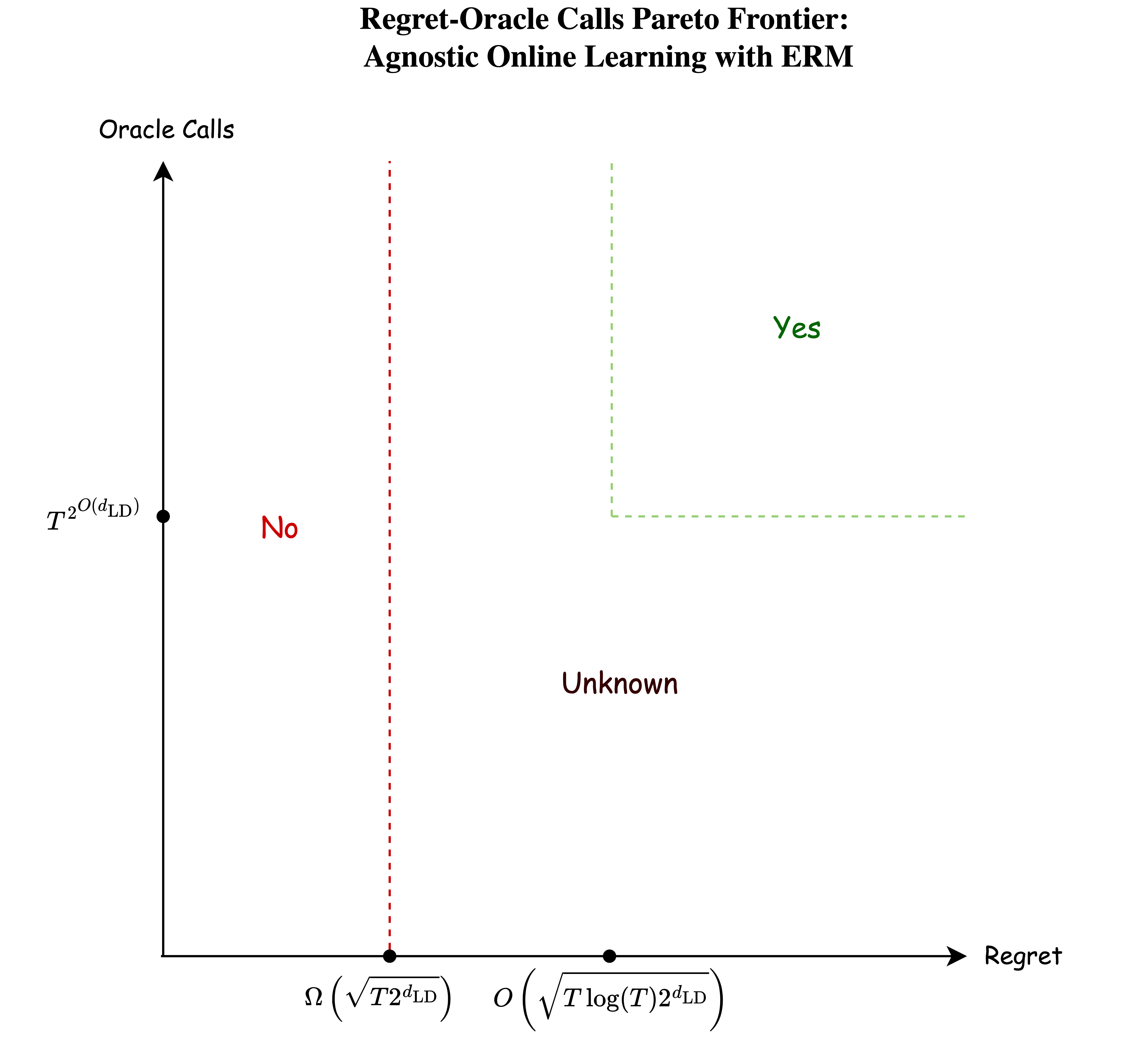}
  \end{subfigure}
\end{figure}

\subsection{Pareto Frontier for Transductive Online Learning with ERM for Littlestone Classes}
\begin{figure}[H]
  \centering
  \begin{subfigure}[b]{0.65\textwidth}
    \centering
    \includegraphics[width=\linewidth]{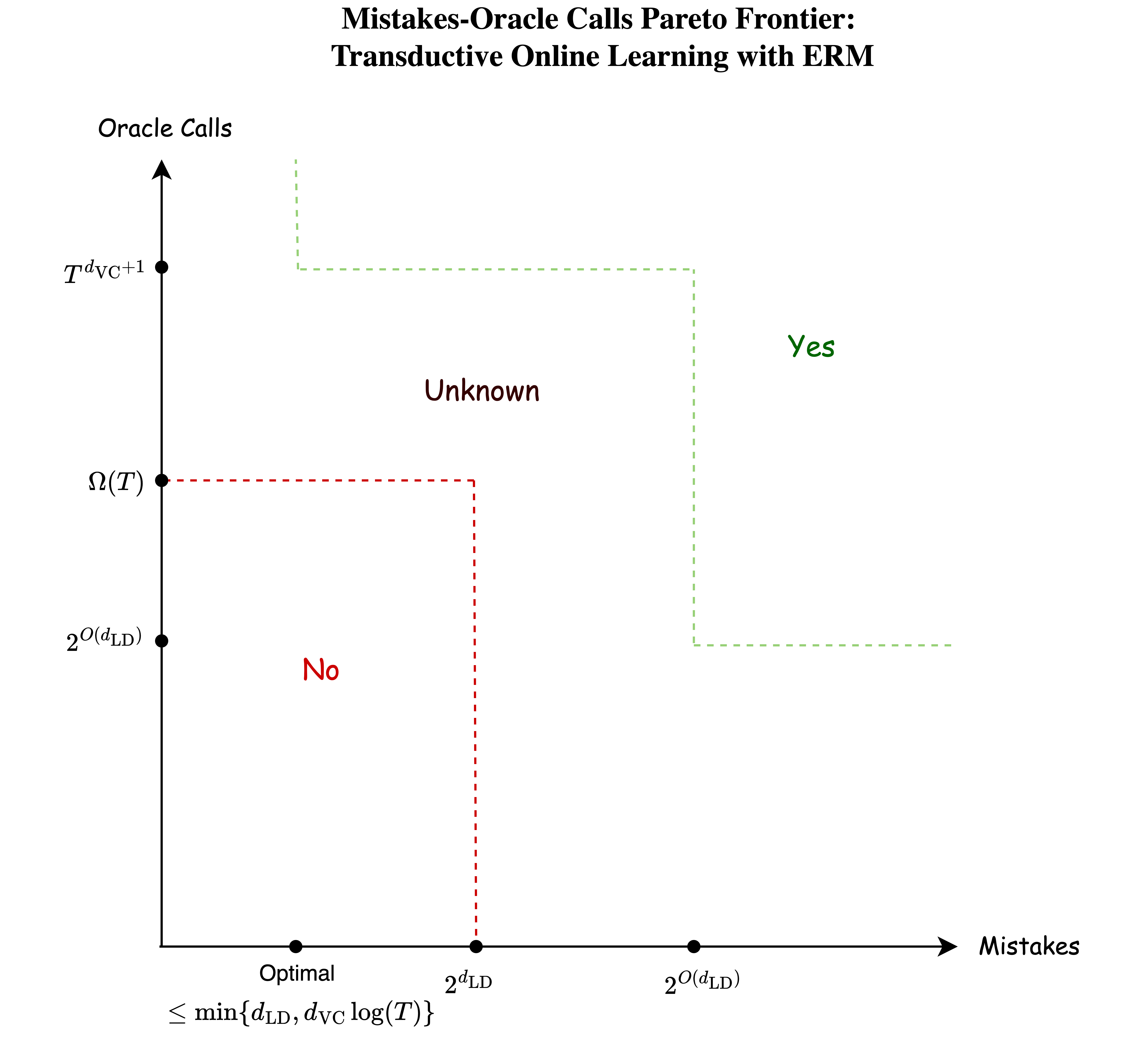}
    \label{fig:figA}
  \end{subfigure}
  \begin{subfigure}[b]{0.65\textwidth}
    \centering
    \includegraphics[width=\linewidth]{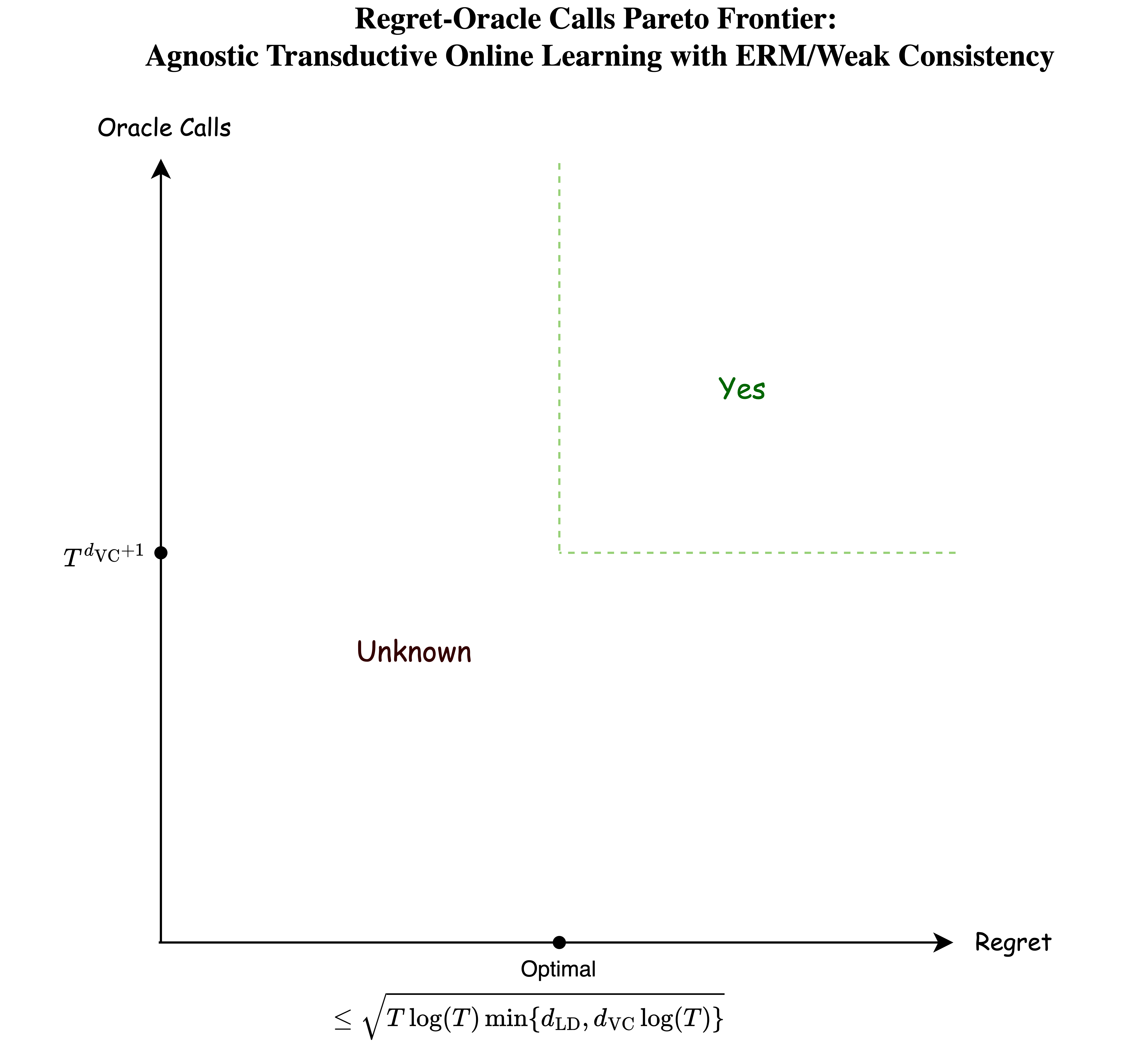}
  \end{subfigure}
\end{figure}

\section{Online Learning with ERM Oracle: Lower Bounds for Realizable and Agnostic Settings}\label{app:lower-bounds-online}

\subsection{ERM Oracle Lower Bound for the Realizable Case}

We now cover the details for proving Theorem~\ref{thm:erm-online-realizable-lowerbound}, the lower bound of $2^{\lit}$ mistakes for online learning in the realizable setting.

\lowerboundonlineweakrealizable*

\begin{proof}

    We prove this by constructing a hard instance where any algorithm using a finite number of ERM queries must make $\Omega(2^{\lit})$ expected mistakes.

    First, let $T = 2^{\lit}$ and set $\mathcal{X} = [0,1]^{T-1}$. We'll define a family of concept classes parameterized by two vectors:
    \begin{itemize}[]
        \item $z = (z_1,\ldots,z_{T-1}) \in [0,1]^{T-1}$, which defines the geometry of our construction
        \item $b = (b_1,\ldots,b_T) \in \{0,1\}^T$, which defines the labeling pattern
    \end{itemize}

    For each $(z,b)$, we define $T$ special points in $\mathcal{X}$:
    \begin{align*}
        x_1 &= (0,0,\ldots,0) \\
        x_2 &= (z_1,0,\ldots,0) \\
        x_3 &= (z_1,z_2,0,\ldots,0) \\
        &\vdots \\
        x_T &= (z_1,z_2,\ldots,z_{T-1})
    \end{align*}

    These points partition $\mathcal{X}$ into $T$ ``cells'' $C_1,\ldots,C_T$, where:
    \begin{itemize}[]
        \item $C_1$ contains all points $(w_1,\ldots,w_{T-1})$ where $w_1 \neq z_1$
        \item $C_2$ contains all points where $w_1 = z_1$ but $w_2 \neq z_2$
        \item $C_i$ contains all points matching the first $i-1$ coordinates of $z$ but differing at the $i$-th coordinate
        \item $C_T$ contains only the point $x_T$
    \end{itemize}

    Now we define our concept class $\mathcal{C}_{z,b}$ as the set of all functions that:
    \begin{enumerate}[]
        \item Label all points in the same cell consistently
        \item Assign labels to cells according to a threshold function determined by $b$
    \end{enumerate}

    The threshold function determined by $b$ is determined as follows. $b$ determines an ordering over the points $x_1,\ldots,x_T$ in the following sense. If $b_1 = 0$, then $x_1$ is the leftmost point in the ordering, and if $b_1 = 1$, then $x_1$ is the rightmost point in the ordering. Keep repeating for $b_2,\ldots,b_T$ to construct the rest of the ordering.
    
    The adversary works as follows:
    \begin{enumerate}[]
        \item Choose $z$ uniformly at random from $[0,1]^{T-1}$, and construct $x_1, x_2, \ldots, x_T$ from $z$.
        \item Choose $b$ uniformly at random from $\{0,1\}^T$
        \item Set the target concept $c$ such that $c(x_i) = b_i$ for all $i$.
        \item Present instances $x_1, x_2, \ldots, x_T$ in order
    \end{enumerate}

    It's sufficient to show that any algorithm incurs $\Omega(T)$ expected mistakes -- by the probabilisic method, this will imply that any algorithm incurs $\Omega(T/2)$ expected mistakes on at least one instance of (concept class, target concept, first T instances).
    
    In the first timestep, the learner is given $x_1$, which is in $C_1$. After finitely many ERM queries, the probability that any queries of the learner will be at some point $C_i$ for some $i > 1$ will be zero -- this is because $z_i$ values are chosen uniformly at random from $[0,1]$, so the probability of querying any specific real value is zero. Thus, with probability $1$, all the points in all the queries of the learner will be in $C_1$. The ``agnostic'' ERM oracle will return a concept that minimizes the error over the query points. Since the ERM oracle can return any function that minimizes this error, the adversary can choose which of these error-minimizing functions to return. Specifically, the adversary will select an error-minimizing function that provides no information about the labels of points not yet queried, especially about $x_1$. Thus, the adversary can choose the ERM to label all points in $\mathcal{X}$ as a majority vote over the labels of the points in the query. Since $b_1$ is chosen uniformly at random from $\{0,1\}$ and the learner has gained no information about this choice, the learner incurs a mistake with probability $1/2$.
    
    For the $t$th timestep, the learner is given $x_t$, which is in $C_t$. After finitely many ERM queries, the probability that any queries of the learner will be at some point $C_i$ for some $i > t$ will be zero. The agnostic ERM could return any function that minimizes the error over the points in the query. Since the points in the query were only among $C_1, C_2, \ldots, C_{t}$, the adversary can choose the ERM to label all points in $C_t, \ldots, C_T$ the same, giving the learner no information about the locations of $C_{t+1}, \ldots, C_T$, or about $b_t$. Thus, the learner incurs a mistake with probability $1/2$ at the $t$th timestep.
    
    Thus, over $T$ timesteps, the expected number of mistakes is $T/2$, giving the desired lower bound of $\Omega(2^\lit)$ expected mistakes.
\end{proof}

\subsection{ERM Oracle Lower Bound for the Agnostic Case}

The lower bound for the realizable case in \Cref{thm:erm-online-realizable-lowerbound} can be extended to the agnostic case, with a regret that scales with $\sqrt{T}$.

\lowerboundonlineagnostic*

\begin{proof}

    We extend the construction from Theorem~\ref{thm:erm-online-realizable-lowerbound} to the agnostic setting.
    
    Let $T' = 2^{\lit}$ and set $\mathcal{X} = [0,1]^{T'-1}$. Choose a parameter $S$ such that $T = ST'$, where $S$ is sufficiently large for concentration bounds to apply.

    We define the same special points as in the previous proof:
    \begin{align*}
        \hat{x}_1 &= (0,0,\ldots,0) \\
        \hat{x}_2 &= (z_1,0,\ldots,0) \\
        \hat{x}_3 &= (z_1,z_2,0,\ldots,0) \\
        &\vdots \\
        \hat{x}_{T'} &= (z_1,z_2,\ldots,z_{T'-1})
    \end{align*}

    where $z = (z_1,\ldots,z_{T'-1}) \in [0,1]^{T'-1}$ is chosen uniformly at random.

    These points partition $\mathcal{X}$ into $T'$ cells $C_1,\ldots,C_{T'}$ as defined earlier. We consider the concept class $\mathcal{C}_z$ as defined earlier.

    The adversary works as follows:
    \begin{enumerate}[]
        \item Choose $z$ uniformly at random from $[0,1]^{T'-1}$, and construct $\hat{x}_1, \hat{x}_2, \ldots, \hat{x}_{T'}$.
        \item For each $i = 1,\ldots,T'$ and $j = 1,\ldots,S$, present instance $x_{(i-1)S+j} = \hat{x}_i$.
        \item Generate labels $y_1, y_2, \ldots, y_T$ i.i.d. uniformly from $\{0,1\}$.
    \end{enumerate}
    
    Now consider the performance of the best hypothesis in $\mathcal{C}_z$. This hypothesis can assign an optimal label to each cell $C_i$ based on the majority of labels seen in phase $i$. For each phase $i$ consisting of $S$ copies of $\hat{x}_i$, let $m_i$ denote the number of mistakes made by the best hypothesis. By standard concentration bounds for the binomial distribution, we have $\mathbb{E}[m_i] \leq \frac{S}{2} - c\sqrt{S}$ for some constant $c > 0$.
    
    Summing over all $T'$ phases, the expected total error of the best hypothesis is at most
    \[
    \frac{T}{2} - c\, T'\sqrt{S} 
    = \frac{T}{2} - c\, 2^{\lit}\sqrt{\frac{T}{2^{\lit}}}
    = \frac{T}{2} - c\sqrt{T2^{\lit}}.
    \]
    
    Therefore, the expected regret of any algorithm that makes a finite number of queries to the ERM oracle is at least $c\sqrt{T2^{\lit}}$, which is $\Omega(\sqrt{T2^{\lit}})$.
    
\end{proof}

\section{Online Learning with Weak Consistency Oracle: Upper Bounds }\label{app:online-learning-weak-consistency}

The following theorem enables proving upper bounds for online learning with the weak consistency oracle, assuming the existence of an ERM-based algorithm satisfying certain assumptions.

\reducingonlineermtoonlineweakconsistency*

\begin{proof}

    Let $A$ be a deterministic online learning algorithm that uses a restricted ERM oracle, makes at most $f(T)$ mistakes, and uses at most $g(T)$ oracle queries over $T$ rounds. We will construct an algorithm $A'$ that uses the weak consistency oracle and maintains the same mistake bound.

    At each timestep $t$, algorithm $A$ may make a query to the ERM oracle using the set \\$\{(x_1,y_1),(x_2,y_2),\ldots,(x_{t-1},y_{t-1})\}$. Let's denote by $f_t$ the function that would be returned by this ERM query. By assumption, $A$ only evaluates $f_t$ on points $x_1, x_2, \ldots, x_t$.

    Algorithm $A'$ will simulate $A$ as follows:

    \begin{itemize}[]
        \item Let $H_t$ be the set of functions returned by the ERM oracle in algorithm $A$ up to timestep $t$.
        \item For each function $f \in H_{t-1}$ that $A$ would evaluate at point $x_t$, algorithm $A'$ needs to determine $f(x_t)$. To do this, $A'$ will make one query to the weak consistency oracle:\\ $\{(x_1,y_1),(x_2,y_2),\ldots,(x_{t-1},y_{t-1}), (x_t,0)\}$. If it returns ``realizable'', extend the function to have value $0$ at $x_t$ and add it to $H_t$. If it returns ``not realizable'', extend the function to have value $1$ at $x_t$ and add it to $H_t$.
        \item For any new ERM query that $A$ would make at timestep $t$, $A'$ needs to simulate this by finding a function consistent with $\{(x_1,y_1),(x_2,y_2),\ldots,(x_{t-1},y_{t-1})\}$. This can be done by using the weak consistency oracle to check all possible labelings of future points that $A$ might evaluate.
    \end{itemize}

    The key insight is that algorithm $A$ is deterministic and, by assumption, only evaluates functions on points $x_1, x_2, \ldots, x_t$ at timestep $t$. This means that the behavior of $A$ depends only on the values of these functions on these specific points, not on their values for unseen points $x_{t+1}, \ldots, x_T$. Therefore, $A'$ makes at most $f(T)$ mistakes, just as $A$ does. Furthermore, for each ERM query in $A$, $A'$ makes at most $T$ weak consistency queries, so $A'$ makes at most $T \cdot g(T)$ weak consistency queries.
\end{proof}

We can apply the above Theorem with the algorithm by \cite{assos2023online} to get the following. 

\onlinelearningweakconsistency*

\section{Online Learning with ERM Oracle: Lower Bounds for Partial Concepts}\label{app:partial}

The idea of learning partial concept classes was formalized recently by Alon, Hanneke, Holzman, and Moran \cite{alon2022theory}, who demonstrated fundamental differences from the learning of total concept classes, for example, that ERM fails to learn partial concepts, even though the VC dimension still characterizes learnability. Learning with partial concepts provides a natural framework for modeling data-dependent assumptions, such as margin conditions or cases where the target function is only defined on a subset of the input space.

Some of these ideas were previously studied implicitly in the context of regression \cite{long2001agnostic, bartlett1998prediction}. Partial concept classes have many applications, including adversarially robust learning \cite{attias2022improved, attias2022characterization}, learning with fairness constraints \cite{hu2022comparative}, multiclass classification \cite{kalavasis2022multiclass}, and online learning \cite{cheung2023online}.

Let a partial concept class $\mathcal{C}\subseteq\{0,1,\star\}^\mathcal{X}$. For $c\in\mathcal{C}$ and input $x$ such that $c(x)=\star$, we say that $c$ is \emph{undefined} on $x$. The \emph{support} of a partial concept $c:\mathcal{X}\rightarrow \{0,1,\star\}$ is $\text{supp}(c) = \{x\in\mathcal{X}: c(x)\neq \star\}$.
A sequence $S = ((x_1, y_1), \ldots, (x_m, y_m))$ is \emph{realizable} by $\mathcal{C}$ if there exists $c \in \mathcal{C}$ such that $c(x_i) = y_i$ for all $i \in [m]$ and $x_i \in \text{supp}(c)$ for all $i \in [m]$.

The \emph{Littlestone dimension} of a partial class $\mathcal{C}$ is the maximum depth $d$ such that there exists a $d$-depth Littlestone tree where for each path from root to leaf, there exists some concept $c \in \mathcal{C}$ that realizes the path and is defined on all points along that path (see \Cref{def:lit-dim} for the definition of a Littlestone tree).
The \emph{VC dimension} of a partial class $\mathcal{C}$ is defined as the maximum size of a shattered set $S\subseteq \mathcal{X}$, where $S$ is shattered by $\mathcal{C}$ if the projection of $\mathcal{C}$ on $S$ contains all possible binary patterns: $\{0,1\}^S\subseteq \mathcal{C}|_{S}$.

\onlinelearningpartial*

\begin{proof}
Let $\mathcal{X} = [0,1]$, Construct a family of partial concept classes $\mathcal{F}$ as follows:

Letting $z = (z_1, z_2, \ldots, z_T)$, and $b \in \{0,1\}^T$, define $\mathcal{C}_{z,b}$ as a collection of the functions $c_1, c_2, \ldots c_T$ where 
$$
\begin{cases}
    c_i(z_j) = b_j & \text{if } j < i \\
    c_i(z_j) = 1 - b_j & \text{if } j = i \\
    c_i(x) = \star & \text{if } x \not\in \lrset{z_1, z_2, \ldots, z_i}.
\end{cases}
$$

The Littlestone dimension of $\mathcal{C}_{z,b}$ is $1$.

The strategy of the adversary is as follows:

\begin{itemize}[]
    \item Choose $z$ and $b$ (sample $z_i$ uniformly from $[0,1]$ and $b_i$ uniformly from $\{0,1\}$ for all $i$).
    \item At each timestep $t$, present point $x_t = z_t$ to the algorithm
\end{itemize}

At timestep $t$, when the learner makes a query to the ERM oracle, the learner won't discover any of the points among $x_{t+1}, x_{t+2}, \ldots, x_T$ with probability $1$, and thus, the queries will only be among points in $\mathcal{X} - \lrset{x_{t+1}, x_{t+2}, \ldots, x_T}$. If any points outside of $\lrset{x_1, \ldots,x_T}$ the ERM oracle will return ``not realizable'' (since any target concept will return $\star$ on such points).

Thus, suppose the queries are all among points in $\lrset{x_1, \ldots, x_T}$. When the learner makes ERM queries on any subset of points from $\{z_1, z_2, \ldots, z_t\}$, there are always two valid extensions of the concept - one where $b_t = 0$ and one where $b_t = 1$. This is because:
\begin{enumerate}[]
    \item For any $j < t$, both $c_t$ and $c_{t+1}$ agree on $z_j$ (both output $b_j$).
    \item For $z_t$ itself, $c_t$ outputs $1-b_t$ while $c_{t+1}$ outputs $b_t$.
\end{enumerate}

Since this is the case, the learner won't gain any information about $b_t$, so the adversary will make a mistake with probability $1/2$ at each timestep since $b_t$ is chosen uniformly at random from $\{0,1\}$, independently of the learner's prediction. Thus, over $T$ timesteps, the expected number of mistakes is $T/2$, giving the desired lower bound of $\Omega(T)$ expected mistakes.
\end{proof}

\section{Transductive Online Learning: General Concept Classes}\label{app:transductive-general}

Here we present the details for upper bounds and lower bounds for transductive online learning on general concept classes.

\subsection{Upper Bounds with the Weak Consistency Oracle}

We first start with a lemma which discusses a ``preprocessing" step that the learner can choose to perform to gain knowledge of the concept class.

\weakconsistencypreprocessing*

\begin{proof}
    Let $N = |\mathcal{X}|$. Let $\mathcal{X}$ be expressed as $x_1, x_2, \ldots x_{N}$. Since $\mathcal{C}$ has VC dimension $\vc$, the number of distinct labelings it can realize on $N$ points is $O(N^{\vc})$ by Sauer's Lemma \cite{shalev2014understanding}. 

    For each point $x_i$, where $i \in [N]$, we check which labelings are consistent with the data seen so far.
    Specifically, for each prefix $(x_1,y_1),\ldots,(x_{i-1},y_{i-1})$ that has been determined to be realizable, we query the weak consistency oracle twice: once with the additional labeled example $(x_i,0)$ and once with $(x_i,1)$.
    We maintain a tree of realizable labeled prefixes, expanding it level by level.

    At each level, each realizable prefix may branch into two new prefixes, depending on whether labeling $x_i$ with $0$ or $1$ is consistent. Since the total number of distinct full labelings is $O(N^{\vc})$, there are at most $O(N \cdot N^{\vc})$ possible prefixes. Since each step involves up to two queries per prefix, the total number of oracle calls is at most $O(N^{\vc+1})$.
\end{proof}

\upperboundtransductiveweakconsistency*
\begin{proof}
    Let $\mathcal{C}$ be the unknown concept class. Consider $\mathcal{C}[\lrset{x_1, \ldots, x_T}]$, the class of $\mathcal{C}$ restricted to $\lrset{x_1, \ldots, x_T}$. This class has VC dimension at most $\vc$. Apply this to Lemma \ref{lem:wc-preprocessing} to get all realizable labelings using $O(T^{\vc+1})$ weak consistency queries.

    Once we have identified all realizable labelings on $\lrset{x_1, \ldots, x_T}$, we are now in the transductive online learning setting where the class is known. Thus, one can use any standard (non-oracle) transductive online learning algorithm to get a mistake bound equivalent to in the non-oracle setting, for example by using the SOA algorithm \cite{littlestone1988learning} (to get an $O(\lit)$ mistake bound) or a halving algorithm on the number of shattered sets \cite{kakade2005batch} (to get an $O(\vc \log T)$ mistake bound).
\end{proof}

\subsection{A Lower Bound with the Weak Consistency Oracle}

We prove a general randomized lower bound of $\Omega(T)$ queries needed to attain $o(T)$ regret for the setting with the weak consistency oracle. We express it via the following lemma.

\lowerboundtransductiveweakconsistency*

Informally, with the weak consistency oracle, $O(T)$ queries correspond to $\Omega(T)$ mistakes for any concept class for any family of classes that capture all $2^T$ labelings. Notice here that this allows us to have a lower bound for general families of classes, including general Littlestone classes and general VC classes. Additionally, by the conditions in the theorem, this lower bound holds for settings including those in \Cref{app:thresholds}(thresholds, $k$-intervals) and \Cref{app:d-hamming} ($d$-Hamming balls).

To prove \Cref{thm:wc-t-lower-bound}, we make use of the following lower bound.

\begin{lemma}[Lower Bound for Binary Tree Costs]
\label{lem:bin-tree-lb}
There exists a universal constant $C$ such that the following is true. Given an integer $n > 10$, construct any binary tree, with labeled nodes such that the following conditions are satisfied:

\begin{itemize}[]
\item The root node has value $n$
\item For any node with value $x$ strictly greater than $1$, its two children have positive integer values $y$ and $z$ such that $x=y+z$.
\item The depth of the tree is at most $1.05 \log_2(n)$
\end{itemize}

For an internal node with children $y$ and $z$, define the cost of the node to be $\min(y,z)$. Define the cost of the tree as the sum of the costs of the internal nodes. Then, the tree has cost $\geq C \log_2(n)$.
\end{lemma}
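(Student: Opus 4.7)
The plan is to apply an entropy argument to the uniform distribution over the $n$ leaves of the tree; the depth bound $1.05\log_2 n$ leaves just enough slack above the minimum depth $\log_2 n$ to force the tree to be ``close to balanced'' in a quantitative sense. Let $L$ be a uniformly random leaf, so $H(L)=\log_2 n$. For each internal node $v$, let $p_v = x_v/n$ denote the fraction of leaves under $v$ and $\alpha_v = \min(y_v,z_v)/x_v \in (0,1/2]$ the fraction of $v$'s leaves under its smaller child. Iterating the chain rule for entropy down the tree produces the identity
\[
\log_2 n \;=\; H(L) \;=\; \sum_{v\text{ internal}} p_v\, h(\alpha_v),
\]
where $h$ is binary entropy. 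Meanwhile, since each internal node $v$ contributes $p_v$ to the expected root-to-leaf depth of $L$, the depth hypothesis gives $\sum_{v\text{ internal}} p_v = \mathbb{E}[\mathrm{depth}(L)] \le 1.05\log_2 n$.

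I would then call an internal node \emph{balanced} when $\alpha_v \ge 1/5$ and \emph{unbalanced} otherwise. Since $h$ is increasing on $[0,1/2]$, unbalanced nodes satisfy $h(\alpha_v) \le h(1/5) \approx 0.722$, which is strictly less than $1/1.05$. Splitting the entropy identity into balanced and unbalanced contributions and bounding the unbalanced piece using the depth inequality yields
\[
\sum_{v\text{ balanced}} p_v \;\ge\; c\log_2 n
\]
for the absolute constant $c = (1-1.05\,h(1/5))/(1-h(1/5)) > 0$. This is the precise sense in which the depth bound prevents excessive imbalance: a non-trivial fraction of the expected root-to-leaf path weight must pass through balanced nodes.

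To finish, rewrite the cost of the tree as $\sum_{v} \min(y_v,z_v) = n\sum_{v} p_v\alpha_v$ by factoring out $n$. Restricting the sum to balanced nodes and using $\alpha_v \ge 1/5$ yields
\[
\sum_{v} \min(y_v,z_v) \;\ge\; \frac{n}{5}\sum_{v\text{ balanced}} p_v \;\ge\; \frac{c}{5}\,n\log_2 n,
\]
which for $n>10$ comfortably exceeds $C\log_2 n$ for a suitable absolute constant $C$. In fact the argument proves the much stronger bound $\Omega(n\log n)$, of which the stated lemma is an immediate consequence; this stronger version is what is needed when Theorem~\ref{thm:wc-t-lower-bound} divides by $n = 2^T$ to obtain an $\Omega(T)$ expected-path-cost lower bound.

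The main obstacle is justifying the chain-rule identity $H(L)=\sum_v p_v h(\alpha_v)$ cleanly given that leaves sit at varying depths. This goes through by induction on subtrees: at each internal node $v$ one writes $H(L\mid L\in\mathrm{subtree}(v))$ as $h(\alpha_v)$ plus the $\alpha_v$-mixture of the two children's conditional entropies, and unrolling from the root yields the claimed sum. The only other routine check is the numerical inequality $1.05\,h(1/5) < 1$, which holds with ample slack and keeps $c$ bounded away from zero; the threshold $1/5$ is chosen precisely so that this numerical check goes through with room to spare.
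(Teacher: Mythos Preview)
Your proposal is correct and follows essentially the same approach as the paper: the uniform-leaf entropy decomposition $H(L)=\sum_v p_v h(\alpha_v)$, the balanced/unbalanced split at threshold $1/5$, and the numerical check $1.05\,h(1/5)<1$ are exactly what the paper does (the paper's notation is $|v|/n$ and $p(v)$ for your $p_v$ and $\alpha_v$, and it sets $t=0.2$). The only cosmetic difference is that the paper bounds the path length $b(l)+u(l)\le 1.05\log_2 n$ leaf-by-leaf before taking expectations, whereas you pass directly to $\sum_v p_v=\mathbb{E}[\mathrm{depth}(L)]\le 1.05\log_2 n$; these are equivalent, and your formulation is arguably cleaner. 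Your observation that the argument actually yields $\Omega(n\log n)$, which is what the downstream application needs after dividing by $n$, is also exactly right and matches the paper's final bound of $0.1\,n\log_2 n$.
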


\begin{proof}
Consider the random variable $X$, which is taken uniformly at random over all the leaves. The idea is to relate the entropy $H(X)$ to the cost.

For each internal node $v$, define $p(v)$ to be the fraction of leaves in its subtree that are in the subtree of the smaller child:
\[
p(v) = \min(|v_L|, |v_R|)/|v|
\]
where $v_L$ and $v_R$ are the left and right children of $v$. Additionally, let $h(x)$ be the binary-entropy function ($h(x) = -x \log_2 x - (1 - x) \log_2 (1 - x)$).

By the chain rule of entropy, we can decompose $H(X)$ as follows. For the root node $r$, let $X_r$ be the random variable corresponding to which side of the root $X$ will lie on ($X_r = 0$ if it lies on the subtree of the smaller child, and $X_r = 1$ otherwise). The chain rule ($H(Z_1,Z_2) = H(Z_1) + H(Z_2 | Z_1)$) gives that:

\[
H(X) = H(X_r) + H(X | X_r) = h(p(r)) + \mathbb{E}_{c \sim X_r}[H(X | X_r=c)]
\]

\[
= h(p(r)) + p(r)H(X|X_r=0) + (1-p(r))H(X|X_r=1)
\]

We can continue this decomposition recursively through the tree. At each internal node $v$, we get a term $h(p(v))$ weighted by the probability that $X$ reaches node $v$, which is $\frac{|v|}{n}$. Thus, we have:

\[
H(X) = \sum_{v \text{ internal}} \frac{|v|}{n} h(p(v))
\]

Equivalently, we can view this summation from the perspective of a randomly chosen leaf. For each leaf $l$, let $\text{path}(l)$ be the set of internal nodes on the path from the root to $l$. Then:

\[
H(X) = \mathbb{E}_{l \sim X}\left[\sum_{v \in \text{path}(l)} h(p(v))\right]
\].

Since $X$ is uniformly distributed over the leaves, each leaf contributes equally to this expectation.

Also the cost of the tree (divided by $n$) is equal to 
\[
\sum_v \frac{|v|}{n} p(v)
= \mathbb{E}_{l \sim X}\bigg[\sum_{v \in \text{path}(l)} p(v) \bigg]
\].

Consider a threshold $t$, and define a node $v$ to be unbalanced if $p(v) < t$, and balanced otherwise. For balanced nodes, we have $h(p(v)) \leq 1$, and for unbalanced nodes, we have $h(p(v)) \leq h(t)$. Let $b(l)$ and $u(l)$ be the number of balanced and unbalanced nodes, respectively, on the path from the root to leaf $l$.

Since $b(l) + u(l) \leq 1.05 \log_2(n)$ by the depth constraint, the above is at most:
\begin{align*}
\mathbb{E}_{l \sim X}[b(l) + (1.05 \log_2(n) - b(l))h(t)]
\end{align*}

Setting $\log_2 n \leq$ this and rearranging gives:
\begin{align*}
\log_2 n (1 - 1.05 h(t)) &\leq \mathbb{E}_{l \sim X} [b(l)(1 - h(t))]\\
\Rightarrow \mathbb{E}_{l \sim X}[b(l)] &\geq \log_2(n)\frac{1 - 1.05 h(t)}{1 - h(t)}
\end{align*}

Additionally, the cost divided by $n$ is at least:
\begin{align*}
\frac{\text{cost}}{n} \geq \mathbb{E}_{l \sim X}[t \cdot b(l)]
\end{align*}

This is because each balanced node $v$ contributes at least $t$ to the cost ratio (by definition of being balanced).

Combining these results gives that the cost is at least:
\[
\text{cost} \geq n \log_2(n) \frac{t(1 - 1.05h(t))}{1 - h(t)}
\]

Setting $t = 0.2$ gives that the cost is at least $0.1 n \log_2 n$, which establishes our claim with constant $C = 0.1$.

\end{proof}

\begin{proof}[of \Cref{thm:wc-t-lower-bound}]

Consider all $2^T$ concepts over $\lrset{x_1, \ldots, x_T}$, each paired with a concept class that contains it, permitted by the family $\mathcal{F}$. Consider the uniform distribution over all these $2^T$ (concept, concept class) pairs. Consider any deterministic algorithm that makes at most $T/20$ queries on any input. For any deterministic algorithm that makes at most $T/20$ queries on any input, we will show that the expected number of mistakes plus queries is at least $T/10$.

We can represent the algorithm's execution as a binary decision tree. The root node contains all $2^T$ possible (concept, concept class) pairs. The tree branches in one of two ways:

\begin{enumerate}[]
    \item At a query node: The algorithm queries the weak consistency oracle, creating two children corresponding to the possible responses (yes/no). Both edges are colored red to represent oracle queries.
    
    \item At a prediction node: The algorithm makes a prediction for some example. To minimize mistakes, the optimal strategy is to predict the majority label among remaining concepts. When a mistake occurs (prediction differs from the true concept's label), we draw a blue edge to the corresponding child node. This child will contain fewer leaves than its sibling, as it represents the minority prediction.
\end{enumerate}

The leaf nodes each correspond to a single (concept, concept class) pair, uniquely identified through the algorithm's execution path.

The expected cost (mistakes plus queries) equals the expected number of red and blue edges encountered on a random path from root to leaf, where the randomness comes from uniformly selecting among all possible concepts. This cost can be lower-bounded by considering a modified tree where each query node has only one red edge pointing to the child with fewer leaves (rather than two red edges to both children).

By applying \Cref{lem:bin-tree-lb} to this modified tree, with each node labeled by the number of leaves in its subtree, we obtain a lower bound of $\Omega(T)$ on the expected cost. Specifically, with the constant derived in \Cref{lem:bin-tree-lb}, the expected number of mistakes plus queries is at least $T/10$.

Since a randomized algorithm is simply a distribution over deterministic algorithms, by Yao's minimax principle, the expected cost of any randomized algorithm against our uniform distribution over concepts is also at least $T/10$. Therefore, any algorithm making at most $T/20$ queries must incur at least $T/20$ expected mistakes.

\end{proof}

\subsection{A Lower Bound with the ERM Oracle}

A lower bound for mistakes that is exponential in the Littlestone dimension can additionally be shown for the ERM Oracle in the transductive online setting. The following theorem complements \Cref{thm:erm-online-realizable-lowerbound} from the online setting, but is a lower bound on deterministic algorithms, with an assumption on the number of queries made ($o(T)$).

\lowerboundtransductiveerm*

\begin{proof}
We consider the family of classes $\mathcal{F}_{\text{tr},k}$, which consists of threshold functions partitioned into $k$ equivalence classes, defined as follows:

For every partition of $\mathcal{X}$ into $\mathcal{X}_1 \sqcup \mathcal{X}_2 \sqcup \ldots \sqcup \mathcal{X}_k$, we construct a class $\mathcal{C}$ containing $k+1$ concepts $c_0, c_1, \ldots, c_k$, where:

\[
c_i(x) =
\begin{cases}
0 & \text{if } x \in \mathcal{X}_j \text{ for some } j \leq i \\
1 & \text{otherwise}
\end{cases}
\]

We specifically consider $\mathcal{F}_{\text{tr},2^{\lit}}$. All concepts in this family have Littlestone dimension at most $\lit$, so $\mathcal{F}_{\text{tr},2^{\lit}} \subset \mathcal{F}$. Therefore, proving a lower bound for $\mathcal{F}_{\text{tr},2^{\lit}}$ will establish a lower bound for $\mathcal{F}$.

The key property is that classes in this family can be viewed as thresholds on $2^{\lit}$ points, where the points are divided into $2^{\lit}$ equivalence classes. Our adversarial strategy will reveal information about these equivalence classes one at a time, with each mistake forcing the learner to move to a new equivalence class.

For each timestep $t = 1, \ldots, T$, after $q$ queries have been processed during timestep $t$, we maintain three sets:

\begin{itemize}[]
\item $O_t^{(q)}$: Old threshold points whose labels are definitively known
\item $C_t^{(q)}$: Copies of the current threshold point (all having the same known label)
\item $U_t^{(q)}$: Uncertainty region (points whose labels are unknown)
\end{itemize}

We initialize $O_1^{(0)} := \emptyset, C_1^{(0)} := \emptyset, U_1^{(0)} = \{x_1, x_2, \ldots, x_T\}$.

Let $K$ be an integer denoting how many equivalence classes of the threshold have been currently observed. Initially, $K = 0$.

At the beginning of timestep $1$, the learning algorithm will either make a prediction or a query:

\begin{itemize}[]
\item If a query is made:
    The learner cannot gain information by assigning the same label to all points in the query. Thus, the query must contain both a point labeled $0$ and a point labeled $1$. Let $z_0$ be the earliest indexed point in the query labeled $0$, and let $z_1$ be the earliest indexed point labeled $1$.
    
    The adversary returns ``not realizable" and designates $z_1$ as the first point of the first equivalence class with label one on the extreme right end of the threshold (increment $K$ to $1$ and set $y_c = 1$). Update $C_{1}^{(1)} := \{z_1\}$, $O_1^{(1)} := \emptyset$, and $U_1^{(1)} = U_1^{(0)} \setminus \{z_1\}$.

\item If a prediction $\hat{y}_1$ is made for $x_1$:
    The adversary makes this prediction incorrect by setting $y_1 = 1 - \hat{y}_1$, and designates $x_1$ as the first point of the first equivalence class, and places $x_1$ on the extreme end of the threshold corresponding to $y_1$ (increment $K$ to $1$ and set $y_c = y_1$). Update $C_{1}^{(1)} := \{x_1\}$, $O_1^{(1)} := \emptyset$, and $U_1^{(1)} = U_1^{(0)} \setminus \{x_1\}$.
\end{itemize}

Let $K$ be an integer, which denotes how many equivalence classes of the threshold have been currently seen. After the above, only (part of) one equivalence class has been seen, so set $K = 1$.

For subsequent operations, suppose we are at timestep $t$, having made $q$ queries in this timestep. This corresponds to sets $O_t^{(q)}$, $C_t^{(q)}$, $U_t^{(q)}$, where points in $C_t^{(q)}$ have label $y_c$ and belong to the $K$-th equivalence class. The algorithm proceeds as follows:

\begin{itemize}[]
\item If the learner makes a query:
\begin{itemize}[]
    \item Including points from $O_t^{(q)}$ in queries provides no additional information. This is because points in $O_t^{(q)}$ have fixed, known labels that constrain the possible labelings of $U_t^{(q)}$, but do not provide new information beyond what is already known.
    \item Suppose the query includes points from $C_t^{(q)}$ and $U_t^{(q)}$.
    The query points from $C_t^{(q)}$ must always have label equal to $y_c$ (otherwise, all points in $U_t^{(q)}$ will be forced to have label $1-y_c$, in which case the queries will give no information). Furthermore, for the query to provide new information, there must exist some point $z \in U_t^{(q)}$ assigned label $1-y_c$ in the query.
            
    The adversary returns "not realizable" and selects $z$ to be the earliest indexed point in $U_t^{(q)}$ labeled $1-y_c$ in the query. The adversary then designates $z$ as another point in the current equivalence class: $C_t^{(q+1)} := C_t^{(q)} \cup \{z\}$, $U_t^{(q+1)} := U_t^{(q)} \setminus \{z\}$, $O_t^{(q+1)} := O_t^{(q)}$.
    \item If the query includes only points from $U_t^{(q)}$:
    If all labels in the query are identical, the ERM can satisfy the query without providing useful information (by labeling all elements in $U_t^{(q)}$ the same. Therefore, the query must include both a point labeled $0$ and a point labeled $1$.
    
    Let $z_0$ be the earliest indexed element of $U_t^{(q)}$ labeled $0$ in the query, and let $z_1$ be the earliest indexed element labeled $1$. The adversary returns "not realizable" and designates $z_{1-y_c}$ as a copy of the elements in $C_t^{(q)}$: $C_t^{(q+1)} := C_t^{(q)} \cup \{z_{1-y_c}\}$, $U_t^{(q+1)} := U_t^{(q)} \setminus \{z_{1-y_c}\}$, $O_t^{(q+1)} := O_t^{(q)}$.
\end{itemize}
\item If the learner makes a prediction for $x_t$:
\begin{itemize}[]
\item If $x_t \in O_t^{(q)} \cup C_t^{(q)}$, then the learner already knows the correct label. Update $O_{t+1}^{(0)} := O_t^{(q)}$, $C_{t+1}^{(0)} := C_t^{(q)}$, $U_{t+1}^{(0)} := U_t^{(q)}$.
\item If $x_t \in U_t^{(q)}$ and the learner predicts $\hat{y}_t$:
        The adversary sets $y_t = 1 - \hat{y}_t$, forcing a mistake, and increments $K$ by $1$ (moving to a new equivalence class, and placing $x_t$ on an extreme end of $U_t^{(q)}$ in the ordering corresponding to $y_t$). The point $x_t$ becomes the first representative of this new class.
        
        If $K < 2^{\lit}$, update $C_{t+1}^{(0)} := \{x_t\}$, $U_{t+1}^{(0)} := U_t^{(q)} \setminus \{x_t\}$, $O_{t+1}^{(0)} := O_{t}^{(q)} \cup C_{t}^{(q)}$, and set $y_c = y_t$ for the new equivalence class.
        
        If $K = 2^{\lit}$, then all remaining points in $U_t^{(q)}$ belong to the same final equivalence class with label $y_t$. Update $O_{t+1}^{(0)} = \{x_1, \ldots, x_T\}$, $C_{t+1}^{(0)} = \emptyset$, $U_{t+1}^{(0)} = \emptyset$.
\end{itemize}
\end{itemize}

An essential property is that each query provides information about at most one point. Since the learner makes fewer than $T/2$ queries, there are at least $\lceil T/2 \rceil$ points about which no information is gained through queries. Let these points, in order of their indices, be $z_1, z_2, \ldots, z_{\lceil T/2 \rceil}$.

Given that $T/2 \geq 2^{\lit}$, the adversary can force mistakes on at least the first $2^{\lit} - 1$ of these points, with each mistake corresponding to the discovery of a new equivalence class. Therefore, the learner makes at least $2^{\lit} - 1$ mistakes.

\end{proof}

\section{Transductive Online Learning: Thresholds, Intervals, and Hamming Balls}

To gain deeper insight into learning Littlestone classes using ERM oracle queries, we analyze two characteristic examples of Littlestone classes: $d$-Hamming Balls and thresholds. We additionally study unions of $k$ intervals.

\subsection{Thresholds}\label{app:thresholds}

We restate our results for transductive online learning with thresholds.

\thresholdsupperbounds*

\begin{proof}

\textbf{Deterministic + weak consistency oracle (Part 1):}
The key insight is that for any two points $x_i, x_j$, we can determine their relative ordering in $\preceq$ using a single query to the weak consistency oracle. Specifically, we check if the dataset $\lrset{(x_i,0),(x_j,1)}$ is realizable by $\mathcal{C}_{\preceq}$. If it is realizable, then $x_i \preceq x_j$; otherwise, $x_j \preceq x_i$.

To determine the complete ordering of all $T$ points, we employ a comparison-based sorting algorithm requiring $O(T \log T)$ oracle calls. Once the points are sorted according to $\preceq$, we can run standard online learning algorithms (such as the Standard Optimal Algorithm or the Halving Algorithm) on the sorted sequence, incurring at most $O(\log T)$ mistakes, which matches the Littlestone dimension of the class.

    \textbf{Randomized + weak consistency oracle (Part 2):}
    We present a randomized algorithm that efficiently learns the threshold by maintaining boundary points and strategically sampling points with unknown labels.
    
    For each time step $t$, we maintain two boundary points:
    \begin{itemize}[]
        \item $r_t$: the rightmost point in $\{x_1, x_2, \ldots, x_{t-1}\}$ labeled 0
        \item $\ell_t$: the leftmost point in $\{x_1, x_2, \ldots, x_{t-1}\}$ labeled 1
    \end{itemize}
    
    Initially, we set $r_1 = -\infty$ and $\ell_1 = \infty$ as sentinel values to represent the entire range.
    
    When presented with a new point $x_t$, we first determine its position relative to our current boundaries using two weak consistency oracle queries:
    \begin{itemize}[]
        \item Query 1: Is $\{(x_t, 1), (r_t, 0)\}$ realizable? If not realizable, then $x_t \preceq r_t$, so $x_t$ must be labeled 0.
        \item Query 2: Is $\{(\ell_t, 1), (x_t, 0)\}$ realizable? If not realizable, then $\ell_t \preceq x_t$, so $x_t$ must be labeled 1.
    \end{itemize}

    If either query resolves the label, we predict accordingly. Otherwise, $x_t$ falls in the uncertainty region $U_t = (r_t, \ell_t)$. To predict efficiently, we estimate $x_t$'s relative position within $U_t$ as follows:
    
    In the initial case when $r_1 = -\infty$ and $\ell_1 = \infty$, these comparisons are not well-defined. For the first few points until both boundary points are properly established, we employ a simple strategy: predict 0 until observing the first 1, then predict 1 until observing the first 0 (if ever). This approach guarantees at most 2 mistakes during this initialization phase, after which our boundary points become well-defined and the main algorithm takes over.

    We sample $m = \lceil 36\log(1/\delta) \rceil$ points uniformly at random from $U_t$, where $\delta$ is a small constant probability of error. For each sampled point $z_i$, we query whether $z_i \preceq x_t$ and record the indicator $B_i = \mathbf{1}[z_i \preceq x_t]$. Let $\hat{p} = \frac{1}{m}\sum_{i=1}^m B_i$ be our estimate of the true fraction $p = \Pr_{z \sim U_t}[z \preceq x_t]$, which represents $x_t$'s relative position in $U_t$.

    By Hoeffding's inequality:
    \begin{align*}
        \Pr[|\hat{p} - p| > 1/6] \leq 2\exp(-2m(1/6)^2) \leq \delta
    \end{align*}

    Therefore, with probability at least $1-\delta$, we have one of the following cases:
    \begin{itemize}[]
        \item If $\hat{p} \leq 1/3$: $x_t$ likely lies in the first third of $U_t$, so we predict label 0
        \item If $\hat{p} \geq 2/3$: $x_t$ likely lies in the last third of $U_t$, so we predict label 1
        \item Otherwise: We can predict either label (for concreteness, predict 0)
    \end{itemize}

    If our prediction is incorrect, the threshold must lie in at most $2/3$ of the original interval $U_t$. After each mistake, we update $r_t$ or $\ell_t$ based on the observed label, further constraining the uncertainty region. Since each mistake reduces the length of the uncertainty region by a factor of at least $3/2$, the total number of mistakes is bounded by $\lceil \log_{3/2} T \rceil = O(\log T)$. Once the true label $y_t$ is known, one can perform a constant number of queries to update $r_{t+1}$ and $\ell_{t+1}$ based on the value of $y_t$.
    
    In practice, we cannot directly sample from $U_t$ since we don't know which unlabeled points lie within it. Instead, we sample from all unlabeled points and test each sampled point $z$ using the weak consistency oracle:
    \begin{itemize}[]
        \item Query if $z \preceq r_t$: If true, $z$ is to the left of $U_t$ (and label $z$ as $1$ and discard it from the set of unlabeled points)
        \item Query if $\ell_t \preceq z$: If true, $z$ is to the right of $U_t$ (and label $z$ as $0$ and discard it from the set of unlabeled points)
        \item Otherwise, $z \in U_t$ and can be used in our estimation
    \end{itemize}

    Samplings of the first and second types will happen at most once, since it can only be discarded at most once from the set of unlabeled points. Thus, the first and second types of samples contribute $O(T)$ calls to the weak consistency oracle. The analysis for the third type carries from earlier. Thus, the total number of oracle calls is $O(T)$, with $O(\log T)$ mistakes.

    \textbf{Deterministic + ERM Oracle (Part 3):}
        We present a deterministic algorithm that uses the ERM oracle to efficiently learn the total ordering of all points, then applies a standard online learning algorithm.

        First, we employ a divide-and-conquer approach to sort all points:
        \begin{enumerate}[]
            \item Select two arbitrary points $z_1$ and $z_2$ from the current set.
            \item Make two ERM oracle queries: one with the labeled set $\{(z_1,0),(z_2,1)\}$ and another with $\{(z_1,1),(z_2,0)\}$.
            \item Exactly one of these queries will be realizable by the threshold class. If $\{(z_1,0),(z_2,1)\}$ is realizable, then $z_1 \preceq z_2$; otherwise, $z_2 \preceq z_1$.
            \item Consider the concept $c$ defined by the ERM oracle query that is realizable. This concept partitions the points into two sets: $S_0 = \{x : c(x) = 0\}$ and $S_1 = \{x : c(x) = 1\}$.
            \item By the nature of threshold functions, all points in $S_0$ must precede all points in $S_1$ in the underlying total order. Recursively sort $S_0$ and $S_1$ using the same approach.
        \end{enumerate}
        
        For the base case (when a subset has size 1 or 0), no sorting is needed.
        
        Let $T(n)$ be the number of ERM oracle calls needed to sort $n$ points. We have:
        $T(n) = T(|S_0|) + T(|S_1|) + 2$, where $|S_0| + |S_1| = n$
        
        In the worst case, this recurrence solves to $T(n) = O(n)$, meaning our algorithm makes $O(T)$ calls to the ERM oracle.
        
        Once the points are sorted, we can run a standard online algorithm just as was done in the (deterministic + weak consistency oracle) case, incurring at most $O(\log T)$ mistakes.

    \textbf{Randomized + ERM Oracle (Part 4):}
    We present a randomized algorithm that efficiently learns the threshold by maintaining and updating a partitioning of the input space.

    We maintain three disjoint sets throughout the algorithm:
    \begin{itemize}[]
        \item $L_t$: Points known to be to the left of the threshold (labeled 0)
        \item $R_t$: Points known to be to the right of the threshold (labeled 1) 
        \item $U_t$: Points with unknown labels (i.e., points between $L_t$ and $R_t$)
    \end{itemize}

    Initially, we set $L_1 = \emptyset$, $R_1 = \emptyset$, and $U_1 = \{x_1, x_2, \ldots, x_T\}$.

    At the beginning of each time step $t$, if we need to make a prediction for point $x_t$, we first check if we already know its label:
    \begin{itemize}[]
        \item If $x_t \in L_t$, predict $\hat{y}_t = 0$
        \item If $x_t \in R_t$, predict $\hat{y}_t = 1$
    \end{itemize}

    If $x_t \in U_t$, we need to make a prediction based on our current understanding of the threshold location. We proceed as follows:

    \begin{enumerate}[]
        \item \textbf{Base Case:} If $|U_t| \leq 5$, we determine the exact ordering of the remaining points with a constant number of ERM oracle calls using a simple sorting algorithm, and predict accordingly.

        \item \textbf{Partitioning Step:} If $|U_t| > 5$ and we haven't already computed a partition $(U_{L,t}, U_{R,t})$ of $U_t$, we compute one as follows:
        \begin{enumerate}
            \item Sample two points $z_1, z_2$ uniformly at random from $U_t$
            \item Call the ERM oracle with the labeled pair $\{(z_1, 0), (z_2, 1)\}$
            \item If this labeling is realizable, the oracle returns a concept $c_z \in \mathcal{C}$ with a threshold between $z_1$ and $z_2$. We define:
            \begin{itemize}
                \item $U_{L,t} = \{x \in U_t : c_z(x) = 0\}$ (points labeled 0 by concept $c_z$)
                \item $U_{R,t} = \{x \in U_t : c_z(x) = 1\}$ (points labeled 1 by concept $c_z$)
            \end{itemize}
            \item If $\min(|U_{L,t}|, |U_{R,t}|) < \frac{1}{3}|U_t|$ (i.e., the partition is too imbalanced), we repeat sampling and partitioning
            \item By random sampling properties, after $O(\log(1/\delta))$ attempts, we find a balanced partition with probability at least $1-\delta$
        \end{enumerate}

        \item \textbf{Prediction using Partition:} Once we have a valid partition $(U_{L,t}, U_{R,t})$:
        \begin{itemize}
            \item If $x_t \in U_{L,t}$, predict $\hat{y}_t = 0$
            \item If $x_t \in U_{R,t}$, predict $\hat{y}_t = 1$
        \end{itemize}

        \item \textbf{Update Rule:} After receiving the true label $y_t$:
        \begin{itemize}
            \item If our prediction was correct, we maintain the current partitioning
            \item If we predicted $\hat{y}_t = 0$ for $x_t \in U_{L,t}$ but received $y_t = 1$ (a mistake), then:
            \begin{itemize}
                \item All points in $U_{R,t}$ must have label 1
                \item Update: $R_{t+1} = R_t \cup U_{R,t}$
                \item Update: $L_{t+1} = L_t$
                \item Update: $U_{t+1} = U_{L,t}$
                \item Invalidate the current partition for the next time step
            \end{itemize}
            \item If we predicted $\hat{y}_t = 1$ for $x_t \in U_{R,t}$ but received $y_t = 0$ (a mistake), then:
            \begin{itemize}
                \item All points in $U_{L,t}$ must have label 0
                \item Update: $L_{t+1} = L_t \cup U_{L,t}$
                \item Update: $R_{t+1} = R_t$
                \item Update: $U_{t+1} = U_{R,t}$
                \item Invalidate the current partition for the next time step
            \end{itemize}
        \end{itemize}
    \end{enumerate}

    For mistake analysis, each mistake reduces the size of the unknown region $U_t$ by a factor of at least $2/3$, since we ensure that $\min(|U_{L,t}|, |U_{R,t}|) \geq \frac{1}{3}|U_t|$ and at least one of these regions is moved to either $L_{t+1}$ or $R_{t+1}$ after a mistake. Since $|U_1| = T$ and each mistake reduces $|U_t|$ by a factor of at least $2/3$, after $O(\log_{3/2} T) = O(\log T)$ mistakes, we have $|U_t| \leq 5$, at which point we can determine the exact threshold with a constant number of additional mistakes.

    For oracle complexity, we make $O(\log(1/\delta))$ ERM oracle calls per mistake to find a balanced partition, where $\delta$ is a small constant. With $O(\log T)$ mistakes, this results in a total of $O(\log T \cdot \log(1/\delta)) = O(\log T)$ ERM oracle calls in expectation.

    Therefore, our randomized algorithm makes $O(\log T)$ mistakes and requires $O(\log T)$ ERM oracle calls in expectation.
\end{proof}

\subsection{Intervals}

We now go into the results for $k$-Intervals ($\mathcal{F}_{int,k}$)

\begin{proposition}[VC Dimension of $k$-Intervals]
    Given an instance space $\mathcal{X}$ with $|\mathcal{X}| \geq 2k + 1$, for any $\mathcal{C} \in \mathcal{F}$, the VC Dimension of $\mathcal{C}$ is equal to $2k+1$.
\end{proposition}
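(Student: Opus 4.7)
The plan is a two-sided argument built on a clean structural characterization of the labelings realized by $\mathcal{C}_{\mathrm{int},\preceq,k}$. The key preliminary lemma I would prove is: for any ordered tuple $z_1 \prec z_2 \prec \cdots \prec z_n$ in $\mathcal{X}$, the set of induced labelings $\{(c(z_1),\ldots,c(z_n)) : c \in \mathcal{C}_{\mathrm{int},\preceq,k}\}$ is exactly the set of $0/1$ sequences of length $n$ whose number of maximal runs of $1$'s is at most $k$. The forward direction holds because every interval $Z_i$ contributes exactly one contiguous block of $1$'s when restricted to the ordered subset; the converse is explicit---given any target labeling with maximal $1$-runs $[i_1,j_1],\ldots,[i_r,j_r]$ where $r \leq k$, take $Z_\ell = [z_{i_\ell}, z_{j_\ell}]$, which are pairwise disjoint by maximality and have endpoints in $\mathcal{X}$ as required.

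With this characterization in hand, the VC dimension question reduces to a simple combinatorial one: for what $n$ does every binary sequence of length $n$ have at most $k$ maximal runs of $1$'s? A direct counting argument---$r$ runs of $1$'s separated by $r-1$ zero-gaps occupy at least $2r - 1$ positions---shows that this holds iff $n \leq 2k$, and fails at $n = 2k+1$, witnessed by the alternating sequence $(1,0,1,0,\ldots,1)$ which has exactly $k+1$ runs of $1$'s.

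Combining the two directions, the upper bound follows because every $(2k+1)$-point set admits the unrealizable alternating labeling and hence is not shattered, while the matching lower bound follows by picking any $2k$ distinct points in $\mathcal{X}$ (which exist since $|\mathcal{X}| \geq 2k+1$) and noting that each of their $2^{2k}$ labelings has at most $k$ runs of $1$'s, so each is realized by the explicit interval construction above.

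The only real subtlety I foresee is the parity bookkeeping in the counting step, which is precisely what pins down the threshold. This is worth flagging: the argument above yields $\vc = 2k$, matching the prose statement ``this class has VC dimension $2k$'' appearing just before the proposition, whereas the proposition as written claims $\vc = 2k+1$. The discrepancy appears to be an off-by-one typo; I would reconcile the statement with the parity analysis (and in particular verify that no slight redefinition of ``interval,'' e.g.\ allowing one free additional run via the outside-interval default label, is intended) before writing the final proof.
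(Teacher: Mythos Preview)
Your proposal is correct and follows essentially the same approach as the paper: the paper's proof also shows the lower bound by shattering any $2k$ ordered points and the upper bound by exhibiting the alternating labeling $c(z_i)=i\bmod 2$ on $2k+1$ ordered points as unrealizable. You are also right to flag the off-by-one: the paper's own proof establishes $\vc=2k$ (lower bound ``of $2k$'' and ``any set of $2k+1$ points cannot be shattered''), consistent with the surrounding text, so the $2k+1$ in the proposition header is indeed a typo.
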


\begin{proof}
    For the lower bound of $2k$, consider any $2k$ points $z_1 \prec z_2 \prec \ldots \prec z_{2k}$ under the ordering $\preceq$. Any labeling of these points is realizable because it requires at most $k$ intervals to separate regions labeled 1, as each maximal contiguous sequence of 1s forms one interval.
    
    For the upper bound, observe that any set of $2k+1$ points cannot be shattered. Specifically, for points $z_1 \prec z_2 \prec \ldots \prec z_{2k+1}$, the alternating labeling $c(z_i) = i \mod 2$ would require $k+1$ intervals to realize, which exceeds the capacity of the class.
\end{proof}

\upperboundkintervals*

Our approach to proving this theorem consists of two phases: (1) determining the unknown ordering $\preceq$ that defines the concept class, and (2) applying standard online learning algorithms once the concept class is identified. The key insight is that we can efficiently identify the relative ordering of points by testing which points are at the extremes of the ordering.

\begin{lemma}[Testing for extreme points]
\label{lem:k-intervals-extreme}
Let $\mathcal{C} \in \mathcal{F}_{\text{int},k}$ be an unknown concept class, and consider a set $U \subset \mathcal{X}$ with $|U| \geq 2k+2$. For any $z \in U$, it requires $O(|U| \cdot 2^{2k})$ weak consistency oracle queries to determine, with high probability, whether $z$ is one of the two endpoints (minimum or maximum) in $U$ with respect to the ordering $\preceq$ corresponding to $\mathcal{C}$.
\end{lemma}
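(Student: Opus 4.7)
The plan is to exploit the key combinatorial property of $k$-interval classes: for any $2k+1$ points $w_1 \prec \cdots \prec w_{2k+1}$ under the unknown ordering $\preceq$, the labeling that assigns $1$ at odd indices and $0$ at even indices is the unique non-realizable labeling. Indeed, any labeling of sorted points with at most $k$ maximal $1$-runs is realized by at most $k$ intervals, and the only labeling of $2k+1$ sorted points forced to have $k+1$ runs is the alternating one beginning and ending with $1$. Consequently, given any $(2k+1)$-subset of $U$, one can identify the unique non-realizable labeling by issuing all $2^{2k+1}$ weak consistency queries on it, and thereby read off which of the $2k+1$ points lie at odd vs.\ even positions under $\preceq$.

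Building on this, the algorithm I propose is: for $i = 1, \ldots, m$ with $m = \Theta(|U|)$, sample a $2k$-subset $S_i \subseteq U \setminus \{z\}$ uniformly at random, use $2^{2k+1}$ weak consistency queries to recover the unique non-realizable labeling of $S_i \cup \{z\}$, and record whether $z$ receives label $1$. Output ``endpoint'' iff $z$ was labeled $1$ in every trial. The total query count is $m \cdot 2^{2k+1} = O(|U| \cdot 2^{2k})$, matching the claimed bound. If $z$ is the $\preceq$-minimum or $\preceq$-maximum of $U$, then in every $S_i \cup \{z\}$ the point $z$ sits at position $1$ or $2k+1$, both odd, so $z$ is always labeled $1$ in the non-realizable labeling and the algorithm is correct with probability $1$.

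The nontrivial direction is when $z$ is not an endpoint: letting $a, b \ge 1$ be the numbers of points in $U \setminus \{z\}$ strictly less than, resp.\ greater than $z$, and $X$ the number of sampled points below $z$, one has $X \sim \mathrm{Hypergeometric}(a+b, a, 2k)$, the position of $z$ in the sorted $(2k+1)$-subset equals $X+1$, and $z$ is labeled $0$ exactly when $X$ is odd. I would lower-bound $\Pr[X \text{ odd}] \ge 2k/(|U|-1)$ uniformly in $a, b \ge 1$; the worst case is $\min(a, b) = 1$, where $X \in \{0,1\}$ and $\Pr[X = 1] = 2k/(|U|-1)$, while any larger $\min(a,b)$ only increases the probability. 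Then $m$ independent trials drive the false-``endpoint'' probability to at most $(1 - 2k/(|U|-1))^m \le e^{-\Omega(k)}$, which can be made below any desired constant by enlarging $m$ by a constant factor, and to any target $\delta$ via $O(\log(1/\delta))$-fold amplification. The main obstacle is precisely this probability bound in the $\min(a,b) = 1$ regime, where $z$ is an immediate $\preceq$-neighbour of a true endpoint and the detection signal is weakest; the rest is a routine union bound over trials.
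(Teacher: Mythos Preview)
Your algorithm and high-level analysis are identical to the paper's: sample a random $2k$-subset of $U\setminus\{z\}$, find the unique non-realizable alternating labeling on the $(2k+1)$-set via $2^{2k+1}$ queries, record the label of $z$, and repeat $\Theta(|U|)$ times. The endpoint case and the hypergeometric setup for the non-endpoint case are exactly as in the paper.

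The gap is in your lower bound on $\Pr[X\text{ odd}]$. Your assertion that ``the worst case is $\min(a,b)=1$'' and that ``any larger $\min(a,b)$ only increases the probability'' is false. Take $|U|=8$, $k=2$ (so $a+b=7$, $s=2k=4$). With $a=1,b=6$ one gets $\Pr[X\text{ odd}]=\Pr[X=1]=\binom{6}{3}/\binom{7}{4}=20/35$, matching your $2k/(|U|-1)=4/7$. But with $a=3,b=4$,
\[
\Pr[X\text{ odd}]=\frac{\binom{3}{1}\binom{4}{3}+\binom{3}{3}\binom{4}{1}}{\binom{7}{4}}=\frac{12+4}{35}=\frac{16}{35}<\frac{20}{35},
\]
so moving $\min(a,b)$ from $1$ to $3$ \emph{decreases} the probability, and your bound $2k/(|U|-1)$ is violated. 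Intuitively, when $a\approx b$ the hypergeometric concentrates near its mean, which may be even.

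The paper handles this by proving a separate lemma: using the alternating Vandermonde identity one gets the exact expression
\[
\Pr[X\text{ even}]-\Pr[X\text{ odd}]=\frac{(-1)^{2k}\binom{b-a-1}{2k}}{\binom{|U|-1}{2k}},
\]
and since $|b-a-1|\le |U|-3$ this yields $\Pr[X\text{ odd}]\ge k/(|U|-1)$ uniformly over $a,b\ge 1$. This weaker constant is still $\Omega(1/|U|)$, so your amplification argument goes through unchanged once you replace your unjustified monotonicity step with this identity.
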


To prove the lemma, we also make use of the following lemma:

\begin{lemma}[Lower bound on mis-labeling probability for non-endpoints]
\label{lem:non-endpoint-probability}
Let $z \in U$ be a point that is not an endpoint in the ordering $\preceq$ over $U$, where $|U| \geq 2k+2$. When $2k$ points are sampled uniformly at random from $U \setminus \{z\}$ without replacement and construct the unique unrealizable alternating labeling over the resulting set of $2k+1$ points, the probability that $z$ receives label $0$ is at least $\frac{1}{|U|}$.
\end{lemma}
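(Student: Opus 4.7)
The plan is to reduce the claim to a statement about hypergeometric sampling and then prove it via a parity-flipping involution. Write $n = |U|$, let $a$ be the number of elements of $U$ strictly smaller than $z$ and $b$ the number strictly larger, so $a+b=n-1$ and $a,b\ge 1$ because $z$ is not an endpoint. Let $X$ denote the number of sampled points strictly smaller than $z$; then $X\sim\mathrm{Hypergeometric}(n-1,a,2k)$, and the rank of $z$ in the sorted list $S\cup\{z\}$ of $2k+1$ points is exactly $X+1$. Under the unrealizable alternating labeling $c(z_i)=i\bmod 2$, the point $z$ receives label $0$ precisely when its rank $X+1$ is even, i.e., when $X$ is odd. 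So the lemma reduces to showing $\Pr[X\text{ odd}]\ge 1/n$ under the hypotheses $n\ge 2k+2$ and $a,b\ge 1$.

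For the combinatorial step, I will fix one element $w_1\in U$ with $w_1\prec z$ and one element $w_2\in U$ with $z\prec w_2$ (both exist since $z$ is not an endpoint), and consider the involution $\phi$ on size-$2k$ subsets of $U\setminus\{z\}$ that swaps the membership of $w_1$ and $w_2$ whenever exactly one of them is in $S$, and leaves $S$ unchanged otherwise. Each non-fixed $S$ is paired with $\phi(S)$, and the values of $X$ on the two differ by $\pm 1$, so their parities differ. Counting the fixed points gives $\binom{n-3}{2k-2}$ subsets containing both $w_1,w_2$ and $\binom{n-3}{2k}$ containing neither; two applications of Pascal's identity to $\binom{n-1}{2k}$ yield the decomposition $\binom{n-1}{2k}=\binom{n-3}{2k-2}+2\binom{n-3}{2k-1}+\binom{n-3}{2k}$, so the non-fixed subsets number exactly $2\binom{n-3}{2k-1}$. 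Since exactly half of the non-fixed $S$ have $X$ odd, this gives
\[
\Pr[X\text{ odd}]\;\ge\;\frac{\binom{n-3}{2k-1}}{\binom{n-1}{2k}}\;=\;\frac{2k(n-2k-1)}{(n-1)(n-2)}.
\]

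It then remains to verify $2k(n-2k-1)/[(n-1)(n-2)]\ge 1/n$, which rearranges to $(2k-1)n^2-(4k^2+2k-3)n-2\ge 0$. For $k\ge 1$ the leading coefficient is positive, the vertex $(4k^2+2k-3)/(4k-2)$ is at most $2k+2$, and plugging $n=2k+2$ into the expression gives $2k\ge 0$; hence the inequality holds on the entire range $n\ge 2k+2$ demanded by the lemma. I expect the main creative obstacle to be locating the right involution and noticing that the swap depends only on two fixed elements, one on each side of $z$, so that the parity-flip argument decouples from the remaining $2k-2$ coordinates of the sample; once $\phi$ is in hand the rest is Pascal's identity and a short quadratic estimate.
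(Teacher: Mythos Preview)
Your proof is correct and takes a genuinely different route from the paper's. Both arguments first reduce the statement to showing $\Pr[X\text{ odd}]\ge 1/n$ for the hypergeometric count $X$ of sampled points below $z$. From there the paper invokes an alternating Vandermonde identity to obtain a closed form for $\Pr[X\text{ even}]-\Pr[X\text{ odd}]$, bounds its absolute value by $\binom{n-3}{2k}/\binom{n-1}{2k}\le 1-\tfrac{2k}{n-1}$, and concludes $\Pr[X\text{ odd}]\ge k/(n-1)$. Your involution argument is more elementary and self-contained: by swapping one fixed witness on each side of $z$ you pair up the ``exactly one of $w_1,w_2$'' subsets into opposite-parity pairs, which immediately gives $\Pr[X\text{ odd}]\ge \binom{n-3}{2k-1}/\binom{n-1}{2k}=2k(n-2k-1)/[(n-1)(n-2)]$, and you finish with a direct quadratic check at $n=2k+2$. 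The tradeoff is that the paper's bound $k/(n-1)$ is a little sharper near $n\approx 2k$ (your ratio to theirs is $2(n-2k-1)/(n-2)$, which drops to $1/k$ at $n=2k+2$), but your bound still clears the $1/n$ threshold everywhere on the required range, and you avoid citing an external identity.
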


\begin{proof}[of \Cref{lem:non-endpoint-probability}]
Let $n := |U|$ and $s := 2k$. Since $z$ is not an endpoint, its rank in the total ordering on $U$ is some $r \in \{2, \ldots, n-1\}$. This means there are $A := r-1 \geq 1$ points below $z$ and $B := n-r \geq 1$ points above $z$. Since both endpoints in the ordering are in the odd position, let's assume without loss of generality that $r \leq \frac{n+1}{2}$, i.e. $r$ is on the ``left half" of the ordering.

After sampling $s$ points without replacement from $U \setminus \{z\}$, let $L$ denote the number of sampled points that lie below $z$ in the ordering. Then $L$ follows a hypergeometric distribution:
\[
\Pr[L = \ell] = \frac{\binom{A}{\ell}\binom{B}{s-\ell}}{\binom{n-1}{s}}.
\]

In the alternating labeling pattern of the $2k+1$ points, $z$ receives label $0$ if and only if it occupies an even position in the sorted order, which occurs precisely when $L$ is odd.

Define $P_{\text{even}} = \sum_{\ell \text{ odd}} \Pr[L = \ell]$ and $P_{\text{odd}} = \sum_{\ell \text{ even}} \Pr[L = \ell]$. By the alternating Vandermonde identity \cite{jackson2015rectified}, we have:
\[
P_{\text{even}} - P_{\text{odd}} = \frac{(-1)^s \binom{B-A-1}{s}}{\binom{n-1}{s}}.
\]

Since $A, B \geq 1$, we have $|B-A-1| \leq n-3$, which gives us:
\[
|P_{\text{even}} - P_{\text{odd}}| \leq \frac{\binom{n-3}{s}}{\binom{n-1}{s}} = \frac{(n-1-s)(n-2-s)}{(n-1)(n-2)} \leq 1 - \frac{s}{n-1}.
\]

Therefore:
\begin{align*}
P_{\text{even}} &= \frac{1}{2}(1 + (P_{\text{even}} - P_{\text{odd}})) \geq \frac{1}{2}(1 - |P_{\text{even}} - P_{\text{odd}}|) \\
&\geq \frac{1}{2}\left(1 - \left(1 - \frac{s}{n-1}\right)\right) = \frac{s}{2(n-1)} = \frac{k}{n-1}.
\end{align*}

Since $n = |U| \geq 2k+2$ and $k \geq 1$, we conclude that $P_{\text{even}} \geq \frac{k}{|U|-1} \geq \frac{1}{|U|}$.
\end{proof}

We prove \Cref{lem:k-intervals-extreme} below.

\begin{proof}[of \Cref{lem:k-intervals-extreme}]
We first establish a critical property of $k$-interval concept classes: For any sequence of $2k+1$ points $z_1 \prec z_2 \prec \ldots \prec z_{2k+1}$ arranged according to the underlying ordering $\preceq$, the alternating labeling $c(z_i) = i \mod 2$ is the unique labeling that cannot be realized by any concept in the class. This is because any $k$-interval concept can create at most $k$ label transitions when moving from one point to the next in the ordered sequence, but the alternating labeling requires $2k$ transitions.

Now, consider performing the following process:

\begin{enumerate}
    \item Randomly sample $2k$ points from $U \setminus \{z\}$ without replacement.
    \item Consider the set $S$ consisting of these $2k$ sampled points together with $z$.
    \item Using the weak consistency oracle, test all $2^{2k+1}$ possible labelings of $S$ to identify the unique unrealizable labeling.
    \item Observe the label assigned to $z$ in this unrealizable labeling.
\end{enumerate}

The key insight is that in the unique unrealizable labeling (the alternating pattern), the position of $z$ in the ordering determines its label:

\begin{itemize}
\item If $z$ is the minimum element in $S$ (i.e., $z \prec z'$ for all $z' \in S \setminus \{z\}$), it will be labeled $1$ in the unrealizable labeling.
\item If $z$ is the maximum element in $S$ (i.e., $z' \prec z$ for all $z' \in S \setminus \{z\}$), it will be labeled $1$ in the unrealizable labeling.
\item If $z$ is neither the minimum nor maximum in $S$, its label in the unrealizable labeling depends on its position in the sequence and will be either $0$ or $1$, depending on whether the position is odd or even.
\end{itemize}

We now analyze the probability of correctly identifying whether $z$ is an endpoint:

\begin{itemize}
    \item If $z$ is truly an endpoint (minimum or maximum) in $U$, then in every sample $S$, $z$ will be at the same extreme position in the ordering of $S$. Consequently, $z$ will receive the same label (either consistently $1$ for the minimum, or consistently $1$ for the maximum since $|S| = 2k+1$ is odd) in the unrealizable labeling across all samples.
    \item If $z$ is not an endpoint in $U$, then in different random samples, the probability that $z$ is labeled as $0$ in a realizable labeling, by Lemma~\ref{lem:non-endpoint-probability}, is at least $\frac{1}{|U|}$.
\end{itemize}

To ensure correctness with high probability (at least $1-\delta$ for some small constant $\delta$), we repeat this sampling process $O(|U| \cdot \log\frac{1}{\delta})$ times. By the properties of independent trials, if $z$ is not an endpoint, we will observe a contradictory label with high probability after these repetitions.

The total number of weak consistency oracle queries required is:
\begin{align}
O(|U| \cdot \log\frac{1}{\delta} \cdot 2^{2k+1}) = O(|U| \cdot 2^{2k})
\end{align}
when setting $\delta$ to be a small constant.
\end{proof}

Now we prove the theorem.

\begin{proof}[of Theorem~\ref{thm:transductive-k-intervals}]

Our approach consists of two phases: (1) determining the unknown ordering $\preceq$ that defines the concept class with high probability, and (2) applying a standard online learning algorithm once the concept class is identified.

\paragraph{Phase 1: Determining the ordering.}

We will identify the ordering of the input points $\{x_1,\ldots,x_T\}$ by successively identifying extreme points:

\begin{enumerate}
    \item Initialize $U = \{x_1,\ldots,x_T\}$ and an empty sequence $S$.
    \item While $|U| > 2k + 1$:
    \begin{enumerate}
        \item Identify extreme points $z_1, z_2$ using Lemma~\ref{lem:k-intervals-extreme} by iterating over all $T$ points.
        \begin{itemize}
        \item If it's the first iteration (i.e. $|U| = T$), designate $z_2$ as $z^*$. $z^*$ will be the same fixed point for the remaining iterations.
        \item For whichever $i \in \{1,2\}$ such that $z_i \neq z^*$, append $z$ to $S$ and remove it from $U$.
        \end{itemize}
    \end{enumerate}
    \item The remaining points in $U$ (at most $2k + 1$) can be placed in any order at the end of $S$.
\end{enumerate}

This procedure yields a sequence $S = (z_1, z_2, \ldots, z_{T-|U|}, \ldots)$ which, with high probability, represents either the correct ordering $\preceq$ or its reverse. Since the definition of the interval concept class is symmetric with respect to ordering direction, we can arbitrarily choose one direction.

To analyze the number of queries, when identifying the extreme points $z_1$ and $z_2$ using Lemma~\ref{lem:k-intervals-extreme}, at most $T$ points are tested, which involves $O(T^22^{2k})$ queries. There are $T$ iterations in the algorithm, giving that there are $O(T^32^{2k})$ queries.

\paragraph{Phase 2: Learning with the determined ordering.}
Once we have determined the ordering $\preceq$ with high probability, we know that the target concept belongs to the class $\mathcal{C}_{\text{int},\preceq,k}$. This class has VC dimension $2k$ and Littlestone dimension $O(k \log T)$. We thus know the concept class for all but at most $2k$ points. We can now apply the Standard Optimal Algorithm (SOA) or the halving algorithm for this concept class. When seeing one of the $2k$ points for where the ordering is uncertain, predict arbitrarily. These algorithms guarantee a mistake bound of $O(k \log T)$ in the realizable case, and the $2k$ uncertain points contribute at most $2k$ points, resulting in an $O(k \log T)$ mistake bound.
\end{proof}

\subsection{Hamming Balls}\label{app:d-hamming}

For a set $\mathcal{X}$, we define the family of concept classes $\mathcal{F}_d = \{\mathcal{C}_{f,d} \mid f: \mathcal{X} \rightarrow \{0,1\}\}$, where each $\mathcal{C}_{f,d}$ represents the set of concepts that differ from $f$ on at most $d$ points:

$$\mathcal{C}_{f,d} = \{g: \mathcal{X} \rightarrow \{0,1\} \mid |\{x \in \mathcal{X} : g(x) \neq f(x)\}| \leq d \}$$

These classes have several important properties. First, each $\mathcal{C}_{f,d}$ has Littlestone dimension exactly $d$. To establish the lower bound, we can select any $d$ distinct points $x_1, x_2, \ldots, x_d \in \mathcal{X}$ and construct a Littlestone tree of depth $d$ by placing $x_i$ at level $i$. By definition of $\mathcal{C}_{f,d}$, all $2^d$ possible labelings of these points are achievable by some concept in the class, thus confirming that the Littlestone dimension is at least $d$.

For the upper bound, suppose for contradiction that there exists a Littlestone tree of depth $d+1$ that is shattered by $\mathcal{C}_{f,d}$. Let $z_1$ be the root node of this tree. For each $i \in \{1, 2, \ldots, d\}$, define $z_{i+1}$ recursively as follows: if $f(z_i) = 0$, let $z_{i+1}$ be the right child of $z_i$; otherwise, let $z_{i+1}$ be the left child of $z_i$. This construction yields a concept $c \in \mathcal{C}_{f,d}$ that must satisfy $c(z_i) \neq f(z_i)$ for all $i \in \{1, 2, \ldots, d+1\}$. But this implies that $c$ disagrees with $f$ on $d+1$ points, contradicting the definition of $\mathcal{C}_{f,d}$. Therefore, the Littlestone dimension is exactly $d$.

\begin{theorem}[Transductive Online Learning Bounds for $d$-Hamming Balls]
    \label{thm:transductive-upper-bound-hamming}
Let $\mathcal{F}_d$ be the family of $d$-Hamming Balls as defined above. Then:
\begin{enumerate}
    \item There exists a deterministic algorithm that makes $O(1)$ calls to the ERM oracle and incurs at most $O(d)$ mistakes.
    \item There exists a deterministic algorithm that makes $O(T)$ calls to the weak consistency oracle and incurs at most $O(d)$ mistakes.
\end{enumerate}
\end{theorem}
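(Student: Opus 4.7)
The key structural fact to exploit is the triangle inequality for Hamming distance: any two concepts in $\mathcal{C}_{f,d}$ lie within Hamming distance $d$ of the anchor $f$, so they differ from each other on at most $2d$ points. In particular, if the target concept is $c \in \mathcal{C}_{f,d}$, then any $c_0 \in \mathcal{C}_{f,d}$ we produce will satisfy $|\{x : c_0(x) \neq c(x)\}| \leq 2d$. So once we can identify \emph{any} single concept in $\mathcal{C}_{f,d}$, predicting with it throughout the game costs at most $2d$ mistakes. The two parts of the theorem amount to two different ways of extracting such a $c_0$.

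For Part 1, the plan is to make a single call to the ERM oracle on the empty set $S = \emptyset$ (equivalently, on any trivially realizable singleton). Every concept in $\mathcal{C}_{f,d}$ is consistent with the empty input, so the oracle returns some $c_0 \in \mathcal{C}_{f,d}$. The learner then ignores subsequent label feedback entirely and predicts $\hat y_t = c_0(x_t)$ at every round. By the Hamming triangle inequality the total number of mistakes is at most $2d$, giving the claimed $O(1)$ queries and $O(d)$ mistakes.

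For Part 2, the plan is to use the $T$ instances $x_1, \ldots, x_T$ (known in advance in the transductive setting) to \emph{simulate} such an ERM query through a greedy procedure. Initialize $L_0 = \emptyset$. For $t = 1, \ldots, T$, query the weak consistency oracle on $L_{t-1} \cup \{(x_t, 0)\}$; if it answers ``realizable,'' set $y_t = 0$ and $L_t = L_{t-1} \cup \{(x_t,0)\}$, otherwise set $y_t = 1$ and $L_t = L_{t-1} \cup \{(x_t,1)\}$. By induction $L_t$ is always realizable (the inductive step: $L_{t-1}$ extends to some concept in $\mathcal{C}_{f,d}$, and that concept must assign $x_t$ either $0$ or $1$), so after $T$ queries we obtain a full labeling $(y_1, \ldots, y_T)$ realized by some $c_0 \in \mathcal{C}_{f,d}$. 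Predicting $\hat y_t = y_t$ at each round then yields at most $2d$ mistakes by the same Hamming-distance argument.

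There is no real obstacle: the main step to verify cleanly is the inductive invariant in Part 2 that $L_t$ remains realizable throughout the greedy scan, which follows immediately from $\mathcal{C}_{f,d}$ being closed under disagreement at any single point (provided $d \geq 1$; the case $d = 0$ is trivial since the class is a singleton). Both parts then combine the triangle-inequality observation with the oracle-efficient method for producing one arbitrary witness in $\mathcal{C}_{f,d}$.
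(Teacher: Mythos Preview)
Your proposal is correct and follows essentially the same approach as the paper: Part 1 is identical (query ERM on the empty set, predict with the returned concept, bound mistakes by $2d$ via the Hamming triangle inequality), and Part 2 uses the same greedy one-point-at-a-time extension to simulate that ERM call with $T$ weak-consistency queries. The only cosmetic difference is that the paper factors the Part 2 simulation through a general reduction lemma (Lemma~\ref{lem:weak-consistency-concept-finite}) rather than writing out the greedy procedure inline; your explicit accumulation of $L_t$ and inductive realizability invariant is in fact the correct way to state that simulation.
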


Before proving the theorem, we comment that in the transductive online learning setting, one can make a reduction from the weak consistency oracle to ERM, achieving the same mistake bound, but with an additional $O(T)$ factor. This comes from the observation that ERM solves a search problem (it searches for a concept), while the weak consistency oracle solves a decision problem (it determines whether or not a dataset is realizable), and ERM to return a function restricted to $T$ points can be solved by running our decision problem $T$ times. This is captured via the following lemma.

\begin{lemma}[Reducing ERM Oracle to Weak Consistency Oracle]
\label{lem:weak-consistency-concept-finite}
Consider a transductive online learning algorithm that makes at most $Q$ ERM oracle calls where each call only uses points from the set $\{x_1,\ldots,x_T\}$ as inputs, only evaluates the returned concepts on these points, and makes at most $M$ mistakes. Then there exists an algorithm that uses at most $TQ$ weak consistency oracle calls and makes at most $M$ mistakes.
\end{lemma}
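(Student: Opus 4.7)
The plan is to simulate each ERM oracle call by a sequence of weak consistency calls, exploiting the fact that the algorithm only ever inspects the returned concept on the $T$ transductive points $x_1,\ldots,x_T$. Since the weak consistency oracle decides realizability of any labeled subset, we can use it to greedily discover, one coordinate at a time, a labeling of $\{x_1,\ldots,x_T\}$ that is consistent with the ERM query's input set $S$. Any concept agreeing with this labeling is a valid ERM answer as far as the algorithm can tell.

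First I would describe the simulator. Given an ERM query with input $S \subseteq \{x_1,\ldots,x_T\}\times\{0,1\}$, call the weak consistency oracle on $S$ itself. If the answer is ``not realizable,'' return ``not realizable'' to the algorithm. Otherwise, initialize $S_0 := S$ and, for $i=1,\ldots,T$ in turn, query the weak consistency oracle on $S_{i-1}\cup\{(x_i,0)\}$; if the answer is ``realizable,'' set $\hat y_i := 0$ and $S_i := S_{i-1}\cup\{(x_i,0)\}$, otherwise set $\hat y_i := 1$ and $S_i := S_{i-1}\cup\{(x_i,1)\}$. Return to the algorithm a function $\hat f$ defined (only) on $\{x_1,\ldots,x_T\}$ by $\hat f(x_i) = \hat y_i$.

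Next I would verify correctness. An easy induction shows that each $S_i$ is realizable: $S_0=S$ is realizable by assumption, and at step $i$, since $S_{i-1}$ is realizable, at least one of $S_{i-1}\cup\{(x_i,0)\}$ or $S_{i-1}\cup\{(x_i,1)\}$ must also be realizable, and the branching rule selects a realizable extension. Hence $S_T$ is realizable, so there exists $c\in\mathcal{C}$ with $c(x_i)=\hat y_i$ for all $i$ and $c$ consistent with $S$; this $c$ is a legitimate ERM response. Because the underlying algorithm, by hypothesis, only evaluates the returned concept on $\{x_1,\ldots,x_T\}$, its behavior under the simulator is identical to its behavior when the ERM oracle returns $c$. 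Therefore the simulated algorithm produces the same predictions as the original and makes at most $M$ mistakes.

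Finally, the query count is immediate: each ERM call costs at most $1+T \le 2T$ weak consistency calls (and in fact $T$ once we absorb the initial realizability check into the $i=1$ step, since $S_0$ being realizable is equivalent to at least one of its two one-point extensions being realizable). Summed over $Q$ ERM calls, this gives at most $TQ$ weak consistency calls. There is no genuine obstacle here; the only subtle point is noting that the hypothesis ``returned concepts are only evaluated on $\{x_1,\ldots,x_T\}$'' is exactly what lets us get away with producing a partial labeling rather than a full concept in $\mathcal{C}$.
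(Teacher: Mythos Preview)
Your proposal is correct and takes essentially the same approach as the paper: simulate each ERM call by using weak consistency queries to produce a realizable labeling of $\{x_1,\ldots,x_T\}$ extending the query set $S$, which suffices because the algorithm only evaluates returned concepts on these $T$ points. Your cumulative extension (maintaining a growing realizable $S_i$ and arguing by induction) is in fact the more careful version of this idea; the paper queries each remaining coordinate independently against the original $S$, whereas your construction is the one that transparently guarantees the final full labeling is globally realizable by some $c\in\mathcal{C}$.
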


\begin{proof}
We simulate each ERM call using at most $T$ weak consistency calls. For each ERM query on a set $S = \{(x_{i_1}, y_{i_1}), \ldots, (x_{i_k}, y_{i_k})\}$ with $k \leq T$ and all $x_{i_j} \in \{x_1,\ldots,x_T\}$, we construct a concept $c$ as follows:

First, we check if $S$ is realizable by making a weak consistency call on $S$. If $S$ is not realizable, we return "not realizable" as the ERM oracle would.

If $S$ is realizable, we construct a concept $c \in \mathcal{C}$ consistent with $S$ by determining the values of $c$ on each point in $\{x_1,\ldots,x_T\}$:
\begin{itemize}
    \item For points already in $S$, we set $c(x_{i_j}) = y_{i_j}$ for all $j \in [k]$.
    \item For each $x_j \in \{x_1,\ldots,x_T\} \setminus \{x_{i_1}, \ldots, x_{i_k}\}$, we query the weak consistency oracle with $S \cup \{(x_j, 1)\}$. If the result is true, we set $c(x_j) = 1$; otherwise, we set $c(x_j) = 0$ (as there must exist a concept in $\mathcal{C}$ consistent with $S$ that assigns 0 to $x_j$).
\end{itemize}

This procedure uses at most $T$ weak consistency calls per ERM query (one initial call to check realizability, and at most $T-k$ additional calls for the remaining points). Since the original algorithm makes at most $Q$ ERM calls, the total number of weak consistency calls is at most $TQ$.

Furthermore, since we accurately simulate each ERM call by returning a concept consistent with the query set whenever such a concept exists, and the original algorithm only evaluates the returned concepts on points in $\{x_1,\ldots,x_T\}$, the mistake bound $M$ of the original algorithm is preserved.
\end{proof}

Now we prove the theorem.

\begin{proof}[of Theorem~\ref{thm:transductive-upper-bound-hamming}]
For the ERM oracle bound, the algorithm begins by querying the oracle with the empty set to obtain a function $c \in \mathcal{C}_{f,d}$. By definition, both $c$ and the target concept are at most $d$ away from the center function $f$, so by the triangle inequality, they can disagree on at most $2d$ points. The algorithm simply uses $c$ to make all its predictions, resulting in at most $2d$ mistakes with just a single ERM oracle query.

For the weak consistency oracle bound, we apply Lemma~\ref{lem:weak-consistency-concept-finite}. Since our ERM-based algorithm only evaluates the returned concept on the points $x_1, \ldots, x_T$ and makes $O(1)$ ERM calls, we can simulate it using $O(T)$ weak consistency oracle calls while maintaining the same mistake bound of $O(d)$.
\end{proof}

Furthermore, it is possible to get the optimal transductive mistake bound, if one relaxes the number of queries needed.

\begin{theorem}[$d$-Hamming Balls, Optimal Mistake Bound]
Consider transductive online learning with $\mathcal{F}_d$, the family of $d$-Hamming balls. Then, optimal mistake bounds can be achieved using at most $2^{d+1}$ queries using the ERM oracle.
\end{theorem}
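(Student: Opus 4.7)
The plan is to use all $2^{d+1}$ ERM queries as a preprocessing step before any labels are revealed, exploiting the fact that the instance sequence $x_1,\ldots,x_T$ is known in advance in the transductive setting, in order to recover the center function $f$ on every instance. Once $f(x_t)$ is known for each $t$, the online phase is trivial: predict $\hat{y}_t = f(x_t)$. Since $c^* \in \mathcal{C}_{f,d}$ satisfies $d_H(c^*, f) \leq d$, the number of mistakes is
\[
|\{t : c^*(x_t) \neq f(x_t)\}| \;\leq\; d_H(c^*, f) \;\leq\; d,
\]
which matches the Littlestone dimension of $\mathcal{C}_{f,d}$ and is therefore optimal.

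The degenerate case $T \leq d$ can be dispatched by predicting arbitrarily with zero queries, incurring at most $T \leq d$ mistakes. Assume $T \geq d+1$. Fix $S = \{x_1,\ldots,x_{d+1}\}$ and $\tilde{S} = \{x_{d+2},\ldots,x_T\}$, and for every labeling $b \in \{0,1\}^{d+1}$ query the ERM oracle on $\{(x_j, b_j) : j \in [d+1]\}$. A labeling $b$ is realizable iff there is some $c \in \mathcal{C}_{f,d}$ with $c|_S = b$, which is equivalent to $d_H(b, f|_S) \leq d$ (the witness being the extension $c|_S = b,\; c|_{\tilde{S}} = f|_{\tilde{S}}$). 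Since $|S| = d+1$, exactly one $b$ is unrealizable, namely the bitwise complement of $f|_S$. Detecting which of the $2^{d+1}$ queries returned ``not realizable'' thus recovers $f|_S$.

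For the second ingredient, consider any realizable query $b$ with $d_H(b, f|_S) = d$; there are $\binom{d+1}{d} = d+1$ such queries. Any response $c \in \mathcal{C}_{f,d}$ must satisfy
\[
d \;=\; d_H(c|_S, f|_S) \;\leq\; d_H(c, f) \;\leq\; d,
\]
so $d_H(c|_{\tilde{S}}, f|_{\tilde{S}}) = 0$, meaning $c$ is forced to equal $f$ on $\tilde{S}$; the adversary has no freedom here. Reading off $c|_{\tilde{S}}$ from any such response recovers $f|_{\tilde{S}}$, and combining with the previous step we know $f$ on all $T$ instances after exactly $2^{d+1}$ queries.

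The main subtlety, and the only nontrivial step, is the ``forced response on $\tilde{S}$'' argument: it relies on calibrating $|S|$ to be exactly one more than the Hamming budget, so that the unique all-flipped labeling is unrealizable (identifying $f|_S$) while the $d+1$ labelings with exactly $d$ flips already saturate the budget inside $S$ and leave the oracle no slack to alter $f$ anywhere on $\tilde{S}$. Once this observation is recorded, optimality of the mistake bound and the exact $2^{d+1}$ query count follow immediately, with zero further queries during the online phase.
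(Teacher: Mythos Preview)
Your proof is correct and follows essentially the same two-step approach as the paper: query all $2^{d+1}$ labelings of the first $d+1$ instances to identify the unique unrealizable one (recovering $f|_S$), then exploit a budget-saturating query to force the oracle to reveal $f$ on the remaining instances. Your version is slightly tidier in two respects: you reuse one of the already-issued $2^{d+1}$ queries (a labeling of $S$ with exactly $d$ flips) for the second step, whereas the paper issues a fresh query on only $d$ points; and you finish by simply predicting $f$ (giving $\leq d$ mistakes directly), whereas the paper appeals to ``standard transductive online learning'' once the class is known. Both routes yield the optimal bound $d$.
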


\begin{proof}
Consider $x_1, x_2, \ldots, x_{d+1}$, and let the unknown concept class be $\mathcal{C}_{f,d}$ There is exactly one non-realizable labeling of these $d+1$ points (since every other labeling disagrees with $f$ by at most $d$ points). Then, one can recover $f(x_1), f(x_2), \ldots, f(x_{d+1})$ via flipping the labels in the realizable labeling. Perform an ERM query with inputs $\lrset{(x_1,1-f(x_1)),\ldots(x_d,1-f(x_d))}$. The oracle will output the value of $f$ for $x_{d+1}, \ldots, x_T$, allowing the learner to gain knowledge of the concept class, from which they can perform standard transductive online learning.
\end{proof}

\end{document}